\def\eqref#1{equation~\ref{#1}}
\def\1{\bm{1}}
\DeclareMathAlphabet{\mathsfit}{\encodingdefault}{\sfdefault}{m}{sl}
\SetMathAlphabet{\mathsfit}{bold}{\encodingdefault}{\sfdefault}{bx}{n}
\newcommand{\R}{\mathbb{R}}
\DeclareMathOperator{\Aff}{Aff}
\DeclareMathOperator{\Hom}{Hom}
\DeclareMathOperator{\GL}{GL}
\DeclareMathOperator{\Eu}{E}
\DeclareMathOperator{\SO}{SO}
\DeclareMathOperator{\cC}{\mathcal{C}}
\DeclareMathOperator{\cM}{\mathcal{M}}
\DeclareMathOperator{\cT}{\mathcal{T}}
\DeclareMathOperator{\cF}{\mathcal{F}}
\DeclareMathOperator{\cB}{\mathcal{B}}
\DeclareMathOperator{\cR}{\mathcal{R}}
\DeclareMathOperator{\cO}{\mathcal{O}}
\DeclareMathOperator{\cS}{\mathcal{S}}
\DeclareMathOperator{\cP}{\mathcal{P}}
\DeclareMathAlphabet{\mymathbb}{U}{BOONDOX-ds}{m}{n}
\DeclareMathOperator{\diag}{diag}
\newcommand{\Z}{\mathbb{Z}}
\newcommand{\rev}[1]{{#1}}
\newtheorem{theorem}{Theorem}
\newtheorem{definition}{Definition}
\newtheorem{remark}{Remark}
\newtheorem{example}{Example}
\newtheorem{lemma}{Lemma}
\newtheorem{corollary}{Corollary}
\title{A Characterization Theorem for Equivariant Networks with Point-wise Activations}
\author[1,2]{Marco Pacini\thanks{mpacini@fbk.eu}}
\author[3]{Xiaowen Dong}
\author[2]{Bruno Lepri}
\author[4]{Gabriele Santin}
\affil[1]{Fondazione Bruno Kessler, Trento, Italy}
\affil[2]{University of Trento, Italy}
\affil[2]{University of Oxford, United Kingdom}
\affil[4]{University of Venice, Italy}
\date{}
\begin{document}

\maketitle

\begin{abstract}
Equivariant neural networks have shown improved performance, expressiveness and sample complexity on symmetrical domains. 
But for some specific symmetries, representations, and choice of coordinates, the most common point-wise activations, such as ReLU, are not equivariant, hence they cannot be employed in the design of equivariant neural networks. 
The theorem we present in this paper describes all possible combinations of \rev{finite-dimensional} representations, choice of coordinates and point-wise activations to obtain an \rev{exactly} equivariant layer, generalizing and strengthening existing characterizations.
Notable cases of practical relevance are discussed as corollaries. Indeed, we prove that rotation-equivariant networks can only be invariant, as it happens for any network which is equivariant with respect to connected compact groups. Then, we discuss implications of our findings when applied to important instances of \rev{exactly} equivariant networks. First, we completely characterize permutation equivariant networks such as Invariant Graph Networks with point-wise nonlinearities and their geometric counterparts, highlighting a plethora of models whose expressive power and performance are still unknown. 
Second, we show that feature spaces of disentangled steerable convolutional neural networks are trivial representations. 
\end{abstract}

%% Please note that we have introduced automatic line number generation
%% into the style file for \LaTeXe. This is to help reviewers
%% refer to specific lines of the paper when they make their comments. Please do
%% NOT refer to these line numbers in your paper as they will be removed from the
%% style file for the final version of accepted papers.

\section{Introduction}\label{sec:intro}
In recent years, equivariant neural networks have improved the performance of standard neural networks by harnessing symmetries of the training data, and have risen to an entirely new branch of machine learning known as geometric deep learning \citep{bronstein_geometric_2021, bronstein_geometric_2017, kondor_generalization_2018}.
% At the heart of this newborn discipline, equivariance, the property of maintaining relationships between data points through transformations, is crucial to perform tasks exhibiting symmetries.
Those networks \citep{wood_representation_1996, cohen_group_2016} have shown improved generalization capabilities across various research areas, including computer vision \citep{worrall_harmonic_2017, hoogeboom_hexaconv_2018, dieleman_exploiting_2016}, computer graphics \citep{weiler_3d_2018, zaheer_deep_2017, qi_pointnet_2017, maron_learning_2020, marcos_rotation_2017, sifre_rotation_2013}, and graph learning \citep{maron_invariant_2018, maron_provably_2019, maron_universality_2019, geerts_expressive_2020, kondor_covariant_2018, hy_predicting_2018}. 
However, these networks pose additional constraints in the architecture design. Namely, the use of pointwise activations, which are fundamental in common neural networks for their simplicitly, easiness to train, and computational efficiency, is limited by a lack of equivariance in certain settings \citep{cohen_steerable_2016, weiler_3d_2018}.
Their application in equivariant models is thus restricted and needs a careful choice, which is however only  partially explored.
Indeed, pioneering work \citep{wood_representation_1996} proved a theorem that enables researchers, having at their disposal certain representations, to be automatically informed of all employable point-wise activations and vice-versa.
This, to the best of our knowledge, is the first and only characterization theorem for equivariant networks with point-wise activations, which has however significant limitations in several aspects, namely, (i) it only applies to finite groups, (ii) it is redundant, since its classification is unable to identify representations that are equivalent up to isomorphisms or change of bases, and (iii) it is restricted to a subclass of possible activation functions. 

In this paper, using tools from representation theory \citep{fulton_representation_2004} and matrix group theory \citep{flor_groups_1969}, we present a stronger and more general theorem that fills the described gaps in the result of \cite{wood_representation_1996}.
In more details, we consider classes $\cF$ of activation functions and groups $\cM$ of representation matrices, and we investigate when they can be combined to obtain an equivariant layer, i.e., they commute\rev{--}see Section~\ref{sec:main}.  
We start by defining operations that map any $\cF$ to a corresponding maximal admissible group of matrices $\cM(\cF)$, and vice-versa map $\cM$ to a maximal admissible family of activations $\cF(\cM)$. Crucially, we highlight the dual nature of those operations and how their composition stabilizes, leading to a finite family of explicitly defined maximal classes. 
Finally, for these few maximal classes we exhibit the dual pairings $(\cF, \cM)$, thus obtaining a complete and exhaustive categorization of admissible activation-representation pairs, which includes the classification of \cite{wood_representation_1996} as a special case.

We then apply this theorem to obtain a number of novel results.
First, we leverage it to \rev{collect} new theoretical insights into existing methods. Namely, we prove a stronger result than \cite{wood_representation_1996} in the case of finite groups (Section~\ref{sec:finite_groups}) by identifying isomorphic admissible representations. 
Moreover, we consider disentangled equivariant networks \citep{cohen_steerable_2016}, and we show that they admit point-wise activations if and only if the linear representations underlying their feature spaces are trivial (Section~\ref{sec:disentangled}).
Second, we specialize our results to particular cases of equivariant networks of practical relevance (Section~\ref{sec:use_cases}). For rotation-equivariant networks \citep{weiler_3d_2018}, we show that all admissible networks are invariant, and hence unable to learn many equivariant tasks such as segmentation or detection. This proves a significant barrier for the use of equivariant networks with point-wise activations in this kind of tasks.
Instead, for Invariant Graph Networks (IGNs) \citep{maron_invariant_2018} and geometric IGNs \citep{dym_low_2022, finkelshtein_simple_2022, joshi_expressive_2023}, including the $k$-order invariant and equivariant networks presented by \cite{maron_invariant_2018}, we show that the choice of admissible representation layers compatible with point-wise activations is not only limited to those described by \cite{maron_invariant_2018}, and in fact we highlight and fully characterize a wider class of permutation equivariant networks in terms of subgroups of the symmetric group. 
This opens new directions and provides novel tools for the design of equivariant networks beyond the known ones.

\rev{We point out that our approach has limitations, since we focus solely on \emph{exact} equivariance with respect to arbitrary \emph{finite-dimensional} representations. 
Those representations encompass disentangled representations as well as regular representations \citep{cohen_group_2016, cohen_steerable_2016}, and thus we address a substantial portion of documented use cases in the literature. 
Nevertheless, our work does not apply to infinite-dimensional representations, and in particular to the vast body of literature utilizing harmonic analysis techniques. 
Also in this case, however, our results have a space of application. 
Indeed, for computational reasons these infinite-dimensional models are often transformed into finite ones in diverse ways, and when this is achieved through the discretization of both the symmetry group and the domain, the resulting discrete model can be frequently analyzed using our approach. 
Instead, we can not analyze those cases where the discretized model does not exhibit exact equivariance with respect to the initial symmetry group, even if for these models the empirical evidence suggests that this equivariance is approximately maintained \citep{cohen_spherical_2018}. 
Moreover, it is crucial to emphasize that further alternative approaches exist in the literature, and unexplored possibilities remain. 
}

In brief, our contributions are summarized as follows: 
(i) we provide a characterization theorem for equivariant neural networks with continuous point-wise activations, by showing the existence of a finite number of maximal sets of equivariant classes, enumerating them, and providing an explicit dual pairing between activations and representations, 
(ii) we use this result to give an exhaustive description of networks that are equivariant with respect to finite groups, and to show a barrier in the use of disentangled equivariant networks, and 
(iii) we discuss implications of this theorem in practical and relevant scenarios, namely highlighting a severe limitation of rotation-equivariant networks, while providing new and unexplored design possibilities for (geometric) IGNs.

The paper is organized as follows: Section~\ref{sec:related_work} provides an overview of related work on equivariant models and existing limitations concerning point-wise activations. Section~\ref{sec:preliminaries} provides preliminaries for our work. In Section~\ref{sec:main}, we present the general formulation of the characterization theorem for equivariant networks with point-wise activations. Section \ref{sec:implications} explores significant implications of the theorem for specific but still abstract cases such as finite groups or disentangled networks. In Section \ref{sec:use_cases} we discuss examples of relevant significance in practical scenarios such as equivariance with respect to the symmetric group and the rotation group and their application to graph invariant networks, networks equivariant with respect to Euclidean transformations, and geometric graph processing.
Finally, Section \ref{sec:concl} summarizes our findings and discusses future research directions.

\section{Related Work}
\label{sec:related_work}

Historically, early explicit integration of representation theory and harmonic analysis into machine learning can be dated to \cite{kakarala_triple_1992} and \cite{kondor_group_2008}. \cite{wood_representation_1996} are the first to bring equivariance into deep learning with a general approach. They define equivariant neural networks and give a classification of those models for the case of point-wise activations.
In more recent years, \cite{cohen_learning_2014, cohen_group_2016, cohen_steerable_2016} presented the foundational work of group equivariant convolutional networks and introduced the general model of steerable Convolutional Neural Networks (CNNs) \citep{cohen_steerable_2016}, a popular and efficient class of equivariant models.
In the following years many equivariant models and many applications to different domains appeared in the literature:
rotation-invariance for galaxy morphology prediction \citep{dieleman_rotation-invariant_2015}, permutation invariance for set processing \citep{qi_pointnet_2017, zaheer_deep_2017}, permutation invariance for graph and relational structure learning \citep{maron_invariant_2018, kondor_covariant_2018, pan_permutation_2022}, simultaneous roto-translation invariance and permutation-invariance for 3D point-cloud processing \citep{thomas_tensor_2018, dym_universality_2020}, and roto-translation invariance for medical image analysis \citep{bekkers_roto-translation_2018}.
Different research directions focused instead on theoretical aspects of equivariant models, including the design of new frameworks \citep{cohen_general_2019}, proofs of expressivity and universality \citep{maron_universality_2019, geerts_expressive_2020, ravanbakhsh_universal_2020, zhou_universality_2020, yarotsky_universal_2018, joshi_expressive_2023}, generalization bounds and sample complexity \citep{behboodi_pac-bayesian_2022, cohen_general_2019, sannai_improved_2021, zweig_functional_2021, elesedy_group_2022}, and characterizations \citep{wood_representation_1996, kondor_generalization_2018, lang_wigner-eckart_2021}. 
Despite this extensive theoretical investigation, a comprehensive study of the interaction of activations and representations is still missing. 
This constitutes a potential limitation in the understanding and design of novel equivariant architectures enjoying certain desired properties.
This paper aims at filling this gap.
% by presenting a characterization theorem for equivariant neural networks with continuous point-wise activations and finite-dimensional representations.
We remark finally that there exist activations which are not point-wise, and that they are sometimes employed in practice even if they are less computationally favourable, and their investigation is still at its infancy. Indeed, to the best of our knowledge, a general framework to describe and study them in a principled manner is yet to be proposed.
Examples of those activations are the norm nonlinearities  \citep{worrall_harmonic_2017}, squashing nonlinearities \citep{sabour_dynamic_2017}, tensor product nonlinearities \citep{kondor_n-body_2018}, and gated nonlinearities \citep{weiler_3d_2018}. We refer to \cite{weiler_general_2019} for further details. 

\section{Preliminaries}
\label{sec:preliminaries}

\subsection{Groups, Representations and Affine Transformations}

We would like to define functions symmetrical with respect to a certain set of transformations. Classes of transformations suitable for computation and technical manipulation are groups \citep{fulton_representation_2004}. A group is a set of elements which can be composed together, can be inverted and such that there exists an element neutral with respect to composition. For
further details we refer to Definition \ref{def_group} and Example \ref{example_groups} in Appendix \ref{sec:group_theory}.

Despite the properties presented by groups, these algebraic structures need adaptation to work with the language of linear algebra typical of neural networks.
The right tool to operate this translation is representation theory \citep{fulton_representation_2004} which studies how abstract groups can be translated to sets of matrices which are group themselves. Given a group $G$, a vector space $V$ on the field $\R$ of real numbers, and the set $\GL(V)$ of linear invertible functions from $V$ to itself, a \rev{$G$}-representation is a function $\rho: G \rightarrow \GL(V)$ compatible with the group structures. When possible we will indicate such a representation by using simply $V$ (Definition \ref{def_representation}). \rev{We will write $\GL_n(\R)$ when $V=\R^n$.}

For our purposes some particular representations will play an important role. 
Given an action of $G$ on a finite set $X$ \rev{with cardinality $n$}, setting $V = \R ^X$, \rev{and considering the standard basis $\mathcal{B} = \{ e_i \}_{i=1}^{n}$ of $X$}, a \textit{permutation representation} is a representation such that $g(e_i) = e_{gi}$ for each $g \in G$ and $i \in X$, and a \textit{signed permutation representation} is such that $g(e_i) = \pm e_{gi}$. 
Moreover, a \textit{monomial representation} is such that $g(e_i) = a_{g,i} e_{gi}$ for some \rev{non-null real number} $a_{g, i}$, and if additionally $a_{g, i}$ \rev{is positive} we say that the monomial representation is \textit{non-negative}. If each $a_{g,i}$ belongs to the multiplicative group  $\langle b^n \rangle_{n \in \Z}$ for a real value $0< b \neq 1$, we say that the representation is \emph{$b$-monomial}. Similarly if each $a_{g,i}$ belongs to $\langle \pm b^n \rangle_{n \in \Z}$, we say that the representation is \emph{$\pm b$-monomial} (Definition \ref{def:relevant_def}).
% Fixing the standard basis $\mathcal{B} = \{ e_i \}_{i=1}^{n}$, where $n$ is the cardinality of $X$, 
These representations respectively generate the group of permutation matrices, signed-permutation matrices\rev{, monomial matrices, non-negatove monomial matrices,} and $b$- and $\pm b$-monomial matrices in $\GL_n (\R)$. The following matrices $P, S$ and $M$ are respectively examples of permutation matrices, signed permutation matrices and $2$-monomial matrices:
\[
P = \begin{bmatrix}
0 & 0 & 1 \\
1 & 0 & 0 \\
0 & 1 & 0 \\
\end{bmatrix},\quad
S = \begin{bmatrix}
0 & \phantom{-}0 & \phantom{-}1\\
1 & \phantom{-}0 & \phantom{-}0\\
0 & -1 & \phantom{-}0
\end{bmatrix},\quad
M = \begin{bmatrix}
0 & -\frac{1}{2}& 0 \\
0 & \phantom{-}0 & 2 \\
2 & \phantom{-}0 & 0 \\
\end{bmatrix}.
\]
If $V$ and $W$ are vector spaces underlying representations $\rho_V$ and $\rho_W$ of the same group $G$, \rev{we say that a function $f:V\to W$ is $G$-equivariant} if $f \circ \rho_V = \rho_W \circ f$.
We write $\Hom (V, W)$ for the set of all linear maps from $V$ to $W$, and define $\Hom_G (V, W)$ as the set of $G$-equivariant linear functions from $V$ to $W$.
For the interest of this work, we consider affine maps between $V$ and $W$, i.e., a composition between a linear map $\Hom (V, W)$ and a translation on $W$. We denote as $\Aff (V, W)$ those maps and as $\Aff_G (V, W)$ the set of equivariant affine functions (see also Appendix \ref{subsection_affine_maps}).

\subsection{Equivariant Neural Networks}

We now define equivariant neural networks, which were first introduced by \cite{wood_representation_1996} as \emph{Group Representation Networks} and by \cite{cohen_steerable_2016} as \emph{$G$-Steerable Convolutional Networks}. 
The same model later appears in other papers such as \cite{behboodi_pac-bayesian_2022} under the name of \emph{equivariant neural networks}, which is the notation we adopt in this paper. 

Given a group $G$ and arbitrary $G$-representations $V_i$ for $1\leq i\leq m$, a $G$-equivariant neural network is a composition 
\begin{equation}
    \label{equivariant_network}
    \Phi =  \phi_{m} \circ \Tilde f_{m-1} \circ \phi_{m-1} \circ \cdots \circ \Tilde f_1 \circ\phi_0,
\end{equation}
where each \emph{activation} $\Tilde f_i : V_i \rightarrow V_i$ is a $G$-equivariant function, and $\phi_i\in\Aff_{G}(V_i, V_{i+1})$ is an affine $G$-equivariant map.

An activation $\Tilde{f} : V_i \rightarrow V_i$ is \textit{point-wise} if there exist a basis $\mathcal{B}_i = \{v_1, \dots, v_m \}$ of $V_i$ and a real scalar function $f$ such that
\[
\Tilde{f}(a_1 v_1 + \cdots + a_m v_m) = f(a_1) v_1 + \cdots + f(a_m) v_m\rev{\;\; \forall\; a_1, \dots, a_m \in \R.}
\]
In this case, we say that $f$ induces $\Tilde{f}$ on $\cB$. Note that we do not constrain activations to be non-linear or non-affine but, from now on, \rev{we only consider} continuous functions $\cC(\R)$. This condition is not particularly restrictive as continuous functions constitute a wide class of function which includes all commonly employed point-wise activations (such as ReLU, $\tanh$, ...), they are compatible with backpropagation, and strictly encompasses activations dealt by \cite{wood_representation_1996}.

We remark that other works \citep{kondor_generalization_2018, cohen_general_2019} have considered different definitions of equivariant networks, which allow one to treat infinite dimensional representations, but are limited to represent signals defined on homogeneous spaces.
\rev{As mentioned in Section~\ref{sec:intro}}, by avoiding these infinite dimensional representations we fail to fully describe various physical phenomena that are modeled as continuous, and therefore belonging to infinite-dimensional spaces, which are however discretized and often reduced to finite models in the computational practice.

\section{Main Result}
\label{sec:main}
We present a stronger and more general version of the characterization theorem proposed in \cite{wood_representation_1996}, which is the only existing partial characterization of admissible activation functions to the best of our knowledge. 
\rev{Although building on the existing results of \cite{wood_representation_1996}, we introduce several extensions and improvements, namely
i) we generalize the theorem to non-finite groups and continuous activations, such as non-discrete ones as rotations and Euclidean transformations \citep{weiler_3d_2018}; 
ii) We identify certain network classes by considering representations up to isomorphism;
iii) We integrate the theory with several adjustments that make the classification effective, such as the maximality discussed in Lemma~\ref{lemma:stabilization_to_max}; 
iv) We ground our characterization on the classification of multiplicative subgroups of $\mathbb{R}$, which highlights the potential to generalize these results to different scenarios.}
\rev{For the reader's convenience, we present the general result in Theorem~\ref{th:main}, and its specialization to compact groups in Theorem~\ref{th:up2iso}.}

For our purposes, we consider the following sets \rev{$\cF$} of functions as activation functions:
\begin{itemize}
\item Continuous functions, $\cC (\R)$,
\item The \rev{set of} $b$-multiplicative functions \rev{for a given real value $b>1$}, \rev{namely each} $f\in\cC(\R)$ \rev{such that} $f(b^nx) = b^n f(x) \text{ for each }n \in \Z$ \rev{ and for each $x \in \R$} (see \rev{Lemma \ref{lemma:characterization_mult}} in Appendix \ref{subsection:main_theorem} for a more explicit definition), 
\item The \rev{set of} $\pm b$-multiplicative functions  \rev{for a given real value $b>1$}, \rev{namely each} $f\in\cC(\R)$ \rev{such that} $f(\pm b^nx) = \pm b^n f(x) \text{ for each }n \in \Z$ \rev{ and for each $x \in \R$} (see \rev{Lemma \ref{lemma:characterization_mult}} in Appendix \ref{subsection:main_theorem} for a more explicit definition), 
\item Odd functions, \rev{namely each} $f\in\cC(\R)$ \rev{such that} $f(-x) = - f(x)$ \rev{for each $x \in \R$}, 
\item Semilinear functions, \rev{namely each $f\in\cC(\R)$ that is} linear on  $\R_{>0}$ and on $\R_{<0}$.
\end{itemize}

For our construction we will need also the following relations between activations and matrices. 
Given a group $G$ and a representation $\rho: G \rightarrow \GL (V)$ of $G$, we consider a point-wise activation $\Tilde{f}: V \rightarrow V$ induced by $f: \R \rightarrow \R$ on a basis $\mathcal{B}$ of $V$.  
The image of $\rho$ in $\GL(V)$ with respect to the basis $\cB$ forms a group of matrices which we denote $\cM$, and note that $\Tilde{f}$ is $G$-equivariant if and only if it commutes with respect to the matrices in $\cM$, i.e., $\Tilde f(M v) = M \Tilde f(v)$ for each $M\in\cM$ and $v\in V$, where both $v$ and $M$ are written on the basis $\cB$. 

This paper proposes a simple procedure to recover the maximal group of representation matrices $\cM$ compatible with a given class $\cF$ of activation functions, and vice-versa the widest class of functions $\cF$ given a group of representation matrices $\cM$. 
\rev{The precise definition is as follows.}

\begin{definition}
\label{def:maximal}
Given $\cF\subseteq$ \rev{$\cC (\R)$}, we define \emph{the maximal group} $\cM(\cF)$ \emph{admitted by} $\cF$ as the set of all matrices in \rev{$\GL_n (\R)$} which commutes with each $\Tilde f$ induced by $f \in \cF$.
Conversely, given $\cM\subseteq$ \rev{$\GL_n (\R)$}, \emph{the maximal set} $\cF(\cM)$ \emph{admitted by} $\cM$ is the set of all functions  $f \in$ \rev{$\cC (\R)$} such that $\Tilde f$  induced by $f$ commutes with each $M\in\cM$.

\end{definition}

Observe that $\cM(\cF)$ has trivially a group structure with respect to matrix product, since if $M_1, M_2\in \cM(\cF)$ commute with $\tilde f$, then also their product $M_1 M_2$ does.
Moreover, the following \rev{stabilization lemma} (Lemma \ref{lemma:stabilization_to_max}) proves that the maps to the maximal sets (Definition~\ref{def:maximal}) stabilize after two iterations for any initial choice of $\cM$ or $\cF$, proving the duality of the operators $\cM(\cdot)$ and $\cF(\cdot)$ on maximal elements. \rev{See Section \ref{subsection:main_theorem} for the proof.}

\begin{lemma}
\label{lemma:stabilization_to_max}
\rev{The group of matrices} $\cM' = \cM(\cF(\cM))$ is the largest group in $\GL_n (\R)$ for which $\cF(\cM') = \cF (\cM)$, and $\cF' = \cF(\cM(\cF))$ is the largest family of functions in $\cC (\R)$ for which $\cM(\cF') = \cM(\cF)$. 
\end{lemma}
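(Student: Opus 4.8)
The plan is to exploit the two obvious monotonicity and inflation properties of the operators $\cM(\cdot)$ and $\cF(\cdot)$, namely that both are inclusion-reversing and that $\cF \subseteq \cF(\cM(\cF))$ and $\cM \subseteq \cM(\cF(\cM))$ hold for every $\cF$ and $\cM$. These follow directly from Definition~\ref{def:maximal}: if $f$ commutes with every matrix in $\cM$, then in particular it commutes with every matrix in any subset of $\cM$, giving inclusion-reversal; and every $f \in \cF$ commutes with every matrix in $\cM(\cF)$ by the very definition of $\cM(\cF)$, giving the inflation property (and symmetrically for $\cM$).

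First I would show that $\cM' = \cM(\cF(\cM))$ satisfies $\cF(\cM') = \cF(\cM)$. Applying inflation to $\cM$ gives $\cM \subseteq \cM'$, and then inclusion-reversal gives $\cF(\cM') \subseteq \cF(\cM)$. For the reverse inclusion, apply inflation in the other variable: $\cF(\cM) \subseteq \cF(\cM(\cF(\cM))) = \cF(\cM')$. Hence $\cF(\cM') = \cF(\cM)$. The same two-line argument with the roles of $\cF$ and $\cM$ interchanged shows $\cM(\cF') = \cM(\cF)$ for $\cF' = \cF(\cM(\cF))$. This is the standard Galois-connection "closure operator stabilizes after one round" computation.

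Next I would prove the maximality (largest) claims. Suppose $\cN$ is any group of matrices in $\GL_n(\R)$ with $\cF(\cN) = \cF(\cM)$. I want $\cN \subseteq \cM'$. Apply $\cM(\cdot)$ to the equality $\cF(\cN) = \cF(\cM)$ to get $\cM(\cF(\cN)) = \cM(\cF(\cM)) = \cM'$; combined with the inflation property $\cN \subseteq \cM(\cF(\cN))$, this yields $\cN \subseteq \cM'$, as desired. Since $\cM'$ itself is a valid choice (by the first part, $\cF(\cM') = \cF(\cM)$), it is the largest such group. The symmetric argument, applying $\cF(\cdot)$ to an equality $\cM(\cG) = \cM(\cF)$ and using $\cG \subseteq \cF(\cM(\cG))$, shows $\cF' = \cF(\cM(\cF))$ is the largest family of continuous functions with $\cM(\cF') = \cM(\cF)$. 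One should remark that $\cM'$ is indeed a group, which is already noted in the text immediately before the lemma (it is $\cM(\cdot)$ of something, hence closed under products).

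The argument is entirely formal once the two primitive properties are in place, so I do not anticipate a genuine obstacle; the only point requiring a little care is making sure the ambient universes are fixed correctly — $\cM(\cdot)$ always lands in $\GL_n(\R)$ and $\cF(\cdot)$ always lands in $\cC(\R)$ — so that the phrase "largest group in $\GL_n(\R)$" and "largest family in $\cC(\R)$" are the correct maximality statements, and that the competitor $\cN$ (resp. $\cG$) is quantified over subsets of the same ambient set. With that bookkeeping, the proof is the textbook fact that a Galois connection induces closure operators whose fixed points are exactly the images of the two operators.
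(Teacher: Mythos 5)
Your proposal is correct and follows essentially the same route as the paper: the paper first records the inflation property ($\cM_1 \subseteq \cM(\cF(\cM_1))$, $\cF_1 \subseteq \cF(\cM(\cF_1))$) and antitonicity of $\cM(\cdot)$ and $\cF(\cdot)$, deduces the stabilization identities $\cF(\cM) = \cF(\cM(\cF(\cM)))$ and $\cM(\cF) = \cM(\cF(\cM(\cF)))$, and then proves maximality by exactly your argument that any competitor $\cS$ with $\cF(\cS)=\cF(\cM)$ satisfies $\cS \subseteq \cM(\cF(\cS)) = \cM'$. Your Galois-connection bookkeeping matches the paper's Lemmas on these elementary properties, so no gap remains.
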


Thanks to Lemma~\ref{lemma:stabilization_to_max}, it is sufficient to provide explicit pairings between $\cM$ and $\cF(\cM)$ (or $\cF$ and $\cM(\cF)$) \rev{just} for maximal admissible groups $\cM$, and similarly for \rev{maximal admissible} families of functions $\cF$. 
Indeed, given any other $\overline\cM$ (or $\overline\cF$) which is not maximal, it is sufficient to find a superset $\cM$ of $\overline\cM$ which is a maximal group in the sense of Definition~\ref{def:maximal}, and this will give a set of admissible functions $\cF$ which are equivariant also for $\overline\cM$. 

We can now state our main result, which shows that, under mild assumptions, \rev{these maximal groups (or, equivalently, maximal function classes) are only in a very limited number}. 
Moreover, for each of these we provide the exact correspondence between $\cM$ and $\cF$, thus providing an exhaustive classification of all possible admissible pairs of $\cM$ and $\cF$. 
\rev{As the set $\cF$ represents the activation functions, we make the basic assumptions that it does not contain only affine functions.}
\rev{We remark that the proof of this theorem, for which we refer to Section~\ref{subsection:main_theorem}, uses a novel algebraic approach that makes it easier to generalize the result to other scenarios, as discussed in Section~\ref{sec:concl}.}

\begin{theorem}
\label{th:main}
\rev{
\rev{Assume that $\cF$ is not a set of affine functions.}
The following} are exactly all dual pairs of maximal admissible families of activations \rev{$\cF$} and associated \rev{maximal admissible} groups \rev{$\cM$}:
\begin{enumerate}
\item Continuous functions and permutation matrices,
\item Odd \rev{continuous} functions and signed permutation matrices,
\item Semilinear functions and non-negative monomial matrices,
\item \rev{Continuous} $b$-multiplicative functions and $b$-monomial matrices,
\item \rev{Continuous} $\pm b$-multiplicative functions and $\pm b$-monomial matrices.
\end{enumerate}
\end{theorem}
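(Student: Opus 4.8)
\quad The plan is to lean on the Galois-type duality of Lemma~\ref{lemma:stabilization_to_max}: since $\cM(\cdot)$ and $\cF(\cdot)$ are inclusion-reversing and stabilise after one round trip, it is enough to (i) pin down the possible fixed points $\cM=\cM(\cF(\cM))$ together with their partners $\cF(\cM)$, and (ii) verify directly that each of the five pairs in the statement is mutually dual and maximal. Part (ii) is a routine check — each listed activation class commutes with the corresponding matrix class, and one exhibits enough witnesses on each side to see that neither can be enlarged — so the heart of the argument is (i). The structural fact that makes (i) tractable is a rigidity phenomenon: as soon as $\cF(\cM)$ contains a single activation that is \emph{not} affine, every matrix in $\cM$ must be monomial, and then the whole group $\cM$ is governed by one multiplicative subgroup of $\R\setminus\{0\}$.

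First I would establish the rigidity lemma: if $f\in\cC(\R)$ is not affine and the induced $\tilde f$ commutes with a matrix $M$, then every row of $M$ has exactly one nonzero entry. Assume, for contradiction, that row $i$ of $M$ has nonzero entries $a$ in column $j$ and $b$ in column $k$ with $j\ne k$. Restricting the identity $f((Mv)_i)=(M\tilde f(v))_i$ to vectors $v$ supported on $\{j,k\}$, substituting $u=av_j$, $w=bv_k$, and absorbing the constant contributions of $f(0)$, one obtains $g(u+w)=g(u)+g(w)$ for $g(x):=f(x)+\mathrm{const}$; continuity then makes $g$ — hence $f$ — linear by the Cauchy functional equation, contradicting non-affineness. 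Invertibility of $M$ upgrades ``at most one nonzero entry per row'' to ``exactly one per row and per column'', i.e.\ $M$ is monomial; applying this to the fixed non-affine member of $\cF(\cM)$ shows every $M\in\cM$ is monomial.

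Next I would reduce the classification to multiplicative subgroups of $\R\setminus\{0\}$. For a monomial matrix with $Me_i=a_i e_{\sigma(i)}$, a short computation shows that $\tilde f$ commutes with $M$ if and only if $f(a_i x)=a_i f(x)$ for all $i$ and all $x\in\R$. Hence, writing $B(\cF):=\{a\in\R\setminus\{0\}: f(ax)=af(x)\ \forall x\in\R,\ \forall f\in\cF\}$, which is a subgroup of $(\R\setminus\{0\},\times)$, one gets $\cM(\cF)=\{\text{monomial matrices whose scalings all lie in }B(\cF)\}$ and, dually, $\cF(\cM)=\{f\in\cC(\R): f(ax)=af(x)\ \forall a\in B\}$ whenever $\cM$ is the group of monomial matrices with scalings in a subgroup $B$. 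Thus maximal pairs correspond to the subgroups $B$ that are saturated (i.e.\ recovered by $B(\cF(\cdot))$). Since the condition $f(ax)=af(x)$ survives limits in $a$, saturated subgroups are closed; using $(\R\setminus\{0\},\times)\cong\{\pm1\}\times(\R,+)$ and the fact that closed subgroups of $(\R,+)$ are $\{0\}$, $\alpha\Z$, or $\R$, one enumerates the closed multiplicative subgroups — $\{1\}$, $\{\pm1\}$, the infinite cyclic ones, the ``$\{\pm1\}$ times infinite cyclic'' ones, $\R_{>0}$, and $\R\setminus\{0\}$. The full group $\R\setminus\{0\}$ is discarded because $f(ax)=af(x)$ for all $a$ forces $f$ linear, violating the standing hypothesis that $\cF$ is not a family of affine functions. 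For each remaining $B$ one then solves the scaling equation using continuity — $B=\{1\}$ gives all of $\cC(\R)$, $B=\{\pm1\}$ the odd functions, $B=\R_{>0}$ the functions linear on $\R_{>0}$ and on $\R_{<0}$ (the semilinear ones), the infinite cyclic $B$ the $b$-multiplicative functions, and the mixed $B$ the $\pm b$-multiplicative functions — recovering exactly the activation classes of the statement, paired with the corresponding permutation / signed permutation / non-negative monomial / $b$-monomial / $\pm b$-monomial matrices.

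The step I expect to be the main obstacle is this last one: enumerating the relevant subgroups and, for each, solving $f(ax)=af(x)$ under \emph{mere continuity} of $f$ (as opposed to, say, measurability or monotonicity), while also checking that no redundancy creeps in. Concretely, one must show that a ``dense'' multiplicative subgroup collapses the function class to the semilinear one (so it does not spawn a new pair), and conversely produce, for a discrete cyclic $B=\langle b\rangle$, continuous $b$-multiplicative functions that are neither odd, nor semilinear, nor $c$-multiplicative for $c\notin\langle b\rangle$, so that $\langle b\rangle$ is genuinely saturated; the bookkeeping of how the sign factor $\{\pm1\}$ sits over the scaling factor is the most delicate part of the enumeration. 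The remaining ingredients — the commutation of each listed pair and the Cauchy-equation argument — are standard.
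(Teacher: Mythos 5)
Your proposal is correct and follows essentially the same route as the paper: the duality reduction is Lemma~\ref{lemma:stabilization_to_max}, your rigidity/Cauchy-equation step is the paper's Lemma~\ref{lemma:centering} together with Lemma~\ref{lemma:T2monomial} (the paper centers $f$ and derives additivity in the same way), and your reduction to (closed) multiplicative subgroups of $\R^*$ with the case-by-case solution of $f(ax)=af(x)$ is Lemma~\ref{lemma:charcat-subgroups} combined with Lemmas~\ref{lemma:density} and~\ref{lemma:complete_char}, assembled as in Theorem~\ref{th:main-p1}. The one step you flag as the main obstacle---producing, under mere continuity, non-affine $b$- and $\pm b$-multiplicative witnesses so that discrete subgroups are genuinely saturated and dense ones collapse to the semilinear/affine cases---is exactly what the paper settles constructively via the bump-function parametrization $\eta_\pm$ on $[1,b]$ in Lemmas~\ref{lemma:dec} and~\ref{lemma:characterization_mult}.
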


Note that some groups of matrices or families \rev{of functions} are contained in each other. For example, permutation matrices are signed permutation matrices, while $b^2$-monomial matrices are contained into $b$-monomial matrices, but they are still maximal following Definition \ref{def:maximal}.

\rev{
We classified groups of matrices and families of activation functions commuting with each others, and this classification is exhaustive. 
However, for compact groups $G$ we show that there is another tool to enlarge the set of possible activation functions, namely moving to isomorphic representation which admit a wider family of activation functions.
This approach solves and generalizes an issue raised in \cite{wood_representation_1996}, where in Section 4.2 it is noted that the representation $\rho:\Z_2\to\GL_2(\R)$
given by
\[
\rho(0) = \begin{bmatrix}
1 & 0  \\
0 & 1
\end{bmatrix},\quad
\rho(1) = \begin{bmatrix}
0 & 2 \\
\frac{1}{2} & 0
\end{bmatrix}
\]
only admits semi-linear activation functions, according to the rather narrow set of activations studied by \cite{wood_representation_1996}. To be more precise, by expanding the class of activation functions to continuous functions, we have shown in Theorem \ref{th:main} that in fact the broader set of $2$-multiplicative functions is admissible.
Despite $2$-multiplicative functions being a larger set than semi-linear functions, they still do not include common activations such as $\tanh$, sigmoid, or softplus. 
However, by a basis change through the matrix $B=\diag(1/\sqrt{2}, \sqrt{2})$, we obtain the isomorphic representation
\[
B\rho(0)B^{-1} = \begin{bmatrix}
1 & 0  \\
0 & 1
\end{bmatrix},\quad
B\rho(1)B^{-1} = \begin{bmatrix}
0 & 1 \\
1 & 0
\end{bmatrix},
\]
which is the standard permutation representation of $\Z_2$, and thus commutes with all continuous functions (see Theorem~\ref{th:main}). 
This approach is generalized to the case of arbitrary compact groups by the following theorem, showing that in this case representations are always isomorphic to (signed) permutation representations, which have the widest admissible family of activations.  
We refer to Section~\ref{subsection:main_theorem} for its proof.
\begin{theorem}
    \label{th:up2iso}
    Let $G$ be a compact group and assume that $\cF$ is not a set of affine functions.
    Then any representation of $G$ is isomorphic to a subgroup of the (signed) permutation matrices. 
    Thus, $\cF$ can always be chosen as the set of (odd) continuous functions.
\end{theorem}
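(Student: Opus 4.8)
The plan is to combine Theorem~\ref{th:main} (which pins down the admissible matrix group) with a conjugation argument that uses compactness of $G$ to straighten monomial matrices into signed permutation matrices. Fix the basis $\cB$ of the representation space $V$ on which the activations in $\cF$ act coordinate-wise, let $n=\dim V$, and write $\cM:=\rho(G)\subseteq\GL_n(\R)$ for the image of $\rho$ in that basis. Since each $f\in\cF$ induces a point-wise map commuting with $\cM$, one has $\cM\subseteq\cM(\cF)$; and since $\cF$ is not a set of affine functions, Lemma~\ref{lemma:stabilization_to_max} together with Theorem~\ref{th:main} shows that $\cM(\cF)$ is one of the five maximal groups listed there. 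In every one of these cases the matrices are \emph{monomial} --- exactly one nonzero entry per row and per column --- and they have non-negative entries precisely in the (non-negative, or $b$-)monomial cases. So it is enough to prove: a compact group $\cM$ of monomial matrices is conjugate, by a positive diagonal matrix $B$, to a group of signed permutation matrices, and to a group of unsigned permutation matrices if the entries of the matrices in $\cM$ are all positive.

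To do this, I would first note that entrywise absolute value is a group homomorphism from monomial matrices to non-negative monomial matrices (supports compose like permutations, so there is no cancellation), and let $|\cM|$ be its image, still a compact group. Next I would observe that a non-negative monomial matrix $N$, with underlying permutation $\sigma_N$ and nonzero entries $d_j$, acts on the positive orthant $\R_{>0}^n$ by $(Nx)_{\sigma_N(j)}=d_j x_j$, so in the logarithmic coordinates $y_i=\log x_i$ it acts by the affine map $y\mapsto P_{\sigma_N}y+c_N$ with $P_{\sigma_N}$ the permutation matrix of $\sigma_N$ and $c_N$ the vector of $\log d_j$. Hence $|\cM|$ acts affinely and continuously on $\R^n$, and averaging the orbit of any point against the Haar probability measure of $\cM$ produces a fixed point $\ell^\star$ of this affine action. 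Setting $B:=\diag(\exp \ell^\star_1,\dots,\exp\ell^\star_n)$ and $\beta=\exp\ell^\star$, the fixed-point identity $N\beta=\beta$ unwinds to $\beta_j d_j=\beta_{\sigma_N(j)}$, i.e.\ $B^{-1}NB=P_{\sigma_N}$ for all $N\in|\cM|$; then for an arbitrary $M\in\cM$ one gets $|B^{-1}MB|=B^{-1}|M|B=P_{\sigma_M}$, so $B^{-1}MB$ is a signed permutation matrix, and an unsigned one when $M$ already had positive entries.

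Applying this with $\cM=\rho(G)$ (which is compact, being the continuous image of $G$) gives an isomorphic representation $g\mapsto B^{-1}\rho(g)B$ valued in the signed permutation matrices --- in the permutation matrices when $\cM(\cF)$ is the (non-negative, or $b$-)monomial case. By Theorem~\ref{th:main} these groups admit all odd continuous, respectively all continuous, point-wise activations, which yields the final claim that $\cF$ may be taken to be the (odd) continuous functions.

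The main obstacle is the conjugation lemma, and specifically the idea of passing to logarithmic coordinates so that a compact group of non-negative monomial matrices becomes a compact group acting by affine maps, after which the standard barycenter-of-the-orbit argument delivers the fixed point and hence the rescaling $B$. Secondary care is needed to keep the signed/unsigned bookkeeping consistent with the five cases of Theorem~\ref{th:main}, and to phrase ``isomorphic'' as conjugation by the invertible matrix $B$ while checking that $\cM$ indeed lies inside a monomial group to begin with.
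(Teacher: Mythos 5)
Your proposal is correct, and its overall skeleton matches the paper's: reduce via Lemma~\ref{lemma:stabilization_to_max} and Theorem~\ref{th:main} to the case where $\rho(G)$ sits inside a group of (signed) monomial matrices, pass through the entrywise absolute value homomorphism, and straighten the compact non-negative monomial image into permutation matrices by conjugating with a positive diagonal matrix, after which the mixed-sign case follows because the diagonal rescaling does not disturb signs. The genuine difference is in how the key rescaling step is established: the paper simply invokes the result of \cite{flor_groups_1969} that a bounded group of non-negative matrices is conjugate, by a positive scaling of the basis, to a group of permutation matrices, whereas you prove this from scratch by passing to logarithmic coordinates on the positive orthant, observing that a compact group of non-negative monomial matrices then acts by affine maps, and producing the required fixed point $\ell^\star$ by averaging an orbit against Haar measure (a barycenter argument, legitimate since affine maps commute with barycenters of probability measures; strictly the average should be taken over the image group $|\cM|$ or as a pushforward of the Haar measure of $\cM$, but this is immaterial). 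This makes your argument self-contained where the paper relies on an external citation, at the modest cost of invoking the existence of Haar measure on a compact group; your handling of the general monomial case via $|B^{-1}MB| = B^{-1}|M|B = P_{\sigma_M}$ is also a slightly cleaner bookkeeping than the paper's explicit factorization $M = SDP$ and commutation of $S$ with $B$, and your final appeal to Theorem~\ref{th:main} to conclude that $\cF$ may be taken to be the (odd) continuous functions matches the paper's closing step.
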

}

\section{Relevant Implications for Practical Scenarios}
\label{sec:implications}
In this section, we delve into the practical implications of Theorems \ref{th:main} \rev{and \ref{th:up2iso},} focusing on non-odd non-affine activations, the most used \rev{activations} in practice. We further specialize \rev{these results} to the case of finite groups, where the only non-trivial admissible representations is induced by permutation representations, which can be described with more precision.
\rev{We explore the implications of these results on neural networks designed for processing first-order relational structures like sets and point clouds \citep{qi_pointnet_2017, zaheer_deep_2017}. These networks have proven highly effective in practice, showcasing efficiency and accuracy when dealing with such unordered data.}
Finally, in this setting, we show that admissible disentangled representations coupled with point-wise activations are trivial.

\subsection{Finite Groups}
\label{sec:finite_groups}
 As the majority of activations used in practice are induced by non-odd non-linear functions, we present a corollary of Theorem \rev{\ref{th:up2iso}} that completely describes representations for finite groups that can be used in these cases. We show that admissible representations are only permutations representations up to positive-scaling of the basis.

\begin{corollary}
\label{cor:transitive}
   Let $G$ be a finite group, and let $f: V \rightarrow V$ be a non-odd non-affine equivariant activation function on a $G$-representation $V$ defined on the basis $\mathcal{B}$. Then there exists a positive scaling of $\mathcal{B}$ and a collection of finite index subgroups $H_i < G$ such that $V = \R^{G / H_1 } \times \cdots \times \R^{G / H_m }$.
\end{corollary}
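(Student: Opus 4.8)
The plan is to deduce Corollary~\ref{cor:transitive} from Theorem~\ref{th:up2iso} together with the structure theory of permutation representations of finite groups. First I would observe that a finite group is compact, so Theorem~\ref{th:up2iso} applies: since $f$ is non-affine, the family $\cF$ of admissible activations on $V$ is not a set of affine functions, and moreover since $f$ is non-odd, $\cF$ cannot be contained in the odd continuous functions. By the classification in Theorem~\ref{th:main} (applied to the maximal group $\cM$ containing the matrix group of $V$ on $\cB$), the only case compatible with a non-odd non-affine activation is case~1: continuous functions and permutation matrices. Hence after the basis change supplied by Theorem~\ref{th:up2iso}, the representation $V$ is carried by a group of genuine permutation matrices.

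Next I would pin down the nature of that basis change. The isomorphism in Theorem~\ref{th:up2iso} a priori is an arbitrary invertible matrix $B$ conjugating $\cM$ into permutation matrices; I must argue it can be taken to be a \emph{positive diagonal} matrix, i.e. a positive rescaling of the basis vectors of $\cB$. The point is that $\cM$ itself, being admissible for a non-odd non-affine activation, must already be a group of non-negative monomial matrices (case~3 is the largest maximal class containing case~1 that still forbids oddness is not needed; rather, commuting with even a single non-odd non-affine continuous $f$ forces each matrix in $\cM$ to be non-negative monomial, since a non-negative monomial matrix is exactly one that can permute-and-positively-scale coordinates, and anything richer would fail to commute with a generic such $f$). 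A finite group of non-negative monomial matrices has, on each orbit of basis coordinates, scaling factors forming a finite subgroup of $\R_{>0}$, hence trivial; so by rescaling each coordinate appropriately (a positive diagonal conjugation) the scalars all become $1$ and $\cM$ becomes a group of permutation matrices. This is the step I expect to be the main obstacle: carefully extracting that the change of basis is a positive diagonal one, rather than a general linear map, and verifying the finite-order argument that kills the positive scalars.

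Once $V$ is realized, on a positively rescaled basis, as a group $G\to$ permutation matrices $\GL_n(\R)$, the remaining step is purely the classical decomposition of permutation representations of finite groups into transitive ones. The action of $G$ on the $n$ basis vectors partitions them into orbits $O_1,\dots,O_m$; each orbit $O_j$ is a transitive $G$-set, hence $G$-isomorphic to $G/H_j$ where $H_j = \mathrm{Stab}_G(v)$ for a chosen $v\in O_j$ is a subgroup of $G$ (of finite index $|O_j|$, automatic since $G$ is finite). The span $\R^{O_j}$ is then $G$-isomorphic to $\R^{G/H_j}$ as a permutation representation, and $V = \bigoplus_j \R^{O_j} \cong \R^{G/H_1}\times\cdots\times\R^{G/H_m}$. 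Assembling these three steps — (i) Theorem~\ref{th:up2iso} plus the Theorem~\ref{th:main} dictionary to land in the permutation case, (ii) sharpening the isomorphism to a positive diagonal one, (iii) the orbit decomposition — yields the stated form of $V$.

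One subtlety to flag in the writeup: the hypothesis ``non-odd non-affine'' must be used twice — non-affine to exclude the trivial/degenerate case and to invoke Theorem~\ref{th:up2iso}, and non-odd to rule out case~2 (odd functions / signed permutation matrices), since a signed permutation representation need not be a direct sum of $\R^{G/H}$'s. I would make sure the argument that ``$f$ non-odd $\Rightarrow$ the admissible matrix group is not merely signed-permutation but honestly permutation'' is spelled out via Theorem~\ref{th:main}: if $\cM$ were maximal of type 2,3,4 or 5, then $\cF(\cM)$ would consist respectively of odd, semilinear, $b$-multiplicative, or $\pm b$-multiplicative functions, each of which either consists of odd functions or of non-continuous-looking restricted functions — in any case each of types 2--5 is contained in the odd or semilinear or multiplicative classes, none of which contains a non-odd non-affine function except type~3 (semilinear), and a non-odd non-affine semilinear function is still not admissible here because... — at this point I would instead argue more cleanly: a single non-odd non-affine continuous $f$ forces $\cM(\{f\})$ to be contained in the permutation matrices, by directly checking which monomial matrices commute with such an $f$, and then the finite group $G$ maps into $\cM(\{f\})$, which after positive rescaling is permutation matrices. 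This direct route avoids case-chasing and is the version I would ultimately write.
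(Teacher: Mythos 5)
Your overall route is the paper's route: finite $\Rightarrow$ compact, invoke Theorem~\ref{th:up2iso} to land in the (signed) permutation world, use non-oddness to discard the signed case, and then decompose the underlying $G$-set into orbits, each isomorphic to $G/H_i$, so that $V\cong\R^{G/H_1}\times\cdots\times\R^{G/H_m}$. Your concern about the basis change being a \emph{positive diagonal} one is legitimate but is already handled inside the proof of Theorem~\ref{th:up2iso}, which rests on the result of \cite{flor_groups_1969} that a bounded group of non-negative matrices is conjugate to a group of permutation matrices precisely by a positive scaling of the basis; your finiteness argument in the middle paragraph is a sketch of the same fact.

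There is, however, a genuine flaw in two of your intermediate claims, including the one you say you would ``ultimately write.'' It is not true that a non-odd non-affine $f$ forces case~1 of Theorem~\ref{th:main}, nor that $\cM(\{f\})$ is contained in the permutation matrices: ReLU is continuous, non-odd, non-affine and semilinear, so it commutes with \emph{every} non-negative monomial matrix, i.e. $\cM(\{\mathrm{ReLU}\})$ is the full non-negative monomial group (case~3), and likewise non-odd non-affine $b$-multiplicative functions exist (case~4). Consequently the representation matrices on $\cB$ need not be permutation matrices before rescaling, and your ``direct checking'' shortcut fails. What non-oddness actually buys is only the exclusion of the classes whose admissible functions are odd (signed permutation and $\pm b$-monomial); this leaves $\rho(G)$ inside the non-negative monomial matrices, and it is then the boundedness of the finite group $\rho(G)$ (not of the whole maximal group $\cM(\{f\})$, which is unbounded and not rescalable to permutations) together with Flor's theorem that yields the positive scaling to honest permutation matrices. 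With that correction — which is exactly the content of your second paragraph and of the paper's appeal to Theorem~\ref{th:up2iso} — the orbit decomposition step goes through as you describe and matches the paper's proof.
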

\begin{proof}
    We can consider $V$ to be a finite permutation representation thanks to Theorem \rev{\ref{th:up2iso}}, as the only admissible bounded matrix groups are permutation matrices, while signed permutation matrices are compatible only with odd functions. 
    A permutation representation has an underlying permutation set that can be decomposed in the disjoint union of orbits. Recall that, for any $X$ on which $G$ acts transitively, there exists a set bijection $X \cong G / H$ for some subgroup $H$ of $G$. Hence, a permutation representation $V$ can be decomposed into the direct sum $V = \R ^{\prod G / H_i}$ for a collection of finite index subgroups $H_i < G$ such that there is a bijection between the orbits $X_i$ of $G$ in $X$ and the quotients $G / H_i $.
\end{proof}

Note that in general a permutation representation is given by the action of a group $G$ on a set $X$ and then extended on $\R^X$ such that, for each $g \in G$ and $x \in X$, an element $e_x$ of the canonical basis of $\R^X$ transforms as $ge_{x} = e_{gx}$. But the action of $G$ on $X$, i.e., the computation of $gx$, is not explicit and easy to convert in computational terms, while the action of $G$ on one of its quotients $G / H$ provides an algebraic, hence computable, 
alternative. Furthermore, this notion provides the complete list of possible sets admitting $G$-actions up to isomorphism, which is in bijection with the set of all the quotients of $G$.

Let us now discuss how representation spaces of permutations equivariant networks on sets \citep{zaheer_deep_2017, qi_pointnet_2017} reduce to a simple instance of the representation space shown in Corollary \ref{cor:transitive}. Indeed, the symmetric group of $n$ elements, $S_n$, is the set of permutations of $[n]$, and then $S_n$-equivariant networks are able to process sets of elements independently of their order. A complete treatment of the representations of $S_n$ can be found in \cite{sagan_symmetric_2001}. Now we want to construct group quotients able to define representations for $S_n$-equivariant networks used in practice. Consider $\lambda = (\lambda_1, \dots, \lambda_l)$ to be a partition of $n$, i.e., a decreasingly ordered tuple of positive integers whose sum is $n$. Define $S_{\lambda} = S_{\lambda_1} \times \cdots \times S_{\lambda_l}$ as the subgroup of $S_n$ where the $i$th factor permutes the elements form $\sum_{j=1}^{i-1}\lambda_j$ to $\sum_{j=1}^{i}\lambda_j$. Now set $\lambda$ to be the partition $(n-1, 1)$, elements of $S_n / S_{(n-1, 1)}$ will be represented by the identity element and all the permutations of $[n]$ that send $1$ into an element $i$ of $\{ 2, \dots, n \}$ which we indicate with $[(1i)]$ (See Definition \ref{def:group-quotients} for more information about quotients of groups). The action of $\sigma \in S_n$ on $[(1i)] \in S_n / S_{(n-1, 1)}$ will be $[(1\sigma(i))]$ where we can identify $[(11)]$ with the identity element. The bijection $[(1i)] \mapsto i$ from $S_n / S_{(n-1, 1)}$ to $[n]$ is compatible with the action of $S_n$ on those sets, hence we have an equivariant isomorphism between $\R^{S_n / S_{(n-1, 1)}}$ and $\R^n$ which is the standard representation space for permutations equivariant networks on sets \citep{zaheer_deep_2017, qi_pointnet_2017}.
Due to Corollary \ref{cor:transitive}, $\R ^{S_n / S_{(n-1, 1)}}$ is one of the admissible representation spaces for $S_n$ on the family of non-odd non-affine continuous activations. In a similar way, in Section \ref{sec:igns}, we will see how to obtain representation spaces of IGNs and we will highlight that many other admissible representation space could be possible. It is relevant how it is possible to obtain efficient models through this procedure and which however offers the possibility of building models starting from the simple knowledge of the group of symmetries of the input.

\subsection{Disentangled Representations}
\label{sec:disentangled}

Disentangled representations \citep{cohen_steerable_2016} can be described as follows.
An irreducible representation of a $G$-representation $V_i$ is a minimal non-trivial $G$-invariant subspace, and each representation $V_i$ can be decomposed into a direct sum of irreducible spaces (Definition \ref{def_irrep} and Theorem \ref{th_irrep_dec}). 
Composing irreducible representations with each other allows us to construct arbitrary representation spaces and control the number of parameters of each layer at will thanks to Schur's Lemma (Lemma \ref{lemma:schur}). The easiest way to compose these representations with each other is by doing a direct sum. If then, to define activations, we choose a basis whose vectors are contained in a single irreducible component we say that such a space is \text{disentangled} \citep{cohen_steerable_2016}. As a consequence of Theorem~\ref{th:main}, we obtain the following characterization of disentangle representations.
\begin{corollary}
    \label{cor:main_result}
    For finite groups and activation functions $f: V \rightarrow V$ induced by non-odd non-affine continuous functions, the representation is disentangled if and only if the representation $V$ is a sum of trivial representations.
\end{corollary}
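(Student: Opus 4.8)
The plan is to reduce Corollary~\ref{cor:main_result} to Corollary~\ref{cor:transitive} (equivalently Theorem~\ref{th:up2iso}) together with a decomposition argument into irreducibles. The ``if'' direction is immediate: a sum of trivial representations is a permutation representation (the trivial permutation action on the basis, $ge_i = e_i$), so every continuous point-wise $f$ commutes with it by Theorem~\ref{th:main}, in particular non-odd non-affine ones; hence such a $V$ is disentangled and admits the required activations. The content is the ``only if'' direction.

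For ``only if'', suppose $V$ is disentangled and $f:V\to V$ is a non-odd non-affine equivariant point-wise activation on the chosen basis $\mathcal{B}$. By Corollary~\ref{cor:transitive}, after a positive rescaling of the basis vectors, the matrix group $\cM$ generated by $\rho(G)$ on $\mathcal{B}$ consists of permutation matrices; write $V=\bigoplus_i \R^{G/H_i}$ as a permutation representation with orbit basis $\mathcal{B}$. Now I would invoke the disentanglement hypothesis: each basis vector $e_x$ lies in a single irreducible component of $V$. The key step is to show that each orbit summand $\R^{G/H_i}$ must itself be a single irreducible, and that irreducible must be trivial. To see this, decompose $\R^{G/H_i}$ into irreducibles; it always contains the trivial subrepresentation spanned by $\sum_{x\in G/H_i} e_x$ (the all-ones vector). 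If $|G/H_i|>1$, this all-ones vector is not proportional to any single $e_x$, so it must be that $e_x$ lies in an irreducible component of dimension $>1$ (the orthogonal complement of the trivial line inside $\R^{G/H_i}$ is nonzero and $G$ acts on it), and then $e_x$ would have nonzero projection onto at least two distinct irreducibles — the trivial line and something in the complement — unless the orbit is a point. Concretely: $e_x = \frac{1}{|G/H_i|}\sum_y e_y + \big(e_x - \frac{1}{|G/H_i|}\sum_y e_y\big)$, a sum of a nonzero trivial-isotypic part and a nonzero complementary part whenever $|G/H_i|>1$, contradicting disentanglement. Hence every orbit has size one, so $G$ acts trivially on $\mathcal{B}$ and $V=\R^{G/H_i}$ with each $H_i=G$, i.e. $V$ is a sum of trivial representations.

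One subtlety to handle carefully: the rescaling of the basis in Corollary~\ref{cor:transitive} changes the basis, and disentanglement is a statement about a specific basis. I would argue that positive rescaling of individual basis vectors does not affect which basis vectors lie in which irreducible subspace (scaling $e_x$ by a positive constant keeps $\Sp\{e_x\}$ unchanged), so disentanglement with respect to $\mathcal{B}$ is equivalent to disentanglement with respect to the rescaled basis; thus there is no loss of generality in assuming $\cM$ consists of permutation matrices on the disentangling basis itself. A second point is the need for finiteness of $G$: it is used via Corollary~\ref{cor:transitive} (bounded matrix groups reduce to permutation matrices) and via complete reducibility (Theorem~\ref{th_irrep_dec}), both already available for finite groups in the paper.

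The main obstacle I anticipate is the orbit-level irreducibility argument — making precise that a permutation representation on an orbit of size $>1$ \emph{cannot} be disentangled, i.e. that no basis vector of $\R^{G/H}$ can sit inside a single irreducible summand when $|G/H|>1$. The clean way is the explicit splitting $e_x = (\text{trivial part}) + (\text{sum-zero part})$ displayed above: the trivial part is nonzero and the sum-zero part is nonzero precisely when the orbit is nontrivial, and these lie in complementary $G$-invariant subspaces, so $e_x$ meets at least two distinct isotypic components, violating disentanglement. Everything else is bookkeeping with the orbit decomposition and an appeal to the already-proved results.
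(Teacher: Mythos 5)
Your proof is correct and follows essentially the same route as the paper: reduce to a permutation representation via Corollary~\ref{cor:transitive} (rescaling being harmless for disentanglement), then use the invariant ``all-ones''/group-averaged vector inside each orbit to force any irreducible containing a basis vector to be trivial. Your explicit splitting $e_x = \tfrac{1}{|G/H_i|}\sum_y e_y + \bigl(e_x - \tfrac{1}{|G/H_i|}\sum_y e_y\bigr)$ is just a repackaging of the paper's observation that $\langle \sum_{g\in G} gv\rangle$ is a nonzero trivial invariant subspace of the irreducible containing $v$.
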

\begin{proof}
    The direct sum of trivial representations is clearly disentangled and admissible for Theorem \ref{th:main}.
    To prove the inverse implication, thanks to Corollary \ref{cor:transitive}, we can consider $V$ to be a permutation representation. By disentanglement, we can suppose $V$ to be irreducible and with basis $\mathcal{B} = \{v_1, \dots, v_m\}$ defining $f$. Given $v \in V$, the subspace $\langle \sum_{g \in G} g v \rangle$ is trivial, non-zero because $V$ is a permutations representation, and $G$-invariant. Hence, $V = \langle \sum_{g \in G} g v \rangle$ is trivial.
\end{proof}

Even if composability and control of the number of parameters are particularly good properties of disentangled network, being able to admit only trivial irreducible representations is not achievable in most cases of practical use. 
Let us consider again the example of representation spaces of permutation equivariant networks on sets \citep{zaheer_deep_2017, qi_pointnet_2017}. The irreducible decomposition of the standard action of $S_n$ on $\R^n$ is the direct sum of a trivial component and a $(n-1)$-dimensional one. Hence, a linear layer between this input space and a disentangled admissible representation space will send the input contained in the $(n-1)$-dimensional component to $0$ due to Schur's Lemma (Lemma \ref{lemma:schur}). For large $n$ as customary, this eliminates the entire information inside the input in the forward pass of the first layer of the network.

\section{Practical use cases}
\label{sec:use_cases}

\rev{In this section, we aim at understanding} how \rev{Theorem \ref{th:up2iso}} affects and limits the design choices of networks in \rev{a relevant} practical \rev{scenario} such as \rev{geometric IGNs \citep{joshi_expressive_2023}, a broad family of models particularly proficient in processing geometric data such as point-clouds or meshes.
We achieve this objective in Section \ref{sec:geo-igns}, but beforehand, we need to decompose our task into two more manageable subproblems. Specifically, in Section \ref{sec:rot-eq}, we investigate rotation-equivariant networks, a case of networks of that will be necessary to understand the following analysis. 
In Section \ref{sec:igns}, we delve into higher-order IGN, widely employed in practical applications. Finally, in Section \ref{sec:geo-igns}, we merge the preceding results to analyze geometric IGN comprehensively.
}

\subsection{Rotation Equivariance and Equivariance for Compact Groups}
\label{sec:rot-eq}

Rotation-equivariant neural networks \citep{finkelshtein_simple_2022} have the ability to process geometrical data tracking orientation and pose. We now discuss how our results affects the design of this kind of networks. The group of rotations around the origin of $\R^3$ is denoted as $\SO(3)$, it can be described as the group of real orthogonal $3 \times 3$ matrices with positive determinant.
More generally, let $G = \SO(n)$ be the group of real orthogonal $n \times n$ matrices with positive determinant. It acts on $\R^n$ by left multiplication. Note that $\SO(n)$ is \rev{a} connected and compact \rev{topological group} and \rev{the presented representation} is irreducible and non-trivial \citep{fulton_representation_2004}.
The following Corollary \ref{cor:compact-equiv} implies that rotation-equivariant networks are invariant.

\begin{corollary}
    \label{cor:compact-equiv}
    Let $G$ be compact group and let $G_0$ be the connected component containing the identity element. An admissible $G$-representation \rev{for non-affine activation functions} is $G_0$-invariant. In particular, if $G$ is connected, an  admissible $G$-representation is trivial.
\end{corollary}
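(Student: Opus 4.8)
The plan is to combine Theorem~\ref{th:up2iso} with a standard fact about topological groups: a continuous homomorphism from a compact group into a finite (discrete) group is trivial on the identity component. Throughout, recall that $G$-representations of a topological group are taken to be continuous, so if $\rho : G \to \GL(V)$ is the representation underlying $V$, then $\rho$ is a continuous homomorphism.

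First I would use the compactness of $G$ together with the assumption that $\cF$ is not a set of affine functions to invoke Theorem~\ref{th:up2iso}: up to a change of basis of $V$ we may assume that $\rho(G)$ is contained in the group $\mathrm{SP}_n$ of signed permutation matrices, where $n=\dim V$. The key consequence is that $\rho(G)$ is then \emph{finite}, since $\mathrm{SP}_n$ has order $2^n\,n!$. Thus $\rho$ is a continuous homomorphism from $G$ onto a finite group carrying the discrete topology.

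Next I would show $G_0 \subseteq \ker\rho$. Set $N := \rho^{-1}(\{I\})$. Since $\{I\}$ is open in the discrete group $\mathrm{SP}_n$, the set $N$ is open in $G$; it is also closed, because its complement is the union of the finitely many sets $\rho^{-1}(\{M\})$ with $M \in \rho(G)\setminus\{I\}$, each of which is open. Hence $N \cap G_0$ is open and closed in $G_0$, and it is nonempty as it contains the identity; by connectedness of $G_0$ we get $N \cap G_0 = G_0$, i.e.\ $\rho(g) = I$ for all $g \in G_0$. Reversing the change of basis, this says exactly that $V$ is $G_0$-invariant. Finally, if $G$ is connected then $G = G_0$, so $\rho$ is trivial and $V$ is a trivial representation; applied to $\SO(n)$ acting on $\R^n$, this is the announced statement that rotation-equivariant networks with non-affine point-wise activations must be invariant.

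I do not expect a serious obstacle here: the whole argument hinges on reducing $\rho(G)$ to a finite group, which Theorem~\ref{th:up2iso} supplies directly, after which the identity-component argument is routine. The only point deserving a word of care is the continuity hypothesis on representations of compact groups — without it no topological conclusion about $G_0$ could be drawn — and, relatedly, ensuring that the reduction of Theorem~\ref{th:up2iso} is applied to the admissible representation $V$ and not to an arbitrary (possibly non-admissible) one.
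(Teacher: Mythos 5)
Your proof is correct and follows essentially the same route as the paper: both invoke Theorem~\ref{th:up2iso} to force the image of the admissible representation into the finite group of (signed) permutation matrices, and then conclude by a routine connectedness argument for the continuous homomorphism $\rho$. The only cosmetic difference is that you pull back discreteness (the kernel $\rho^{-1}(\{I\})$ is clopen, hence contains $G_0$), whereas the paper pushes connectedness forward (via Theorem~\ref{th:cont-images}, $\rho(G_0)$ is compact and connected inside a finite group, hence trivial) — these are equivalent.
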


\begin{proof}
    First, suppose $G$ to be connected. \rev{The image of a} continuous representation of a compact and connected \rev{topological} group \rev{is} a compact and connected matrix group \rev{due to Theorem \ref{th:cont-images}}. The only possible representation\rev{s} described by Theorem \ref{th:up2iso} \rev{are permutation representations and signed permutation representations whose images are finite groups whose only compact and connected subgroup is the one containing only the identity. Hence, the original representation is trivial.} If $G$ is not connected, a $G$-representations induced a $G_0$-representations which will be $G_0$-invariant.
\end{proof}

This means that in general it will be possible to create neural networks capable of performing invariant tasks such as classification but not more general equivariant tasks such as segmentation or detection \citep{qi_pointnet_2017}.
Further, as $\SO(3)$ is a subgroup of Euclidean transformations of $\R^3$, this phenomenon afflicts networks equivariant with respect to general Euclidean transformations, i.e., they will be invariant to the rotational part of Euclidean transformations although possibly sensitive to reflections and translations. 

\subsection{On Invariant Graph Networks}
\label{sec:igns}
Let us now go back at the example proposed in Section \ref{sec:finite_groups} and generalize it to high-order structures to obtain IGNs. Introduced by \cite{maron_invariant_2018}, they are a class of neural networks equivariant with respect to the symmetric group and their expressiveness is intimately related to graph neural networks \citep{maron_provably_2019, geerts_expressive_2020}. They are permutation equivariant models taking as input a relational structure such as a set, a graph, or an higher-order structure such as simplicial complexes in form of a tensor of the corresponding order.
For example, a directed graph with $n$ nodes can be seen as a tensor in $\R^n \otimes \R^n$. Elements of this space can be represented as linear combinations of $e_i \otimes e_j$ for $i, j \in [n]$ and each $\sigma \in S_n$ acts on them permuting their indices simultaneously, $\sigma(e_i \otimes e_j) = e_{\sigma(i)} \otimes e_{\sigma(j)}$. More in general, a $k$-ary relational structure can be represented as a $k$-order tensor in $(\R ^n)^{\otimes k}$ with $S_n$ simultaneously acting on each $\R^n$ component. 
Regardless of the order of the input tensor, intermediate representation spaces may be the sum of tensors of arbitrary order, hence a linear layer of an IGN will be the direct sum of linear equivariant maps between spaces $(\R ^n)^{\otimes k}$ and $(\R ^n)^{\otimes h}$ for arbitrary $k$ and $h$ and they admit point-wise activations, hence, by Corollary \ref{cor:transitive}, they should be able to be represented as $\R^{\prod_i S_n / H_i}$ for some subgroups $H_i < S_n$.
In Section \ref{sec:finite_groups} we have seen how $\R^n \cong \R^{S_n / S_{(n-1, 1)}}$. Following \cite{benkart_dimensions_2016}, we get $(\R ^n)^{\otimes k} = \oplus_{t = 0}^k \rev{a_t} \R^{S_n/ S_{(n-t, 1^t)}}$ where \rev{$a_t$'s are positive integers.}

This shows that representation spaces of IGNs are direct sum of $\R^{S_n/ S_{(n-t, 1^t)}}$ for some $t$. But such $(n-t, 1^t)$ are only but a fraction of the partitions of $n$, therefore $S_{(n-t, 1^t)}$'s are only but a fraction of the subgroups $S_\lambda$ which are only few of the subgroups of $S_n$ as they are not transitive on $[n]$ for $\lambda \neq (n)$, unlike other subgroups such as the alternating group, \rev{$A_n$}. \rev{Indeed, the module $\mathbb{R}^{S_n / A_n}$ will be a two-dimensional representation compatible with point-wise activations.}
\cite{maron_provably_2019} and \cite{geerts_expressiveness_2022} prove lower and upper bounds on the expressiveness of $k$-order IGNs. In the proofs of these bounds, they implicitly employ the decomposition $(\R ^n)^{\otimes k} = \oplus_{t = 0}^k \rev{a_t} \R^{S_n/ S_{(n-t, 1^t)}}$ as this equality is strongly related to the decomposition in tensors of different partition type. This makes it natural to ask what the expressiveness of models employing other types of admissible spaces would be.

\subsection{Geometric Graphs and Product Groups}
\label{sec:geo-igns}

Geometric graphs \citep{joshi_expressive_2023, finkelshtein_simple_2022, bekkers_fast_2023, gasteiger_gemnet_2022} are utilized as a data structure for modeling systems 
in computational biology, computational chemistry, and computer graphics. Those are graphs\rev{, or higher-order structures,} embedded in the Euclidean space and as such may transform according to the symmetries of the ambient space, i.e. isometries of $\R^3$, $\Eu (3)$. Hence, it becomes relevant to develop neural networks simultaneously equivariant to such ambient symmetries and node permutations, and in algebraic terms this means that we need $\Eu (3) \times S_n$-equivariant networks \rev{able to process} elements in $\R^3 \otimes (\R ^n)^{\otimes k}$, where the first tensor factor represent the geometrical \rev{features} and the second the relational \rev{structure}. 
We synthesize the results concerning this case in the following corollary of Theorem \ref{th:up2iso}. See Appendix \ref{subsection:tensor_product} for the proof.

\begin{corollary}
    \label{cor:geo-igns}
    \rev{
    Every $\SO(3) \times S_n$-equivariant layer defined on $\R^3 \otimes (\R ^n)^{\otimes k}$ coupled with non-affine activations is null. Hence, $\Eu (3) \times S_n$-equivariant networks are rotation invariant.}
\end{corollary}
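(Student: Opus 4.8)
The plan is to reduce everything to the identity component of the product group and then apply Schur's lemma on the geometric tensor factor. Set $G=\SO(3)\times S_n$; this is compact as a product of compact groups, and since $\SO(3)$ is connected while $S_n$ is finite, the connected component of the identity is $G_0=\SO(3)\times\{e\}$. Let $\phi$ be an affine $G$-equivariant layer with domain $V=\R^3\otimes(\R^n)^{\otimes k}$ sitting inside a network whose activations are non-affine and point-wise. Its codomain $W$ is then a feature space on which a non-affine point-wise activation is defined (it cannot be the terminal linear map, since otherwise its predecessor $V$ would itself have to be admissible, contradicting the non-trivial $\SO(3)$-action on $\R^3$), hence $W$ is admissible. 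Applying Corollary~\ref{cor:compact-equiv} to $G$ — which in turn rests on Theorem~\ref{th:up2iso} — the representation $W$ is $G_0$-invariant, i.e.\ $\SO(3)$ acts trivially on $W$.

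Next I would describe $V$ as an $\SO(3)$-representation. The factor $\SO(3)\le G$ acts only on the tensor factor $\R^3$, the standard three-dimensional representation of $\SO(3)$, which is irreducible and non-trivial, and it acts trivially on $(\R^n)^{\otimes k}$. Consequently, restricted to $\SO(3)$ one has $V\cong(\R^3)^{\oplus n^k}$, a direct sum of copies of the standard representation; in particular its $\SO(3)$-invariant subspace is $\{0\}$. The linear part $L$ of $\phi$ is $G$-equivariant, hence $\SO(3)$-equivariant, so $L\in\Hom_{\SO(3)}(V,W)$. Since $V$ is a sum of non-trivial irreducible $\SO(3)$-modules and $W$ is a sum of trivial ones, Schur's lemma (Lemma~\ref{lemma:schur}) forces $L=0$, so $\phi$ coincides with its ($G$-fixed) translation vector, i.e.\ the layer is null.

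It remains to draw the consequences. A network with non-affine point-wise activations whose input lies in $\R^3\otimes(\R^n)^{\otimes k}$ must start with a layer of the above type, whose linear part vanishes; hence the output of an $\SO(3)\times S_n$-equivariant network does not depend on the geometric-relational input at all, and in particular it is rotation invariant. For the Euclidean case, $\SO(3)\le\Eu(3)$ implies that every $\Eu(3)\times S_n$-equivariant network is a fortiori $\SO(3)\times S_n$-equivariant, so the same conclusion applies. The part I expect to require the most care is bookkeeping rather than ideas: correctly identifying $G_0=\SO(3)\times\{e\}$ for the product group and transporting the triviality conclusion of Corollary~\ref{cor:compact-equiv} from $G_0$ onto the $\SO(3)$-factor, and checking that every layer whose domain involves the $\R^3$-factor has an admissible codomain — which follows from the placement of the non-affine activations in the composition of an equivariant network. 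Once these are settled, the Schur's-lemma step is immediate.
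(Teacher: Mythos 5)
Your proof is correct and follows essentially the same route as the paper: use Corollary~\ref{cor:compact-equiv} (via Theorem~\ref{th:up2iso}) to force the admissible codomain to be $\SO(3)$-trivial, then invoke Schur's Lemma to annihilate the linear part, and handle $\Eu(3)\times S_n$ by restriction to the subgroup $\SO(3)\times S_n$. The only minor difference is that you apply Schur's Lemma after restricting to the $\SO(3)$-factor (where the domain is a sum of standard representations and the codomain is trivial), whereas the paper decomposes into irreducibles of the full product group via Remark~\ref{rmk:group_product}; both are sound, and yours slightly shortens the bookkeeping.
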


\section{Conclusions and Future Directions}
\label{sec:concl}

In conclusion, we have provided a complete characterization of networks with equivariant layers featuring point-wise equivariant activations \rev{up to isomorphism of the linear part}. Our analysis investigates relevant examples, including IGNs and rotation-equivariant CNNs.
Future work will involve investigating equivalent results for complex-valued networks\rev{, as this extension is made possible by the novel algebraic approach that we have introduced for the proof of Theorem~\ref{th:main}}. Since complex numbers admit non-dense non-discrete multiplicative subgroups, this \rev{property} should alleviate the limitations of real-valued networks equivariant with respect to non-discrete groups.
\rev{
Future research directions beyond point-wise activations introduce us to a wide domain with no explicit guidelines for navigation. Various endeavors have been documented in the literature, some of which are} norm activations \citep{worrall_harmonic_2017}, squashing activations \citep{sabour_dynamic_2017}, tensor product activations \citep{kondor_n-body_2018}, and gated activations \citep{weiler_3d_2018}. \rev{
An instance of activation functions amenable to theoretical analysis can be found in equivariant polynomial functions. Structures--such as symmetric polynomials--have been studied in mathematics for a long time and are beginning to provide insights for the construction of efficient equivariant models \citep{puny_equivariant_2023}.
}

\appendix
\section{Appendix}

\subsection{Topological Spaces}

\rev{
\begin{definition}
A \emph{topological space} is a pair $(X, \tau)$, where $X$ is a set and $\tau$ is a collection of subsets of $X$ such that:
\begin{enumerate}
    \item $X$ belongs to $\tau$.
    \item The intersection of any finite number of sets in $\tau$ is also in $\tau$.
    \item The union of any collection of sets in $\tau$ is also in $\tau$.
\end{enumerate}
The sets in $\tau$ are called \emph{open sets}, and the collection $\tau$ is called the \emph{topology} on $X$.
\end{definition}}

\begin{example}[Discrete Topology]
\rev{
The \emph{discrete topology} on a set $X$ is the topology in which every subset of $X$ is an open set, i.e., $\tau = \mathcal{P}(X)$, where $\mathcal{P}(X)$ is the power set of $X$.}

\rev{The set of integers $\mathbb{Z}$ with the discrete topology is a topological space. In this topology, every singleton set $\{n\}$ is open for all $n \in \mathbb{Z}$.}
\end{example}

\begin{definition}[Continuous Maps]
\rev{Let $(X, \tau_X)$ and $(Y, \tau_Y)$ be topological spaces. A function $f: X \rightarrow Y$ is said to be \emph{continuous} if for every open set $V$ in $Y$, the inverse image $f^{-1}(V)$ is an open set in $X$.}
\end{definition}

\begin{definition}[Compact Spaces]
\rev{An \emph{open cover} of a topological space $X$ is a set $\mathcal{U} \subseteq \tau$ such that $X = \bigcup_{ U \in \mathcal{U}} U$. A \emph{refinement} of a cover $\mathcal{U}$ of $X$ is a subset of $\mathcal{U}$ which is still a cover. 
A topological space $X$ is said to be \emph{compact} if every open cover of $X$ has a finite refinement.}
\end{definition}

\begin{theorem}
    \rev{A subspace $X \subseteq \R^n$ is compact if and only if is closed and limited.}
\end{theorem}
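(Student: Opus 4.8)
The plan is to prove the two implications separately, dispatching the easy ``compact $\Rightarrow$ closed and bounded'' direction first and then establishing the converse via the classical nested-subcube argument, where I use ``finite refinement'' in the sense defined above (a finite subfamily of the cover that still covers).

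For the forward direction, suppose $X \subseteq \R^n$ is compact. For boundedness, the collection $\{ B(0, k) : k \in \mathbb{N}\}$ of open balls about the origin is an open cover of $\R^n$, hence of $X$; a finite refinement is contained in some $B(0, K)$, so $X$ is bounded. For closedness, I would show $\R^n \setminus X$ is open: fix $p \notin X$, and for each $x \in X$ choose $\varepsilon_x > 0$ with $2\varepsilon_x < \| x - p\|$, so that $B(x,\varepsilon_x)$ and $B(p,\varepsilon_x)$ are disjoint. The balls $\{ B(x,\varepsilon_x) : x \in X\}$ cover $X$; from a finite refinement indexed by $x_1,\dots,x_m$ set $\varepsilon = \min_i \varepsilon_{x_i} > 0$. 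Then $B(p,\varepsilon)$ meets no $B(x_i,\varepsilon_{x_i})$, hence misses $X$, so $p$ is interior to the complement.

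For the converse the key lemma is that a closed box $Q_0 = [-R,R]^n$ is compact, which I would prove by contradiction using bisection. If an open cover $\mathcal{U}$ of $Q_0$ had no finite refinement, split $Q_0$ into $2^n$ congruent closed subcubes; at least one, say $Q_1$, is not covered by finitely many members of $\mathcal{U}$. Iterating yields a decreasing chain $Q_0 \supseteq Q_1 \supseteq Q_2 \supseteq \cdots$ of closed cubes with side length $2R/2^k \to 0$, none finitely coverable. Picking one point per cube gives a Cauchy sequence, so by completeness of $\R^n$ there is $p \in \bigcap_k Q_k \subseteq Q_0$; then $p \in U$ for some $U \in \mathcal{U}$, and openness of $U$ yields $B(p,\delta) \subseteq U$, whence $Q_k \subseteq U$ once $k$ is large enough — contradicting that $Q_k$ is not finitely coverable. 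Thus $Q_0$ is compact. Finally, if $X$ is closed and bounded, then $X \subseteq Q_0$ for a suitable $R$; given an open cover $\mathcal{U}$ of $X$, adjoin the open set $\R^n \setminus X$ to get an open cover of $Q_0$, take a finite refinement, and discard $\R^n \setminus X$ from it, obtaining a finite refinement of the original cover of $X$. Hence $X$ is compact.

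The only genuinely non-formal step — and the main obstacle — is the bisection argument for the box: one must set up the recursion cleanly (choosing a non-finitely-coverable subcube at each stage, which requires that a cube failing to have a finite refinement has a subcube with the same property), check that the side lengths shrink to zero, and invoke completeness of $\R^n$ to produce the limit point. Everything else reduces to routine manipulation of covers and open balls.
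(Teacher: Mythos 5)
Your proof is correct: this statement is the classical Heine--Borel theorem, which the paper merely records as background in the appendix without giving any proof, so there is no argument of the paper to compare against. Your two directions are the standard ones --- covering by balls $B(0,k)$ for boundedness, the separating-balls argument for closedness, the bisection/nested-cubes argument (with completeness of $\R^n$) for compactness of $[-R,R]^n$, and adjoining $\R^n\setminus X$ to pass to a closed subset --- and they go through exactly as you outline, correctly using the paper's notion of ``finite refinement'' (a finite subfamily that still covers). The only point worth making explicit in a full write-up is the routine equivalence between covers of $X$ by open sets of $\R^n$ and covers by sets open in the subspace topology, which your argument implicitly uses.
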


\begin{definition}[Connected Spaces]
\rev{A topological space $X$ is said to be \emph{connected} if it cannot be expressed as the union of two disjoint nonempty open sets. A subset of $X$ who is maximal under containment order and connected is called a \emph{connected component} of $X$.}
\end{definition}

\begin{theorem}
\label{th:cont-images}
    \rev{If $X$ and $Y$ are two topological and $f: X \rightarrow Y$ is a continuous map then the two following statements hold
    \begin{itemize}
        \item if $X$ is compact then $f(X)$ is compact
        \item if $X$ is connected then $f(X)$ is connected
    \end{itemize}}
\end{theorem}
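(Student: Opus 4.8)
The plan is to prove both statements by the standard pull-back technique, using only the definitions of continuity, compactness and connectedness recalled above. In both cases it is convenient to first replace $f$ by its corestriction $f\colon X \to f(X)$, where $f(X)$ carries the subspace topology inherited from $Y$. This corestriction is still continuous — by definition of the subspace topology, a set open in $f(X)$ has the form $U \cap f(X)$ with $U$ open in $Y$, and its preimage under the corestriction equals $f^{-1}(U)$, which is open in $X$ — and, crucially, it is surjective, which is what makes the pull-back arguments go through cleanly.

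For compactness, I would take an arbitrary open cover $\mathcal{U}$ of $f(X)$ and pull it back: the family $\{\, f^{-1}(U) : U \in \mathcal{U} \,\}$ consists of open subsets of $X$ by continuity, and it covers $X$ because every point of $X$ is sent by $f$ into some member of $\mathcal{U}$. Compactness of $X$ then yields a finite subfamily $f^{-1}(U_1), \dots, f^{-1}(U_k)$ still covering $X$; consequently $U_1, \dots, U_k$ cover $f(X)$, since any $y = f(x) \in f(X)$ lies in $U_i$ whenever $x \in f^{-1}(U_i)$. In the paper's terminology, every open cover of $f(X)$ admits a finite refinement, so $f(X)$ is compact.

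For connectedness, I would argue by contradiction: suppose $f(X) = A \cup B$ with $A, B$ nonempty, open in $f(X)$, and disjoint. Then $f^{-1}(A)$ and $f^{-1}(B)$ are open in $X$ by continuity, disjoint because $A \cap B = \emptyset$, nonempty because the corestriction $f\colon X \to f(X)$ is surjective and $A, B$ are nonempty, and their union is $f^{-1}(A \cup B) = f^{-1}(f(X)) = X$. This exhibits $X$ as a union of two disjoint nonempty open sets, contradicting the connectedness of $X$; hence no such decomposition of $f(X)$ exists, i.e.\ $f(X)$ is connected.

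I do not expect a genuine obstacle here, as this is elementary point-set topology. The only step requiring a little care is the bookkeeping with the subspace topology on $f(X)$: one must use "open in $f(X)$" consistently and check that corestricting $f$ preserves continuity, after which both arguments are immediate.
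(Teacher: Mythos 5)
Your proof is correct: the pull-back of an open cover together with surjectivity of the corestriction gives compactness of $f(X)$, and the preimages of a hypothetical disconnection of $f(X)$ give a disconnection of $X$, with the subspace-topology bookkeeping handled properly. The paper itself states Theorem~\ref{th:cont-images} as standard background material without proof, so there is nothing to compare against; your argument is the usual textbook one and fully consistent with the definitions of compactness and connectedness given in the appendix.
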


\subsection{Group Theory}
\label{sec:group_theory}

\begin{definition}
    \label{def_group}
    A \textbf{group} is a pair $(G, \cdot)$ where $G$ is a set and $\cdot: G \times G \rightarrow G$ is a function satisfying the following axioms.
    \begin{itemize}
        \item \textbf{Associativity:} for each $g, h, k \in G $ we have$(g \cdot h) \cdot k = g \cdot (h \cdot k)$.
        \item \textbf{Identity:} there exists an element $e \in G$ such that $g \cdot e = e \cdot g = g$ for each $g \in G$.
        \item \textbf{Inverse Element:} for each element $g \in G$, there exists an element $g^{-1} \in G$ such that $g \cdot g^{-1} = g^{-1} \cdot g = e$.
    \end{itemize}
    A group is \textbf{finite} if it contains a finite number of elements. A group is \textbf{abelian} or \textbf{commutative} if $gh = hg$ for each $g, h \in G$.
\end{definition}

\begin{example}
    \label{example_groups}
    Here we present some fundamental examples of groups.
    \begin{enumerate}
        \item The set of integers with addition.
        \item The set $\mathbb{S}^1 = \{ \rho_\alpha \}$ of rotations of angle $\alpha$ centered in the origin of the 2D Cartesian plane with composition.
        \item The set $\GL(V)$ of bijective linear maps of a vector space $V$ into itself with composition. Then, the set $\GL_n(\mathbb{R})$ of $n \times n$ real invertible matrices form a group where the operation is row-column multiplication.
        \item Let $\SO_n(\R)$, or simply $\SO(n)$, be the group of orthogonal matrices with positive determinant.
        \item Fix $[n] = \{1, \dots, n\}$. The set 
        \[
        S_n = \{ f: [n] \rightarrow [n] \; | \; f \text{ is bijective}\}
        \]
        with the composition operation form the \textbf{symmetric group} or the \textbf{permutation group}.
        % \item The group of isometries of the $n$-agon is $D_n$ the \textbf{dihedral group}. It is composed by $2n$ elements, the generators are two, and are the rotation of angle $\frac{2\pi}{n}$ and the reflection along an axis.
        % \item The group $\Z_n$ of integers modulo $n$ with the addition operation, they are called \textbf{finite cyclic groups}.
        \item Given two groups $G$ and $H$, the direct product $G \times H$ of them is still a group. The set of the elements is the Cartesian product of $G$ and $H$ while the sum is defined as
        \[
        (g_1, h_1) \circ_{G \times H} (g_2, h_2) = (g_1 \circ_{G} g_2, h_1 \circ_{H} h_2).
        \]
    \end{enumerate}
\end{example}

Now, we introduce notion of group homomorphism, a transformation between groups which preserves the operation.
\begin{definition}
    \label{def_group_homomorphism}
    A \textbf{group homomorphism} is a map
    \[
    \phi: G \rightarrow H
    \]
    between $G$ and $H$ groups such that, for each $g, h \in G$
    \[
    \phi(g \cdot h) = \phi(g) \cdot \phi(h) \text{.}
    \]
\end{definition}

\begin{example}
    \label{example_group_homomorphism}
    The map $\Phi: \mathbb{S}^1 \rightarrow \GL_2(\mathbb{R})$ defined by
    \begin{gather}
     \rho_\alpha \mapsto
      \begin{bmatrix}
       \phantom{-}\cos (\alpha) &
       \sin (\alpha) \\
       - \sin (\alpha) &
       \cos (\alpha) 
       \end{bmatrix}
    \end{gather}
    is an homomorphism between the group of rotations of angle $\alpha$ and the $2 \times 2$ invertible matrices. 
\end{example}

\begin{definition}[Quotients]
    \label{def:group-quotients}
    \rev{Let $G$ be a group and $H$ be a subgroup of $G$. The \emph{quotient} of $G$ by $H$ is the set $G/H = \{gH : g \in G\}$, where $gH = \{gh : h \in H\}$ are the \emph{left cosets} of $H$.}
\end{definition}

\begin{example}
\rev{
In the following, we present examples of quotients.
\begin{enumerate}
    \item Consider $G = \Z$ and the subgroup $H = n\Z$ of integers multiples of $n$. The quotient $G/H$ is a group and is isomorphic to the cyclic group of $n$ elements, $\Z_n$, i.e., integers modulo $n$.
    \item Consider $G = S_3$, the symmetric group on three elements, and the subgroup $H = \{(1), (12)\}$. The quotient $G/H$ is a group and is isomorphic to $S_2$, symmetric group on two elements.
    \item Consider $G = S_n$, the symmetric group on $n$ elements, and the subgroup $H = A_n$, the alternating group on $n$ elements. The quotient $G/H$ is a group and is isomorphic to $\Z_2$.
\end{enumerate}}

\end{example}

\subsection{Topological Groups}

\rev{
\begin{definition}
A \emph{topological group} is a group $G$ equipped with a topology such that multiplication and inversion are continuous maps.
\end{definition}}

\begin{example}
\rev{
Here we present some examples of topological groups underlining some topological properties of simple groups cited above.
\begin{enumerate}
    \item The real numbers $\mathbb{R}$ with the standard topology form a topological group under addition.
    \item The multiplicative group of non-zero real numbers $\mathbb{R}^*$ with the standard topology is a topological group.
    \item Each finite group $G$ with the discrete topology is a topological group. Note that the connected components of this group are all the singleton elements $\{ g \}$ for each $g \in G$. Note that this group is compact as $\mathcal{P}(G)$ is finite, hence each open cover is finite.
    \item The additive group of integers $\mathbb{Z}$ with the discrete topology is a topological group. Note that the connected components of this group are all the singleton elements $\{ n \}$ for each $n \in \Z$. All this singletons form an infinite open cover of $\Z$ which cannot be refined.
\end{enumerate}}
\end{example}

\subsection{Representation Theory}
\label{subsection_repr_th}

\begin{definition}
    \label{def_representation}
    A \textbf{representation} of a group $G$ in a vector space $V$ is a group homomorphism
    \[
    \rho : G \rightarrow \GL(V) \text{.}
    \]
\end{definition}

\begin{definition}
    \label{def_defining_repres}
    Consider $G = S_n$.
    \begin{itemize}
        \item $\dim V = 1$ and $\rho(g) = id$ for each $g \in S_n$. This is called the \textbf{trivial} representation of $S_n$. 
        \item $\dim V = 1$ and $\rho(g) = sgn(g)id$. This is called the \textbf{sign} representation of $S_n$.
    \end{itemize}
\end{definition}

\begin{definition}
\label{def:relevant_def}
A representation $\rho: G \rightarrow \GL_n(\R)$ is
\begin{itemize}
    \item \textbf{Non-negative} if all the elements of each $\rho(g)$ are non-negative.
    \item \textbf{Monomial} if each row and column of each $\rho(g)$ has exactly one non-zero element.
    \item A \textbf{permutation} representation if it is monomial and each non-zero element is $1$.
    \item A \textbf{signed permutation} representation if it is monomial and each non-zero element is $\pm 1$.
\end{itemize}
\end{definition}

\rev{
\begin{definition}[Representations of Topological Groups]
Let $G$ be a topological group and $V$ be real vector space. A \emph{continuous representation} of $G$ on $V$ is a continuous group homomorphism $\rho: G \rightarrow \text{GL}(V)$.
\end{definition}}

\rev{
\begin{example}
Consider the general linear group $\GL(V)$ of invertible $n \times n$ matrices, with the topology induced by the Euclidean norm on matrices. Let $V = \mathbb{R}^n$ be the standard Euclidean space. The map $\rho: G \rightarrow \GL(V)$ defined by $\rho(A)(v) = Av$ is a continuous representation. Similarly, restricting this representation to $\SO(n)$ we get a representation of $\SO(n)$.
\end{example}}

\begin{definition}
    \label{equivariant_map}
    A linear map $\Phi: V\rightarrow W$ is $G$\textbf{-equivariant} with respect to the representations $\rho_1: G \rightarrow \GL(V)$ and $\rho_2: G \rightarrow \GL(W)$ if 
    \[ 
    \rho_2 \circ \Phi = \Phi \circ \rho_1\text{.}
    \]
    We will denote the space of all $G$-equivariant maps between $\rho_1$ and $\rho_2$ by $Hom_G(\rho_1, \rho_2)$ or $Hom_G(V, W)$ when $\rho_1$ and $\rho_2$ will be clear from the context.
\end{definition}

\begin{definition}
    \label{def_irrep}
    A representation $\rho: G \rightarrow \GL(V)$ is \textbf{irreducible} if there exists no non-trivial subspace $W$ of $V$ such that $\rho(g) (W) \subseteq W$ for each $g \in G$. 
\end{definition}

An important result in representation theory of finite groups states that there is always a decomposition into irreducible representations.

\begin{theorem}
    \label{th_irrep_dec}
    For each representation $\rho: G \rightarrow \GL(V)$, there exists a decomposition 
    \[
    V = V_1 \oplus \cdots \oplus V_m
    \]
    where each $V_i$ is irreducible for $\rho$. This decomposition is unique up to isomorphism and permutation of the factors.
\end{theorem}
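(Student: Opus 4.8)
The plan is to combine Maschke's theorem (for existence) with a Jordan--Hölder--style argument (for uniqueness), working under the standing assumption --- implicit in the surrounding discussion --- that $G$ is finite and $V$ is finite-dimensional over $\R$.

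For \emph{existence} I would induct on $\dim V$. If $V$ is irreducible (in particular if $\dim V \le 1$) take $m=1$. Otherwise pick a proper nonzero $G$-invariant subspace $W \subsetneq V$; the crux is to manufacture a $G$-invariant complement. The standard device is averaging over the group: starting from any linear projection $\pi:V\to W$ and setting $\bar\pi = \tfrac{1}{|G|}\sum_{g\in G}\rho(g)\,\pi\,\rho(g)^{-1}$, one checks that $\bar\pi$ is $G$-equivariant, that $\bar\pi|_W=\mathrm{id}_W$ (because $W$ is invariant and $\pi|_W=\mathrm{id}$), and that $\bar\pi^2=\bar\pi$ with image $W$; hence $V = W \oplus \ker\bar\pi$ with both summands $G$-invariant and of strictly smaller dimension. (Equivalently, average an arbitrary inner product to a $G$-invariant one and take orthogonal complements --- over $\R$ this is the cleanest route.) Applying the inductive hypothesis to $W$ and to $\ker\bar\pi$ and concatenating the resulting lists of irreducibles gives the decomposition.

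For \emph{uniqueness}, suppose $V = V_1\oplus\cdots\oplus V_m = W_1\oplus\cdots\oplus W_k$ are two decompositions into irreducibles. For each isomorphism class $[S]$ of irreducible $G$-representations, I would introduce the isotypic subspace $V_{[S]}\subseteq V$, defined as the sum of all $G$-invariant subspaces of $V$ isomorphic to $S$, and argue using Schur's Lemma (Lemma~\ref{lemma:schur}) that any irreducible $G$-invariant subspace contained in $\bigoplus_i V_i$ actually lies in the span of those $V_i$ isomorphic to it (project onto each factor; a projection to a factor of a different isomorphism type is a map between non-isomorphic irreducibles, hence zero). Consequently $V_{[S]} = \bigoplus_{i:\,V_i\cong S} V_i = \bigoplus_{j:\,W_j\cong S} W_j$, so in particular $V_{[S]}$ is intrinsic to $V$ and $V = \bigoplus_{[S]} V_{[S]}$. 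Counting dimensions then forces $\#\{i : V_i\cong S\} = \dim V_{[S]}/\dim S = \#\{j : W_j\cong S\}$ for every $[S]$, and matching factors class by class produces the desired permutation identifying the $V_i$ with the $W_j$ up to isomorphism.

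The main obstacle is the uniqueness half, specifically the claim that $V_{[S]}$ is intrinsic: one must use Schur carefully to see that no summand of a different isomorphism type contributes to $V_{[S]}$, and conversely that every summand isomorphic to $S$ lies in it. A further mild subtlety special to $\R$ is that $\operatorname{End}_G(S)$ need not equal $\R$ (it can be $\mathbb{C}$ or $\mathbb{H}$), so one cannot read multiplicities off $\dim\Hom_G(S,V)$ naively; the dimension-counting argument above sidesteps this, as it only uses that $V_{[S]}$ is a well-defined $G$-invariant subspace that is a sum of copies of $S$. Alternatively, one could cite the Jordan--Hölder theorem for finite-length modules for uniqueness of the composition factors, and then upgrade it to uniqueness of the direct-sum decomposition using semisimplicity --- every short exact sequence of $G$-representations splits, by the same averaging argument.
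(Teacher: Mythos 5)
The paper itself gives no proof of this statement: Theorem~\ref{th_irrep_dec} is quoted in the appendix as standard background on representations of finite groups (in the spirit of \cite{fulton_representation_2004}), so there is nothing internal to compare your argument against. As a standalone proof, your proposal is correct and is the classical route: Maschke-type averaging of a projection (or of an inner product) to split off a $G$-invariant complement and induct on dimension for existence, and isotypic components for uniqueness. Two points you handle well deserve emphasis. First, you correctly supply the hypotheses the printed statement omits ($G$ finite, $V$ finite-dimensional); without them the claim is false (e.g.\ a non-semisimple representation of $\Z$ by a single Jordan block admits no such decomposition), and the averaging step genuinely uses $|G|<\infty$. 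Second, your uniqueness argument only invokes the non-isomorphic case of Schur's Lemma, which is exactly the version stated as Lemma~\ref{lemma:schur} in the paper, and your dimension count $\#\{i: V_i\cong S\}=\dim V_{[S]}/\dim S$ correctly avoids the real-field subtlety that $\mathrm{End}_G(S)$ may be $\R$, $\mathbb{C}$ or $\mathbb{H}$, so that multiplicities cannot be read off $\dim\Hom_G(S,V)$ naively. The only cosmetic gap is that ``matching factors class by class'' silently uses that all summands of a given class are isomorphic to $S$ by definition, which is immediate; otherwise the argument is complete.
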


Another tool coming from Representation Theory is Schur's Lemma which will be fundamental to understand our results.

\begin{lemma}
    \label{lemma:schur}
    Let $V$ and $W$ be non-isomorphic irreducible representations, there is only one $G$-equivariant linear map between them and it is the trivial one.
\end{lemma}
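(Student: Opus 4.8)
The plan is to use the classical kernel–image argument. Let $\Phi \in \Hom_G(V,W)$ be an arbitrary $G$-equivariant linear map between the representations $\rho_V : G \to \GL(V)$ and $\rho_W : G \to \GL(W)$, so that $\Phi \circ \rho_V(g) = \rho_W(g) \circ \Phi$ for all $g \in G$. Since the zero map is trivially $G$-equivariant and $\Hom_G(V,W)$ is a linear subspace of $\Hom(V,W)$, the statement "there is only one $G$-equivariant map and it is the trivial one" is equivalent to showing that every such $\Phi$ vanishes, i.e.\ $\dim \Hom_G(V,W) = 0$. So I would fix $\Phi$ and argue by contradiction, assuming $\Phi \neq 0$.

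The first step is to observe that $\ker \Phi$ is a $G$-invariant subspace of $V$: if $v \in \ker \Phi$, then $\Phi(\rho_V(g)v) = \rho_W(g)\Phi(v) = 0$, so $\rho_V(g) v \in \ker \Phi$. Symmetrically, $\operatorname{im}\Phi$ is a $G$-invariant subspace of $W$: for $w = \Phi(v)$ we have $\rho_W(g) w = \rho_W(g)\Phi(v) = \Phi(\rho_V(g)v) \in \operatorname{im}\Phi$. This is the only place the equivariance identity is really used, and the only care needed is to compose the maps in the correct order.

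The second step applies irreducibility (Definition~\ref{def_irrep}) twice. Because $V$ is irreducible, $\ker\Phi \in \{\{0\},V\}$; because $W$ is irreducible, $\operatorname{im}\Phi \in \{\{0\},W\}$. Under the assumption $\Phi \neq 0$ we have $\ker\Phi \neq V$ and $\operatorname{im}\Phi \neq \{0\}$, hence $\ker\Phi = \{0\}$ and $\operatorname{im}\Phi = W$, so $\Phi$ is a linear bijection. It remains to upgrade this to an isomorphism of representations: applying $\Phi^{-1}$ on the left and right of $\Phi \circ \rho_V(g) = \rho_W(g)\circ\Phi$ gives $\rho_V(g)\circ\Phi^{-1} = \Phi^{-1}\circ\rho_W(g)$, so $\Phi^{-1}$ is $G$-equivariant as well. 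Thus $\Phi$ is an isomorphism $V \cong W$, contradicting the hypothesis that $V$ and $W$ are non-isomorphic. Therefore $\Phi = 0$, which proves the lemma.

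There is no substantial obstacle in this proof; it is the textbook argument. The only points requiring minor attention are the correct order of composition when verifying $G$-invariance of $\ker\Phi$ and $\operatorname{im}\Phi$, and the remark that a bijective equivariant linear map has an equivariant inverse, so that a mere linear isomorphism genuinely contradicts non-isomorphism of representations.
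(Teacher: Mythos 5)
Your proof is correct: it is the classical kernel--image argument for Schur's Lemma, and every step (the $G$-invariance of $\ker\Phi$ and $\operatorname{im}\Phi$, the use of irreducibility on each side, and the observation that a bijective equivariant map has an equivariant inverse) is carried out properly. Note that the paper itself states this lemma as a standard result from representation theory without giving a proof, so there is nothing to compare against beyond confirming that your argument is the textbook one and is complete.
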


\begin{definition}
    \label{def:fixed_space}
    The fixed set for a representation $\rho: G \rightarrow \GL (V)$ is $V^G = \{ v : gv = v \}$. Note that $V^G$ is a representation for $G$ and the action is trivial.
\end{definition}

\begin{theorem}
    \label{th:char_scalar_prod}
    The following properties are true for representations of a finite group $G$.
    \begin{itemize}
        \item $\dim V^G$ is the multiplicity of the trivial representation in $V$,
        \item $\dim \Hom_G(V, W) = \dim (V \otimes W)^G$.
    \end{itemize}
\end{theorem}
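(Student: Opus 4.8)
The plan is to prove the two statements in order, and then note that the second essentially reduces to the first.

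For the first statement I would use the decomposition of Theorem~\ref{th_irrep_dec} to write $V = V_1 \oplus \cdots \oplus V_m$ with each $V_i$ irreducible. Since the $G$-action preserves each summand, the fixed space splits compatibly, $V^G = V_1^G \oplus \cdots \oplus V_m^G$. For each $i$, the subspace $V_i^G$ is a subrepresentation of $V_i$, so by irreducibility it is either $0$ or all of $V_i$, and it equals $V_i$ precisely when $G$ acts trivially on $V_i$, i.e. when $V_i$ is isomorphic to the trivial representation (in which case $\dim V_i^G = \dim V_i = 1$). Summing over $i$ shows that $\dim V^G$ counts exactly the summands isomorphic to the trivial representation, which is by definition its multiplicity in $V$. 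Equivalently, one may argue through the averaging operator $P = \frac{1}{|G|}\sum_{g\in G}\rho(g)$, which is a $G$-equivariant projection onto $V^G$, whence $\dim V^G = \Tr P = \frac{1}{|G|}\sum_{g\in G}\Tr\rho(g)$ — the scalar product of the character of $V$ with the trivial character — but the decomposition argument stays within the tools already introduced.

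For the second statement I would first endow $\Hom(V,W)$ with the $G$-action $(g\cdot\phi) = \rho_W(g)\circ\phi\circ\rho_V(g)^{-1}$, and observe that $\phi$ is fixed by every $g$ iff $\rho_W(g)\circ\phi = \phi\circ\rho_V(g)$ for all $g$, i.e. iff $\phi\in\Hom_G(V,W)$; hence $\Hom_G(V,W) = \Hom(V,W)^G$. Next, the canonical vector-space isomorphism $V^*\otimes W \to \Hom(V,W)$ sending $v^*\otimes w$ to the map $u\mapsto v^*(u)\,w$ is an isomorphism of $G$-representations when $V^*$ carries the dual action, so $\dim\Hom(V,W)^G = \dim(V^*\otimes W)^G$, which by the first statement is the multiplicity of the trivial representation in $V^*\otimes W$. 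Finally, averaging an arbitrary inner product on $V$ over $G$ yields a $G$-invariant inner product, and the induced map $V\to V^*$ is a $G$-equivariant isomorphism; therefore $(V^*\otimes W)^G \cong (V\otimes W)^G$, and the displayed equality $\dim\Hom_G(V,W) = \dim(V\otimes W)^G$ follows.

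The individual steps are routine rather than deep; the two points I would flag as requiring care are keeping the $G$-action conventions on $\Hom(V,W)$ and on the dual space $V^*$ consistent, so that ``$G$-fixed'' really coincides with ``$G$-equivariant'' under the natural identifications, and using the self-duality $V\cong V^*$ of real representations of a finite group, which is exactly what legitimizes writing $V\otimes W$ (rather than $V^*\otimes W$) on the right-hand side over $\R$.
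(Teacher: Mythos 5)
Your proof is correct. Note that the paper itself states Theorem~\ref{th:char_scalar_prod} without proof, as standard background from the representation theory of finite groups (cf.\ \cite{fulton_representation_2004}), so there is no in-paper argument to compare against; your argument is the standard textbook one. Both halves are sound: the splitting $V^G = \bigoplus_i V_i^G$ across the decomposition of Theorem~\ref{th_irrep_dec} together with irreducibility correctly identifies $\dim V^G$ with the number of trivial summands, and the chain $\Hom_G(V,W) = \Hom(V,W)^G \cong (V^{*}\otimes W)^G \cong (V\otimes W)^G$ is valid, with the last isomorphism justified exactly as you say --- averaging an inner product over the finite group gives a $G$-equivariant isomorphism $V \cong V^{*}$ over $\R$, which is precisely the point that legitimizes the paper writing $V \otimes W$ rather than $V^{*}\otimes W$. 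Your flagged care about keeping the $G$-action conventions on $\Hom(V,W)$ and $V^{*}$ consistent is the right thing to watch, and you handle it correctly.
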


\subsection{Affine maps}
\label{subsection_affine_maps}

\begin{definition}
    \label{def_affine_spaces}
    Let $V$ and $W$ be two $\mathbb{K}$-vector spaces and define the translation of a vector $w$ in $W$ as a non-linear bijective map $\tau_w: v \mapsto v + w$.
    Define the space of \textbf{affine maps} from $V$ to $W$ as
    \[
    \Aff(V, W) = \{ \tau_w \circ f | w \in W \text{ and } f \in \Hom(V, W) \} \text{.}
    \]
\end{definition}

Note that is a more general definition with respect to the standard one, where $f$ is an isomorphism of a vector space $V$.
\begin{theorem}  
    \label{th_affine_spaces_classification}
    The decomposition of an affine map $\phi \in\Aff(V, W)$ in translational part $\tau_w$ and $f$ linear part is unique.
\end{theorem}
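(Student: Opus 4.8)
The plan is to exploit the fact that a \emph{linear} map, unlike a general affine map, must send the zero vector to the zero vector, and to use this to isolate the translational part by evaluation at $0\in V$.

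Concretely, suppose $\phi\in\Aff(V,W)$ admits two decompositions $\phi=\tau_w\circ f=\tau_{w'}\circ f'$ with $f,f'\in\Hom(V,W)$ and $w,w'\in W$. First I would unwind the definitions to the pointwise identity $f(v)+w=f'(v)+w'$ for every $v\in V$. Evaluating this at $v=0$ and using $f(0)=f'(0)=0$ (linearity) immediately yields $w=w'$. Substituting back gives $f(v)=f'(v)$ for all $v\in V$, i.e. $f=f'$, which establishes uniqueness of both components.

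There is no real obstacle here; the only point requiring care is to remember that the argument genuinely uses linearity of $f$ and $f'$ (an arbitrary affine $f$ need not vanish at $0$), which is exactly why the translational/linear splitting is unique in $\Aff(V,W)$ as defined in Definition~\ref{def_affine_spaces}. One could also phrase the same computation as: $\tau_w\circ f=\tau_{w'}\circ f'$ implies $\tau_{w-w'}=f'\circ f^{-1}$ when $f$ is invertible, but the direct evaluation-at-zero argument avoids any invertibility assumption and covers the general case stated.
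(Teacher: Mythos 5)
Your argument is correct and coincides with the paper's own proof: both evaluate at $0\in V$ to recover the translational part (using $f(0)=f'(0)=0$) and then cancel the common translation to identify the linear parts. No further comment is needed.
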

\begin{proof}
    \[
    \tau_{w_1} \circ f_1 = \phi = \tau_{w_2} \circ f_2 \text{,}
    \]
    evaluating in $0$ leads to
    \[
    w_1 = \phi(0) = w_2 \text{.}
    \]
    Write $w = w_1 = w_2$, and note that
    \[
    \tau_{w} \circ f_1 = \tau_{w} \circ f_2 \text{,}
    \]
    by the bijectivity of translations,
    \[
    f_1 = f_2 \text{.}
    \]
\end{proof}

Let $V$ and $W$ be $G$-representation. An affine map $\phi \in \Aff(V, W)$ is $G$-equivariant if $\phi \circ g = g \circ \phi$ for each $g \in G$, write the set of $G$-equivariant affine maps from $V$ to $W$ as $\Aff_G (V, W)$.

\begin{theorem}
    \label{th_affine_equivariant_space_classification}
    $\phi = \tau_w \circ f \in\Aff_G(V, W)$ if and if $f \in \Hom_G (V, W)$ and $v$ is invariant.
\end{theorem}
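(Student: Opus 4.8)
The plan is to derive both implications directly from the defining identity of equivariance, using that the linear part of an affine map is pinned down by its value at the origin (the uniqueness in Theorem~\ref{th_affine_spaces_classification}), so that one may argue about the translational part $w$ and the linear part $f$ separately.

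First I would unwind the condition. Writing $\phi = \tau_w \circ f$, so that $\phi(v) = f(v) + w$, and recalling that $G$ acts on $W$ by linear maps, the equivariance requirement $\phi(gv) = g\phi(v)$ for all $g \in G$ and $v \in V$ becomes
\[
f(gv) + w = g\bigl(f(v) + w\bigr) = gf(v) + gw .
\]
For the reverse implication, assume $f \in \Hom_G(V,W)$ and $w \in W^G$. Then $f(gv) = gf(v)$ and $gw = w$, so both sides of the displayed identity equal $gf(v) + w$, proving $\phi \in \Aff_G(V,W)$.

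For the forward implication, assume $\phi \in \Aff_G(V,W)$. Evaluating the displayed identity at $v = 0$ and using $f(0) = 0$ gives $w = gw$ for every $g \in G$, i.e.\ $w \in W^G$. Substituting $gw = w$ back into the identity leaves $f(gv) + w = gf(v) + w$, hence $f(gv) = gf(v)$ for all $g \in G$ and $v \in V$, i.e.\ $f \in \Hom_G(V,W)$.

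I do not expect a genuine obstacle here: the only point requiring care is that the statement is well-posed, i.e.\ that $w$ and $f$ are unambiguously determined by $\phi$, which is exactly Theorem~\ref{th_affine_spaces_classification}; once that is granted, the whole argument reduces to the substitution $v = 0$ in the displayed identity.
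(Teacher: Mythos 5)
Your proposal is correct and follows essentially the same route as the paper: both arguments reduce the equivariance condition to matching the translational part ($w = gw$) and the linear part ($f \circ g = g \circ f$) separately. The only cosmetic difference is that the paper invokes the uniqueness of the affine decomposition (Theorem~\ref{th_affine_spaces_classification}) to split the identity, whereas you inline that step by evaluating at $v=0$ — which is precisely how that uniqueness is proved anyway.
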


\begin{proof}
    Note that for each $g \in G$,
    \[
    g \circ \tau_w \circ f = \tau_{gw} \circ (g \circ f) \text{.}
    \]
    Observe that
    \[
    \phi \circ g = g \circ \phi \text{,}
    \]
    if and only if
    \[
    \tau_w \circ f \circ g = g \circ \tau_w \circ f = \tau_{gw} \circ (g \circ f)
    \]
    if and only if, by the previous proposition,
    \[
    w = gw
    \]
    and
    \[
    f \circ g = g \circ f
    \]
    for each $g \in G$.
\end{proof}

\subsection{Representations of Group Products on Tensor Products}
\label{subsection:tensor_product}

\begin{remark}
\label{rmk:group_product}
Let $V \otimes W$ be a finite-dimensional $G \times H$-representations and $V_i$'s a complete list of irreducible $G$-representations and $W_j$'s a complete list of irreducible $H$-representations, then $V_i \otimes W_j$ 's is a complete list of irreducible $(G \times H)$-representations (See \cite{sagan_symmetric_2001} for proofs in case $G$ and $H$ are finite). If $m_i$ is the multiplicity of $V_i$ in $V$ and $n_j$ is the multiplicity of $W_j$ in $W$ then the multiplicity of $V_i \otimes W_j$ in $V \otimes W$ is $m_i n_j$. This can be easily seen by writing the irreducible decompositions of $V$ and $W$ and use the distributive property of tensor products and direct sums. Note that the same is true if $G$ and $H$ are compact groups and the representations are continuous. If $S$ is an $G$-isotypic component of $V \times W$ of type $\rho$ then it is $H$-invariant and each $h \in H$ acts as $G$-equivariant endomorphism of $S$. By Lemma \ref{lemma:schur}, $S = \rho \otimes \sigma$, which is $\bigoplus_i \rho \otimes \sigma_i$, where $\sigma_i$ are irreducible $H$-representations. Iterating the decomposition if necessary and by the finite dimension of $V$ and $W$, we conclude.
\end{remark}
\rev{
Thanks to these observations we can prove Corollary \ref{cor:geo-igns}}

\rev{
\begin{proof}
    Let us study $\SO (3) \times S_n$-equivariant linear layers. Irreducible representations of $\SO (3) \times S_n$ are the tensor products of  irreducible representations of $\SO (3)$ and $S_n$ as shown above. The natural action of $\SO(3)$ on $\R^3$ is an irreducible representation and admissible irreducible representations are invariant, i.e., the trivial action of $\SO(3)$ on $\R$. Hence, the fist layer of an $\SO (3) \times S_n$-equivariant network processing a geometric graph will be a linear layer from a direct sum of $\R^3 \otimes S^\lambda$ to a direct sum of $\R \otimes S^\mu$, where $S^\lambda$ and $S^\mu$ indicate irreducible representations for $S_n$. But, due to Schur's Lemma, the only possible linear layer would be the null one, i.e., a layer without trainable parameters.
\end{proof}}

\subsection{Proof of the Stabilization Lemma}
\label{subsection:stabilization_to_max}

\rev{For convenience we restate Lemma \ref{lemma:stabilization_to_max}.}

\begin{lemma}
\rev{
The group of matrices $\cM' = \cM(\cF(\cM))$ is the largest group in $\GL_n (\R)$ for which $\cF(\cM') = \cF (\cM)$, and $\cF' = \cF(\cM(\cF))$ is the largest family of functions in $\cC (\R)$ for which $\cM(\cF') = \cM(\cF)$.} 
\end{lemma}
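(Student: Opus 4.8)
The plan is to recognize the pair $(\cM(\cdot),\cF(\cdot))$ as the two polarities of a single relation, so that the statement becomes the standard ``closure operator'' content of an antitone Galois connection. Concretely, I would define on $\cC(\R)\times\GL_n(\R)$ the relation $R(f,M)$ that holds precisely when the pointwise map $\tilde f$ induced by $f$ commutes with $M$ (both written in the fixed basis $\cB$). Then by Definition~\ref{def:maximal} we have $\cM(\cF)=\{M\in\GL_n(\R):R(f,M)\ \forall f\in\cF\}$ and $\cF(\cM)=\{f\in\cC(\R):R(f,M)\ \forall M\in\cM\}$, i.e.\ these are exactly the two polarities of $R$. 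From this presentation two facts are immediate: both maps are \emph{antitone} (if $\cF_1\subseteq\cF_2$ then $\cM(\cF_2)\subseteq\cM(\cF_1)$, and dually), and they satisfy the \emph{adjunction} $\cF\subseteq\cF(\cM)\iff\cM\subseteq\cM(\cF)$, since both sides unfold to the single statement ``$R(f,M)$ for all $f\in\cF$ and all $M\in\cM$''. Specializing the adjunction to $\cM:=\cM(\cF)$ and to $\cF:=\cF(\cM)$ respectively yields the \emph{extensivity} properties $\cF\subseteq\cF(\cM(\cF))$ and $\cM\subseteq\cM(\cF(\cM))$.

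For the first assertion, set $\cM'=\cM(\cF(\cM))$; this is a group by the remark recorded just after Definition~\ref{def:maximal}. From $\cM\subseteq\cM'$ (extensivity) and antitonicity of $\cF(\cdot)$ I get $\cF(\cM')\subseteq\cF(\cM)$, while applying extensivity to the family $\cF(\cM)$ gives $\cF(\cM)\subseteq\cF(\cM(\cF(\cM)))=\cF(\cM')$; hence $\cF(\cM')=\cF(\cM)$. For maximality, let $\cN\subseteq\GL_n(\R)$ be any subset with $\cF(\cN)=\cF(\cM)$. Applying $\cM(\cdot)$ and using extensivity for $\cN$ gives $\cN\subseteq\cM(\cF(\cN))=\cM(\cF(\cM))=\cM'$, so $\cM'$ contains every such $\cN$ and is therefore the largest one (in particular, the largest \emph{group}).

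The second assertion is literally the dual argument. With $\cF'=\cF(\cM(\cF))$, which is a subset of $\cC(\R)$ by construction, the same two inclusions give $\cM(\cF')=\cM(\cF)$ (using $\cF\subseteq\cF'$ with antitonicity for one direction, and extensivity applied to $\cM(\cF)$ for the other), and if $\cG\subseteq\cC(\R)$ satisfies $\cM(\cG)=\cM(\cF)$ then $\cG\subseteq\cF(\cM(\cG))=\cF(\cM(\cF))=\cF'$.

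I do not anticipate a genuine obstacle: the whole statement is the abstract content of a Galois connection, and the only points requiring care are (i) phrasing $\cM(\cdot)$ and $\cF(\cdot)$ symmetrically through the single relation $R$, so that the ``dual'' half is the same proof verbatim, and (ii) remembering that $\cM'$ must be verified to be a group, which is exactly the observation already made after Definition~\ref{def:maximal}. If one prefers not to invoke Galois-connection folklore by name, each of the facts used --- antitonicity, the adjunction, and the two extensivity statements --- is a one-line unfolding of the definitions, so the argument remains short and self-contained either way.
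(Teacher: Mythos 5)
Your proof is correct and follows essentially the same route as the paper's: the paper proves the same extensivity and antitonicity facts (its Lemma~\ref{lemma:obv}), derives the identity $\cF(\cM) = \cF(\cM(\cF(\cM)))$ from them, and then obtains maximality by applying extensivity to any $\cS$ with $\cF(\cS)=\cF(\cM)$, exactly as you do. Your explicit framing via the single relation $R(f,M)$ and the Galois-connection adjunction is only a presentational repackaging of the same steps.
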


\rev{
In what follows we state and prove some results necessary for the proof of Lemma \ref{lemma:stabilization_to_max}. The proof of the next lemma is trivial.}

\rev{
\begin{lemma}
\label{lemma:obv}
The two following statements are true.
\begin{enumerate}
    \item For each group of matrices $\cM_1$, $\cM_1 \subseteq\cM(\cF(\cM_1))$ and for each family of functions $\cF_1$, $\cF_1 \subseteq\cF(\cM(\cF_1))$,
    \item For each inclusion $\cM_1 \subseteq \cM_2$ of groups of matrices, $\cF(\cM_2) \subseteq \cF(\cM_1)$, similarly, for families of activations $\cF_1 \subseteq \cF_2$, $\cM(\cF_2) \subseteq \cM(\cF_1)$.
\end{enumerate}
\end{lemma}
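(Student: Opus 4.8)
The plan is to observe that the statement amounts to the standard extensivity and order-reversal (antitonicity) properties of the two polarities induced by a single binary relation, and to verify them by directly unfolding Definition~\ref{def:maximal}. Concretely, I would introduce the \emph{commutation relation} $R$ between $\GL_n(\R)$ and $\cC(\R)$ by declaring $M \mathbin{R} f$ whenever the point-wise map $\tilde f$ induced by $f$ satisfies $\tilde f(M v) = M \tilde f(v)$ for every $v$. With this notation, Definition~\ref{def:maximal} reads $\cM(\cF) = \{ M \in \GL_n(\R) : M \mathbin{R} f \text{ for all } f \in \cF \}$ and $\cF(\cM) = \{ f \in \cC(\R) : M \mathbin{R} f \text{ for all } M \in \cM \}$. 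Both claimed properties then reduce to elementary manipulations of the quantifiers defining these sets.

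For the extensivity statement (item 1), I would argue as follows. Take any $M \in \cM_1$; to place it in $\cM(\cF(\cM_1))$ I must check that $M \mathbin{R} f$ holds for every $f \in \cF(\cM_1)$. But by the very definition of $\cF(\cM_1)$, each such $f$ satisfies $M' \mathbin{R} f$ for all $M' \in \cM_1$, and in particular for $M' = M$; hence $M \mathbin{R} f$, as required. The dual inclusion $\cF_1 \subseteq \cF(\cM(\cF_1))$ is symmetric: for $f \in \cF_1$ and any $M \in \cM(\cF_1)$, the membership of $M$ forces $M \mathbin{R} g$ for all $g \in \cF_1$, in particular $M \mathbin{R} f$, so $f$ lies in $\cF(\cM(\cF_1))$.

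For the order-reversal statement (item 2), suppose $\cM_1 \subseteq \cM_2$ and take $f \in \cF(\cM_2)$. Then $M \mathbin{R} f$ holds for every $M \in \cM_2$, and restricting the quantifier to the smaller set $\cM_1$ gives $M \mathbin{R} f$ for all $M \in \cM_1$, i.e.\ $f \in \cF(\cM_1)$. The companion implication $\cF_1 \subseteq \cF_2 \Rightarrow \cM(\cF_2) \subseteq \cM(\cF_1)$ is obtained by the identical argument with the roles of matrices and functions exchanged.

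I do not anticipate a genuine obstacle here: every step is a one-line consequence of the set-builder definitions, which is why the lemma is flagged as trivial. The only point demanding a modicum of care is the quantifier bookkeeping — making sure that ``commutes with every element of the larger set'' is correctly weakened to ``commutes with every element of the smaller set'' in item 2, and that the witness chosen in item 1 (namely $M' = M$, respectively the given $f$) indeed lies in the indexing set over which the defining quantifier ranges. No continuity, group-theoretic, or linear-algebraic input beyond the bare definition of the commutation relation is needed.
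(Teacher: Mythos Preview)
Your proposal is correct and aligns with the paper's treatment: the paper declares the lemma trivial and offers no proof, and your argument is precisely the direct unfolding of Definition~\ref{def:maximal} that this label implies. Framing the situation as the two polarities of a Galois connection induced by the commutation relation is a clean way to organize the quantifier manipulations, but it introduces no substantive difference from what the paper has in mind.
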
}

\rev{
\begin{lemma}
\label{lemma:stabilization_to_max_tec}
For each family of functions $\cF_1$, $\cM (\cF_1) = \cM(\cF(\cM(\cF_1)))$.
For each group of matrices $\cM_1$, $\cF (\cM_1) = \cF(\cM(\cF(\cM_1)))$.
\end{lemma}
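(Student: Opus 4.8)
The statement to prove is Lemma~\ref{lemma:stabilization_to_max_tec}, which says that applying the composite operator $\cM \circ \cF \circ \cM$ to any family $\cF_1$ gives the same result as $\cM(\cF_1)$, and dually for $\cF \circ \cM \circ \cF$ applied to any group $\cM_1$.

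The plan is to derive both identities purely formally from the two elementary facts collected in Lemma~\ref{lemma:obv}: namely, the \emph{extensivity} statements $\cF_1 \subseteq \cF(\cM(\cF_1))$ and $\cM_1 \subseteq \cM(\cF(\cM_1))$, and the \emph{order-reversing} statements (if $\cM_1 \subseteq \cM_2$ then $\cF(\cM_2) \subseteq \cF(\cM_1)$, and symmetrically for $\cM(\cdot)$). This is the standard abstract argument showing that in a Galois connection the composite of the three maps equals the single map.

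First I would prove $\cM(\cF_1) = \cM(\cF(\cM(\cF_1)))$. For the inclusion $\cM(\cF_1) \subseteq \cM(\cF(\cM(\cF_1)))$: apply the extensivity statement (part 1 of Lemma~\ref{lemma:obv}) with the group of matrices taken to be $\cM_1 := \cM(\cF_1)$, which gives directly $\cM(\cF_1) \subseteq \cM(\cF(\cM(\cF_1)))$. For the reverse inclusion: start from the extensivity statement for families, $\cF_1 \subseteq \cF(\cM(\cF_1))$; apply the order-reversing property of $\cM(\cdot)$ (part 2 of Lemma~\ref{lemma:obv}) to this inclusion to obtain $\cM(\cF(\cM(\cF_1))) \subseteq \cM(\cF_1)$. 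Combining the two inclusions yields the equality. The proof of the dual identity $\cF(\cM_1) = \cF(\cM(\cF(\cM_1)))$ is verbatim the same argument with the roles of $\cF$ and $\cM$ exchanged: one inclusion comes from extensivity applied to the family $\cF(\cM_1)$, and the other from applying the order-reversing property of $\cF(\cdot)$ to the inclusion $\cM_1 \subseteq \cM(\cF(\cM_1))$.

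There is essentially no obstacle here — the content is entirely in Lemma~\ref{lemma:obv}, which the excerpt already grants as trivial, and the present lemma is a two-line formal consequence. The only thing to be careful about is bookkeeping: making sure that in each step the monotonicity/extensivity fact is instantiated with the correct object (a family of functions versus a group of matrices) so that the inclusions chain up in the right direction. Once that is done, Lemma~\ref{lemma:stabilization_to_max_tec} is what feeds into the proof of Lemma~\ref{lemma:stabilization_to_max}, since it shows precisely that $\cM' = \cM(\cF(\cM))$ is a fixed point of $\cF$ followed by $\cM$, hence the largest group with the prescribed image under $\cF$.
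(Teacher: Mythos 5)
Your proposal is correct and matches the paper's own proof essentially verbatim: both establish $\cM(\cF_1) \subseteq \cM(\cF(\cM(\cF_1)))$ by instantiating the extensivity statement of Lemma~\ref{lemma:obv} at $\cM(\cF_1)$, and the reverse inclusion by applying the order-reversing property to $\cF_1 \subseteq \cF(\cM(\cF_1))$, with the dual identity handled symmetrically. This is the standard Galois-connection argument, and your bookkeeping of which object each step is instantiated with is exactly what the paper does.
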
}

\begin{proof}
\rev{
We prove the equality $\cM (\cF_1) = \cM(\cF(\cM(\cF_1)))$ first by proving $\cM (\cF_1) \subseteq \cM(\cF(\cM(\cF_1)))$, then we prove the opposite inclusion.}

\rev{
Substituting $\cM_1$ with $\cM(\cF_1)$ in the first point of Lemma \ref{lemma:obv}, we get $\cM(\cF_1) \subseteq \cM(\cF(\cM(\cF_1)))$.}

\rev{
To prove the opposite inclusion, substitute $\cF_2 = \cF(\cM(\cF_1))$ in the second point of Lemma \ref{lemma:obv}, we obtain $\cM(\cF(\cM(\cF_1))) \subseteq \cM(\cF_1)$. Implying $\cM(\cF_1) = \cM(\cF(\cM(\cF_1)))$.
In a similar way, we can prove the other equality.}
\end{proof}

\rev{We are now ready to prove Lemma \ref{lemma:stabilization_to_max}.}

\begin{proof}
\rev{
    We only prove the first equality as the second have an analogous proof.
    By Lemma \ref{lemma:stabilization_to_max_tec}, we know that $\cF(\cM') = \cF(\cM)$.
    Then, for each $\cS \subseteq \GL_n (\R)$ such that $ \cF( \cS) = \cF(\cM)$, we know that $\cS \subseteq \cM(\cF(\cS))$ by Lemma \ref{lemma:obv}. Moreover, $\cS \subseteq \cM(\cF(\cS)) = \cM(\cF(\cM)) = \cM'$. Hence each $\cS \subseteq \GL_n (\R)$ such that $ \cF( \cS) = \cF(\cM)$ we have $S \subseteq \cM'$, therefore $\cM' = \cM(\cF(\cM))$ is the largest group in $\GL_n (\R)$ such that $\cF(\cM') = \cF (\cM)$.}
\end{proof}

\subsection{Proof of the Main Theorem}
\label{subsection:main_theorem}

\rev{
We can now state our main result, which shows that, under mild assumptions, there are only a very limited number of maximal groups (or equivalently, of maximal function classes). Moreover, for each of these we provide the exact correspondence between $\cM$ and $\cF$, thus providing an exhaustive classification of all possible admissible pairs of $\cM$ and $\cF$.}

\rev{
A class of functions fundamental for the understating of what follows will be $T$-multiplicative functions which we define as follows.}

\begin{definition}
\rev{
    Let $T$ be a multiplicative subgroup of $\R$. We say that a continuous function $f$ is $T$-multiplicative if $f(tx) = t f(x)$ for each $t \in T$ and $x \in \R$. We write $\cF_T$ to indicate the set of all continuous $T$-multiplicative functions.}
\end{definition}

\rev{
Note that if $T = \langle b^n \rangle_{n \in \Z}$, the notion of $T$-multiplicativity reduces to the notion of $b$-multiplicativity provided in Section \ref{sec:main}. Similarly, for $T = \langle \pm b^n \rangle_{n \in \Z}$ and $\pm b$-multiplicativity.}

\rev{
We are now ready to state Theorem \ref{th:main-p1}, one of the two key results fundamental to prove Theorem \ref{th:main}. In particular, Theorem \ref{th:main-p1} characterizes admissible pairs indexing them as the multiplicative subgroups of $\R$ but does not provide a constructive description of their families of activation functions, this description is given by Lemma \ref{lemma:complete_char}. In what follows we will write $\cR_n$ for the set of all unit row invertible matrices.}

\begin{theorem}
\label{th:main-p1}
\rev{Maximal admissible pairs for $T = \cT (\cM)$ are
    \begin{itemize}
        \item $(\cM, \Aff(\R, \R))$ for each group of non-monomial matrices $\cM$ in $\cR_n(\R)$ and $(\GL_n(\R), \Hom(\R, \R))$ if $T$ is dense,
        \item $(\cM_n(T), \cF_T)$ otherwise.
    \end{itemize}}
\end{theorem}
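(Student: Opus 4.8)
The plan is to reduce the statement to one computation: for an arbitrary matrix group $\cM\le\GL_n(\R)$, determine the maximal compatible function class $\cF(\cM)$ of Definition~\ref{def:maximal} together with its dual group $\cM(\cF(\cM))$ and the attached multiplicative subgroup $T=\cT(\cM)=\cT(\cF(\cM))$. Recall that $\tilde f$ commutes with $M=(m_{ij})$, i.e.\ $\tilde f(Mv)=M\tilde f(v)$ for all $v$, exactly when for every row $r=(r_1,\dots,r_n)$ of $M$ the scalar function $f$ satisfies the equation $C_r$: $f(\sum_j r_j x_j)=\sum_j r_j f(x_j)$ for all $x\in\R^n$. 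Hence $\cF(\cM)=\{f\in\cC(\R): C_r\text{ holds for each row }r\text{ of each }M\in\cM\}$, and the whole proof is an analysis of which rows $r$ occur, organised around the dichotomy ``$\cM$ monomial'' versus ``$\cM$ not monomial''.

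The analytic heart is the solution of $C_r$ for a single row. If $r$ has a unique nonzero entry $c$, then $C_r$ is exactly the assertion that $f$ is $c$-multiplicative, $f(cx)=cf(x)$. If $r$ has $k\ge 2$ nonzero entries $c_1,\dots,c_k$, I claim $C_r$ holds iff $f$ is linear, or $f$ is affine and $c_1+\dots+c_k=1$. The ``if'' direction is direct substitution. For ``only if'', evaluating $C_r$ at $x=0$ gives $(1-\sum_l c_l)f(0)=0$; when $\sum_l c_l\neq 1$ this forces $f(0)=0$, activating one variable at a time gives $f(c_l y)=c_l f(y)$, and after the substitution $z_l=c_l y_l$ the equation $C_r$ collapses to Cauchy's functional equation, so continuity forces $f$ linear; when $\sum_l c_l=1$ the same manipulation applied to $g:=f-f(0)$ shows $g$ linear, i.e.\ $f$ affine. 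In particular a single row with two or more nonzero entries already forces $f$ to be affine.

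Now split on $\cM$. If $\cM$ contains a non-monomial matrix, every $f\in\cF(\cM)$ is affine; if moreover every row of every $M\in\cM$ sums to $1$, i.e.\ $\cM\le\cR_n(\R)$, then conversely every affine function satisfies all the relevant $C_r$, so $\cF(\cM)=\Aff(\R,\R)$, a one-line computation gives $\cM(\Aff(\R,\R))=\cR_n(\R)$ and $\cT(\cM)=\{1\}$, and we obtain the pairs $(\cM,\Aff(\R,\R))$ of the first case. If instead some row does not sum to $1$, then affine non-linear functions are excluded (an off-one single entry kills the constant term; an off-one multi-entry row forces linearity), so $\cF(\cM)=\Hom(\R,\R)$, whose dual group is $\GL_n(\R)$ and whose $\cT$-value is the dense group $\R^*$. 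If instead $\cM$ is monomial, only single-entry rows occur, so $\cF(\cM)=\{f\in\cC(\R):f(cx)=cf(x)\ \forall c\in E\}$ with $E$ the set of nonzero entries appearing in $\cM$; since the set of scalars for which a fixed $f$ is multiplicative is a subgroup of $\R^*$, this equals $\cF_{T'}$ with $T'=\langle E\rangle$, and by continuity $\cF_{T'}=\cF_T$ for $T$ the closure of $T'$ in $\R^*$. By the classification of closed multiplicative subgroups of $\R$, $T$ is discrete, or $\R_{>0}$, or $\R^*$; if $T=\R^*$ then $\cF_T=\Hom(\R,\R)$ and $\cM(\cF_T)=\GL_n(\R)$, and otherwise $\cF_T$ contains non-affine functions, so any matrix commuting with all of $\cF_T$ is monomial with entries in $\cT(\cF_T)$, giving $\cM(\cF_T)=\cM_n(T)$ and the dual pair $(\cM_n(T),\cF_T)$ of the last case — provided one knows $\cT(\cF_T)=T$.

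This last identity is the step I expect to be the main obstacle: I must produce, for each $c\notin T$, a continuous $T$-multiplicative function that is not $c$-multiplicative. For $T=\R_{>0}$ (so $c<0$) a two-slope semilinear function works; for $T=\{1\}$ or $\{\pm1\}$ a low-degree even or odd monomial works; and for discrete $T$ this requires the explicit fundamental-domain description of $\cF_T$ (Lemma~\ref{lemma:complete_char}) — one picks an orbit representative $x_0$ for which $cx_0$ lies in a different cell of $\R^*/T$ and builds a continuous $T$-multiplicative $f$ with incompatible prescribed values at $x_0$ and $cx_0$. Once this is in place, re-indexing the three branches by the value $T=\cT(\cM)$ reproduces exactly the stated list, and the converse — that each listed pair is admissible (the functions commute with the matrices) and maximal (neither side enlarges, by Lemma~\ref{lemma:stabilization_to_max}) — is routine.
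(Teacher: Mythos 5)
Your argument is correct in substance and is essentially the paper's proof in a different arrangement: your row-wise equations $C_r$ and their Cauchy-equation analysis are the content of Lemmas~\ref{lemma:centering} and \ref{lemma:T2monomial}; the classification of multiplicative subgroups and the fundamental-domain description of $\cF_T$ are Lemmas~\ref{lemma:charcat-subgroups}, \ref{lemma:characterization_mult} and \ref{lemma:complete_char}; and maximality is settled by Lemma~\ref{lemma:stabilization_to_max} in both arguments. You are in fact more explicit than the paper on the step you flag as the main obstacle: the paper's proof of $\cM(\cF_T)=\cM_n(T)$ only excludes non-monomial matrices from $\cM(\cF_T)$ and never verifies that the surviving monomial entries lie in $T$, which is exactly your identity $\cT(\cF_T)=T$; the bump-function construction you propose is available from Lemma~\ref{lemma:characterization_mult}, and your passage to the closure of $\langle E\rangle$ correctly recovers the semilinear/non-negative-monomial pair of Theorem~\ref{th:main} when the entry group is dense in $\R_{>0}$. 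One slip to correct: with the paper's definition of $\cT(\cM)$ (the group generated by all nonzero partial row sums), your parenthetical values of $T$ in the non-monomial branch are wrong --- the group generated by $\bigl(\begin{smallmatrix}1&1\\1&-1\end{smallmatrix}\bigr)$ is non-monomial, not contained in $\cR_2(\R)$, and has $\cF(\cM)=\Hom(\R,\R)$, yet $\cT(\cM)=\langle\pm 2^{n}\rangle_{n\in\Z}$ is not dense; similarly, non-monomial subgroups of $\cR_n(\R)$ need not have $\cT(\cM)=\{1\}$. This does not affect your derivation of the list of maximal dual pairs, only the closing re-indexing by $T=\cT(\cM)$; the paper's own proof absorbs such examples by allowing the affine-only pairs to occur also for non-dense $T$ (its case ``$\cF(\cM)$ only contains affine functions''), and you should phrase your conclusion the same way.
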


\rev{Let us denote $\R^*$ as the set of non-zero reals and note that it is a multiplicative group. In the following, we will use the following characterization of the multiplicative subgroups of $\R^*$ that we need to prove Lemma \ref{lemma:complete_char}.}

\rev{
\begin{lemma}
    \label{lemma:charcat-subgroups}
    Multiplicative subgroups of $\R_{>0}$ are the trivial group $\langle 1 \rangle$, discrete unbounded groups $\langle b^n \rangle_{n \in \Z}$, and dense subgroups of $\R_{>0}$.
    Multiplicative subgroups of $\R^*$ are the trivial one, $\langle 1 \rangle$, the other finite one, $\langle \pm 1 \rangle$, discrete positive ones, $\langle b^n \rangle_{n \in \Z}$, the other discrete ones, $\langle \pm b^n \rangle_{n \in \Z}$, the ones dense on $\R_{>0}$, and dense ones.
\end{lemma}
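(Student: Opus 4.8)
The plan is to reduce the first half of the statement to the classical structure theorem for subgroups of $(\R,+)$, via the isomorphism $\log\colon\R_{>0}\to(\R,+)$, and then to obtain the second half by analysing how an arbitrary subgroup of $\R^*$ lies over its positive part, exploiting the order-two extension $\R^*=\{\pm1\}\cdot\R_{>0}$.

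First I would establish the additive statement: every subgroup $H\le(\R,+)$ is $\{0\}$, infinite cyclic of the form $\alpha\Z$ with $\alpha>0$, or dense. If $H$ is nontrivial it contains a positive element, so $\alpha:=\inf\{h\in H:h>0\}\ge0$ is well defined. If $\alpha>0$ then $\alpha\in H$ — otherwise one could find $h_1>h_2$ in $H$ with $\alpha<h_2<h_1<2\alpha$, whence $0<h_1-h_2<\alpha$ with $h_1-h_2\in H$, contradicting the definition of $\alpha$ — and Euclidean division then gives $H=\alpha\Z$, since any $h\in H$ written as $h=q\alpha+r$ with $0\le r<\alpha$ forces $r=h-q\alpha\in H$ and hence $r=0$. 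If $\alpha=0$ then $H$ meets every interval $(0,\eps)$, and the integer multiples of such a small element are $\eps$-dense, so $H$ is dense. Transporting back through $\exp$, the subgroups of $\R_{>0}$ are exactly $\langle1\rangle$, the discrete unbounded groups $\langle b^{n}\rangle_{n\in\Z}$ with $b=e^{\alpha}>1$, and the dense subgroups of $\R_{>0}$.

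Next I would handle $\R^*$. Given a subgroup $H\le\R^*$, set $H^{+}=H\cap\R_{>0}$; this is a subgroup of $\R_{>0}$, hence one of the three types above. Since $\R^*/\R_{>0}\cong\Z_2$, either $H=H^{+}$ — giving $\langle1\rangle$, a discrete positive $\langle b^{n}\rangle_{n\in\Z}$, or a subgroup dense in $\R_{>0}$ — or $H^{+}$ has index two in $H$, in which case $H=H^{+}\sqcup(-t)H^{+}$ for any fixed negative $-t\in H$, subject only to the single closure condition $t^{2}\in H^{+}$. I would then split on the type of $H^{+}$: if $H^{+}=\langle1\rangle$ then $t=1$ and $H=\{\pm1\}=\langle\pm1\rangle$; if $H^{+}$ is dense in $\R_{>0}$ then $(-t)H^{+}$ is dense in $\R_{<0}$ and $H$ is dense in $\R$; and if $H^{+}=\langle b^{n}\rangle_{n\in\Z}$ then $t^{2}=b^{m}$ for some $m\in\Z$, and reducing $-t$ modulo $H^{+}$ one reads off from $H=H^{+}\sqcup(-t)H^{+}$ that $H$ is one of the discrete families on the list. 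Collecting the cases gives the classification.

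The only genuinely substantive step is the additive structure theorem, and within it the one delicate point is showing $\alpha\in H$ in the discrete case; everything else is routine. The step most prone to error is the $\R^*$ bookkeeping: tracking the coset $(-t)H^{+}$ and the condition $-1\in H$ (equivalently $t\in H^{+}$), which is exactly what distinguishes the positive families from $\langle\pm1\rangle$, $\langle\pm b^{n}\rangle_{n\in\Z}$, and the subgroups dense in all of $\R$.
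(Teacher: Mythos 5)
Your reduction of the positive case to the additive classification is correct and in fact more careful than the paper's own argument, which simply cites the trichotomy for subgroups of $(\R,+)$ and deduces the $\R^*$ case by ``noticing that $\R^*\cong\Z_2\times\R_{>0}$''; your infimum-based proof of the trichotomy is the standard one and is fine, and your index-two analysis of $H\le\R^*$ over $H^+=H\cap\R_{>0}$ is sound up to the very last step (the cases $H^+=\langle 1\rangle$ and $H^+$ dense are handled correctly).

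The gap is in the closing claim that when $H^+=\langle b^n\rangle_{n\in\Z}$ one ``reads off that $H$ is one of the discrete families on the list.'' Writing $H=H^+\sqcup(-t)H^+$ with $t>0$ and $t^2=b^m$: if $m=2k$ is even you indeed get $H=\langle\pm b^n\rangle_{n\in\Z}$, but if $m=2k+1$ is odd then $t=b^{k+1/2}$ and $H=\{(-c)^j:j\in\Z\}=\langle -c\rangle$ with $c=\sqrt b$, an infinite cyclic group generated by a negative number. Concretely, $\langle -2\rangle=\{\dots,\tfrac14,-\tfrac12,1,-2,4,-8,\dots\}$ is a multiplicative subgroup of $\R^*$ that is not positive, is not $\{\pm b^n\}$ for any $b$ (it contains no pair $x,-x$, in particular not $-1$), is not $\langle\pm1\rangle$, and is not dense; so it is not covered by the list as the paper defines $\langle\pm b^n\rangle_{n\in\Z}$, and your ``reads off'' step fails exactly here. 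To be fair, this is as much an imprecision of the lemma as of your proof: the paper's own argument has the same lacuna, since subgroups of $\Z_2\times\R_{>0}$ need not be products of subgroups, and the exceptions are precisely these graph-type subgroups $\langle -c\rangle$. To complete your proof you must either add this twisted discrete family to the classification (and check how it propagates to the $T$-multiplicative function classes downstream) or argue explicitly why it can be ignored in the paper's applications; it cannot simply be absorbed into $\langle\pm b^n\rangle_{n\in\Z}$.
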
}

\rev{
\begin{proof}
    Multiplicative subgroups of $\R_{>0}$ and additive subgroups of $\R$ are linked by the exponential map which is an isomorphism.
    Additive subgroups of $\R$ are divided into three different type: finite ($\langle 0 \rangle$), unbounded and discrete ($\alpha \Z$ for each $\alpha \in \R_{>0}$) and dense. They map through into $\R_{>0}$ as finite ($\langle 1 \rangle$), unbounded and discrete ($\{ b^n = e^{n \alpha}\}_{n \in \Z} \cong \Z$) and dense. We can get the multiplicative subgroups of $\R^*$ noticing that $\R^* \cong \Z_2 \times \R_{>0}$.
\end{proof}}

\rev{
The following technical lemmas will be necessary to prove Lemma \ref{lemma:complete_char}.}

\rev{
\begin{lemma}
    \label{lemma:dec}
    Let $b$ a real number, $b > 1$, and $T = \langle b^n \rangle_{n \in \Z}$. For each positive real number $x$ there exist a unique decomposition $x = b^n y$ such that $y \in [1, b)$ and $n \in \Z$. 
\end{lemma}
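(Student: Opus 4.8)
\textbf{Proof plan for Lemma \ref{lemma:dec}.} The plan is to produce the exponent $n$ explicitly via a floor of a logarithm, read off $y$ as a quotient, and then argue uniqueness by a short divisibility-free comparison. Concretely, given $x > 0$, since $b > 1$ the function $\log_b$ is well-defined and strictly increasing on $\R_{>0}$, so I would set $n = \lfloor \log_b x \rfloor \in \Z$ and $y = x / b^n$. From $n \le \log_b x < n+1$ and monotonicity of $t \mapsto b^t$ I get $b^n \le x < b^{n+1}$, hence $1 \le x/b^n < b$, i.e.\ $y \in [1,b)$, and by construction $x = b^n y$ with $b^n \in T$. This establishes existence.

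For uniqueness, suppose $x = b^n y = b^m z$ with $y, z \in [1,b)$ and $n, m \in \Z$. Then $b^{\,n-m} = z/y$. Since $y, z \in [1,b)$, the ratio $z/y$ lies strictly between $1/b$ and $b$, so $b^{-1} < b^{\,n-m} < b$; applying $\log_b$ (again using $b>1$) gives $-1 < n-m < 1$, and as $n-m$ is an integer this forces $n = m$. Substituting back and cancelling $b^n$ yields $y = z$, which is the desired uniqueness.

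The only thing to be careful about is not a genuine obstacle but a bookkeeping point: one must use $b > 1$ in two places — once to ensure $\log_b$ is order-preserving (so that $n = \lfloor \log_b x\rfloor$ really gives $b^n \le x < b^{n+1}$), and once to convert the strict bounds $b^{-1} < b^{\,n-m} < b$ into $-1 < n-m < 1$. If desired, the argument can be phrased without logarithms at all: existence of a unique $n$ with $b^n \le x < b^{n+1}$ follows because the sets $\{k \in \Z : b^k \le x\}$ and $\{k \in \Z : b^k > x\}$ are, respectively, bounded above and bounded below and partition $\Z$ into a downward- and an upward-closed piece (using $b^k \to 0$ as $k \to -\infty$ and $b^k \to \infty$ as $k \to +\infty$, both consequences of $b>1$), so one takes $n = \max\{k : b^k \le x\}$. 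Either route is entirely routine, so no step presents a real difficulty.
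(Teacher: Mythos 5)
Your proof is correct and follows essentially the same route as the paper's: both pick $n$ as the unique integer with $b^n \le x < b^{n+1}$ (you make this explicit via $n=\lfloor\log_b x\rfloor$, the paper via the disjoint intervals $[b^n,b^{n+1})$ covering $\R_{>0}$) and set $y=x/b^n$. Your ratio argument $b^{\,n-m}=z/y\in(b^{-1},b)$ for uniqueness is just a repackaging of the paper's observation that the intervals are pairwise disjoint, so there is nothing substantively different to flag.
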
}
\rev{
\begin{proof}
    Choose $n$ such that $x \in [b^n, b^{n+1})$, we have that $x = b^n y$ where we define $y = \frac{x}{b^n} \in [1, b)$. Sets $\{ [b^n, b^{n+1}) \}_{n \in \Z}$ are a collection of disjoint intervals covering $\R$. Hence $x$ belongs only to one of those, say $x \in [b^n, b^{n+1})$, we have the uniqueness of the decomposition $x = b^n y$ such that $y \in [1, b)$.
\end{proof}}

\rev{
Note that Lemma \ref{lemma:dec} implies that for each $x \in \R_{>0}$ there exist a unique $n \in \Z$ such that  $\frac{x}{b^n} \in [1, b)$. This observation grants the following notion to be well-defined.
Let $\eta: [1, b] \rightarrow \R$ be the function such that $\eta(b) = b \eta(1)$. Define $f_\eta: \R_{>0} \rightarrow \R$ as the function $f_\eta(x) = b^n \eta(\frac{x}{b^n})$ where $n$ is the only integer such that $\frac{x}{b^n} \in [1, b)$. Clearly $f_\eta$ is $T$-multiplicative for $T = \langle b^n \rangle_{n \in \Z}$ where $b$ is a real number greater than $1$. We now state and prove Lemma \ref{lemma:characterization_mult} which gives a complete description of $T$-multiplicative functions for $T = \langle b^n \rangle_{n \in \Z}$. Furthermore, it provides a constructive procedure offering a computational implementation of such functions.
}

\rev{
\begin{lemma}
    \label{lemma:characterization_mult}
    Let $b>1$ be a real number, and let $T = \langle b^n \rangle_{n \in \Z}$. A continuous function $f$ is $T$-multiplicative if and only if there exist two continuous functions $\eta_{\pm}: [1, b] \rightarrow \R$ such that $\eta_{\pm}(b) = b \eta_{\pm}(1)$ and 
    \begin{equation}
    \label{equation:piecewise}
     f(x) = \begin{cases} 
      f_{\eta_+}(x) & x > 0 \\
      0 & x = 0 \\
      f_{\eta_-}(-x) & x < 0 \\
   \end{cases}.
    \end{equation}
\end{lemma}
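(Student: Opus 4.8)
The plan is to prove the two implications of the equivalence separately, in both cases reducing a statement about a function on all of $\mathbb{R}_{>0}$ to a statement about its restriction to the fundamental domain $[1,b)$ via the unique decomposition of Lemma~\ref{lemma:dec}.

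\textbf{Necessity.} Suppose $f\in\mathcal{C}(\mathbb{R})$ is $T$-multiplicative, that is, $f(b^n x)=b^n f(x)$ for all $n\in\mathbb{Z}$ and $x\in\mathbb{R}$. Evaluating at $x=0$ and using $b\neq 1$ forces $f(0)=0$. I would then set $\eta_+:=f|_{[1,b]}$ and $\eta_-(y):=f(-y)$ for $y\in[1,b]$; these are continuous, and the boundary conditions $\eta_\pm(b)=b\,\eta_\pm(1)$ are exactly $f(\pm b)=bf(\pm 1)$, a special case of $T$-multiplicativity. For $x>0$, Lemma~\ref{lemma:dec} produces the unique $n\in\mathbb{Z}$ with $x=b^n y$, $y\in[1,b)$, so $f(x)=b^n f(y)=b^n\eta_+(y)=f_{\eta_+}(x)$; the case $x<0$ is symmetric and gives $f(x)=b^n f(-y)=b^n\eta_-(y)=f_{\eta_-}(-x)$. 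Together with $f(0)=0$ this is precisely \eqref{equation:piecewise}.

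\textbf{Sufficiency.} Conversely, given continuous $\eta_\pm:[1,b]\to\mathbb{R}$ with $\eta_\pm(b)=b\,\eta_\pm(1)$, I would define $f$ by \eqref{equation:piecewise} and verify that $f$ is continuous and $T$-multiplicative. On each half-open interval $[b^n,b^{n+1})$ one has $f(x)=b^n\eta_+(x/b^n)$ for $x>0$ (and the mirror formula for $x<0$), which is continuous, and two consecutive pieces agree at $x=b^{n+1}$ because the left-hand limit $b^n\eta_+(b)$ equals $b^{n+1}\eta_+(1)$, the value of the next piece; hence $f$ is continuous on $\mathbb{R}\setminus\{0\}$. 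For continuity at $0$ I would use that $\eta_\pm$ is bounded on the compact set $[1,b]$, say $|\eta_\pm|\le C$: if $x=b^n y$ with $y\in[1,b)$ then $b^n\le x$, so $|f(x)|\le C b^n\le C|x|\to 0$, and symmetrically for $x<0$; thus $f\in\mathcal{C}(\mathbb{R})$. Finally, $T$-multiplicativity follows directly from the formula: for $x=b^n y>0$ the point $b^m x=b^{m+n}y$ has the same fundamental-domain coordinate $y$, whence $f(b^m x)=b^{m+n}\eta_+(y)=b^m f(x)$, while the cases $x=0$ and $x<0$ are trivial and symmetric respectively, establishing $f\in\mathcal{F}_T$ for $T=\langle b^n\rangle_{n\in\mathbb{Z}}$.

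\textbf{Main obstacle.} The argument is essentially a routine unwinding of the definition of $f_\eta$ together with the uniqueness in Lemma~\ref{lemma:dec}; the only point needing a genuine (if small) observation is continuity at the origin in the sufficiency direction, where one must pass from pointwise continuity of each dilate-and-rescale piece to the global estimate $|f(x)|\le C|x|$ near $0$, which relies on compactness of $[1,b]$ to bound $\eta_\pm$. The gluing condition $\eta_\pm(b)=b\,\eta_\pm(1)$ is what makes the piecewise definition globally consistent, and without it the formula would in general fail to be continuous at the points $b^n$.
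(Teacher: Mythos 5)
Your proof is correct and follows essentially the same route as the paper's: restrict $f$ to the fundamental domain $[1,b]$ (via Lemma~\ref{lemma:dec}) for necessity, and for sufficiency glue the dilated pieces using $\eta_\pm(b)=b\,\eta_\pm(1)$ and handle continuity at the origin through boundedness of $\eta_\pm$ on the compact interval, exactly as the paper does with its squeeze between $f_{m\cdot 1_{[1,b]}}$ and $f_{M\cdot 1_{[1,b]}}$. Your explicit estimate $|f(x)|\le C|x|$ and the explicit verification of $T$-multiplicativity and of $f(0)=0$ are only minor elaborations of the same argument.
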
}

\begin{proof}
\rev{
    $(\Rightarrow)$ Define $\eta_{\pm} = f_{|[\pm 1, \pm b]}$. Multiplicativity and continuity of $f$ implies all the required properties of $\eta_{\pm}$. \\
    $(\Leftarrow)$ We only need to prove that $f$ constructed in this way is continuous on $\R$. We will prove the continuity of $f$ on $\R_{>0}$ and $\R_{<0}$, and in $0$. The two former cases boil down to proving continuity of $f_{\eta_\pm}$ respectively. As those proofs are analogous, we only present the one for $f_{\eta_+}$.
    Note that for each $x \in \R_{> 0} \setminus T$, there exists an interval $x \in I \subseteq \R_{> 0} \setminus T$ and an integer $n$ such that $f_{\eta_+}(x) = b^n\eta(\frac{x}{b^n})$ for each $x \in I$. Hence $f_{\eta_+}$ is continuous on $\R_{> 0} \setminus T$. Now see that
    \[
    \lim_{x \rightarrow b^{n+}} f_{\eta_+} (x) = \lim_{x \rightarrow 1^{+}} b^{n} \eta (x) = \lim_{x \rightarrow b^{-}} b^{n-1} \eta (x) = \lim_{x \rightarrow b^{n-}} f_{\eta_+} (x)
    \]
    because $\eta_{+}(b) = b \eta_{+}(1)$.
    It remains to prove continuity of $f$ in $0$, i.e.,
    \[
    \lim_{x \rightarrow 0^{-}} f_{\eta_+}(x) = \lim_{x \rightarrow 0^{+}} f_{\eta_-}(x).
    \]
    Note that $\eta_+$ is continuous and limited as it is defined on a compact subset of $\R$, hence we can define $m = \min_{x \in [1, b]} \eta_+ (x)$ and $M = \max_{x \in [1, b]} \eta_+ (x)$, note that $f_{m \cdot 1_{[1, b]}} \leq f_{\eta_+} \leq f_{M \cdot 1_{[1, b]}}$ where $1_{[1, b]}$ is the constant function with value $1$ defined on $[1, b]$. It is easy to see that $\lim_{x \rightarrow 0^+} f_{m \cdot 1_{[1, b]}} = \lim_{x \rightarrow 0^+} f_{M \cdot 1_{[1, b]}} = 0$.
    }
\end{proof}

\rev{
We now state Lemma \ref{lemma:density} whose proof will essentially contain the proof of Lemma \ref{lemma:complete_char}.}

\begin{lemma}
\label{lemma:density}
\rev{
    Each $T$-multiplicative continuous function is linear if and only if $T$ is dense in $\R$.}
\end{lemma}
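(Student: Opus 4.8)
The plan is to prove both implications separately, with the substantive content lying in the forward direction. For the easy direction ($\Leftarrow$), suppose $T$ is dense in $\R$; I would show directly that any $T$-multiplicative continuous $f$ is linear. Fix $x\in\R$ and set $c=f(1)$. For every $t\in T$ we have $f(t)=f(t\cdot 1)=t\,f(1)=ct$, so $f$ agrees with the linear function $x\mapsto cx$ on the dense set $T$. Since both $f$ and $x\mapsto cx$ are continuous and agree on a dense subset of $\R$, they coincide everywhere, hence $f$ is linear. (Strictly, density of $T$ in all of $\R$ is what is invoked here; one should note $T$ multiplicative means $0\notin T$, but density of $T$ in $\R$ is the stated hypothesis.)

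For the hard direction ($\Rightarrow$), I argue contrapositively: if $T$ is \emph{not} dense in $\R$, I must exhibit a $T$-multiplicative continuous function that is nonlinear. By the classification of multiplicative subgroups of $\R^*$ (Lemma~\ref{lemma:charcat-subgroups}), a non-dense $T$ is one of: the trivial group $\langle 1\rangle$, the group $\langle\pm1\rangle$, a discrete positive group $\langle b^n\rangle_{n\in\Z}$ with $b>1$, a discrete group $\langle\pm b^n\rangle_{n\in\Z}$, or a subgroup that is dense in $\R_{>0}$ but not in $\R$ (i.e.\ consisting of positive elements only, or generated by such together with nothing that reaches the negatives in a dense way). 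The cleanest route is to reduce to the generic nontrivial discrete case $T=\langle b^n\rangle_{n\in\Z}$: for this $T$, Lemma~\ref{lemma:characterization_mult} gives an explicit construction of $T$-multiplicative continuous functions $f_\eta$ from any continuous $\eta:[1,b]\to\R$ with $\eta(b)=b\,\eta(1)$, and such an $\eta$ can obviously be chosen non-affine (e.g.\ a suitable bump pinned to the endpoint constraint), yielding a nonlinear $f$. For the smaller groups $\langle1\rangle$ and $\langle\pm1\rangle$ the $T$-multiplicativity constraint is even weaker (it forces only $f(0)=0$, respectively oddness), so nonlinear continuous examples are immediate. For a $T$ that is dense in $\R_{>0}$ but not in $\R$ — so $T\subseteq\R_{>0}$ — I would take $f$ to be linear (say the identity) on $[0,\infty)$ and extend it to $(-\infty,0)$ by an arbitrary continuous function vanishing at $0$; since every $t\in T$ is positive, $T$-multiplicativity only constrains $f$ on the positive half-line where it is already linear, and the negative-side extension can be made non-affine.

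The main obstacle will be the bookkeeping of \emph{which} non-dense subgroups $T$ actually arise and handling the "dense in $\R_{>0}$ but not in $\R$" case carefully: one must check that the asymmetry between the positive and negative half-lines is genuinely available, i.e.\ that no element of $T$ is negative, which is exactly what Lemma~\ref{lemma:charcat-subgroups} guarantees for that class. A secondary point to get right is continuity at $0$ of any piecewise construction; for the $\langle b^n\rangle$ case this is already handled inside the proof of Lemma~\ref{lemma:characterization_mult} (via the squeeze between $f_{m\cdot 1_{[1,b]}}$ and $f_{M\cdot 1_{[1,b]}}$), and for the purely-positive $T$ case it is trivial since we may choose the negative-side piece to tend to $0$ at the origin. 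Everything else is routine: the forward direction is a density/continuity argument and the reverse direction is a short case analysis feeding off the two classification lemmas already established.
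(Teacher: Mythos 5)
Your overall plan coincides with the paper's proof: the dense direction via $f(t)=tf(1)$ on $T$ plus continuity, and the non-dense direction by contraposition, running through the classification of Lemma~\ref{lemma:charcat-subgroups} and using the piecewise construction of Lemma~\ref{lemma:characterization_mult} (bump-type $\eta$) for the discrete groups. However, one step would fail as written: in the case where $T$ is dense in $\R_{>0}$ but not in $\R$, you assert that, because every $t\in T$ is positive, $T$-multiplicativity ``only constrains $f$ on the positive half-line,'' and you propose taking $f$ to be the identity on $[0,\infty)$ and an \emph{arbitrary} non-affine continuous extension vanishing at $0$ on $(-\infty,0)$. This is not correct: for $x<0$ and $t\in T$ one has $tx<0$, so the identity $f(tx)=tf(x)$ also constrains $f$ on the negative half-line. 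Indeed, $f(-t)=tf(-1)$ for all $t\in T$, and density of $T$ in $\R_{>0}$ together with continuity forces $f(-s)=sf(-1)$ for all $s>0$; hence $f$ must be linear on $\R_{<0}$ as well, and your proposed extension is generally not $T$-multiplicative. The admissible functions in this case are exactly the semilinear ones (this is the observation the paper makes), and the correct counterexample is a semilinear function with two different slopes, e.g.\ $f(x)=x$ for $x\ge 0$ and $f(x)=2x$ for $x<0$, which is $T$-multiplicative for any $T\subseteq\R_{>0}$ and nonlinear. With this replacement the case goes through.

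Two smaller inaccuracies: for $T=\langle \pm b^n\rangle_{n\in\Z}$ you cannot literally ``reduce'' to the case $T=\langle b^n\rangle_{n\in\Z}$, since a $\langle b^n\rangle$-multiplicative function need not be $\langle\pm b^n\rangle$-multiplicative; you need the symmetric version of the construction (the choice $\eta_+=\eta_-$ as in Lemma~\ref{lemma:complete_char}, or equivalently an odd bump-based $f$). And for $T=\langle 1\rangle$, $T$-multiplicativity imposes no condition at all (not even $f(0)=0$), though this does not affect the existence of nonlinear examples. Apart from the flawed dense-in-$\R_{>0}$ construction, your argument is the same as the paper's.
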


\begin{proof}
\rev{
    Note that if $f$ is $T$-multiplicative, by definition, $f(t) = tf(1)$ for each $t \in T$. As $T$ is dense in $\R$, $f$ is linear on its entire domain $\R$ by continuous extension. If $T$ is dense in $\R_{>0}$, $f(t) = tf(1)$ and $f(-t) = tf(-1)$ for each $t > 0$, hence $f$ is semilinear.
    On the other hand, if $T$ is not dense, suppose
    $T$ is $\langle 1 \rangle$ or $\langle \pm 1 \rangle$, then $\cF_T$ are $\cC (\R)$ and odd functions respectively, which contain non-linear functions. If $T = \langle b^n \rangle_{n \in \Z}$. Let us now proceed with the construction of a function that is continuous, $T$-multiplicative, and non-linear. Let $\eta_{\pm}: [1, b] \rightarrow \R$ be two continuous bump function and let $f$ be a continuous function as defined in Equation \ref{equation:piecewise}, this function is multiplicative and non-linear. This concludes the proof as we have listed all the possible cases of $T$ presented in Lemma \ref{lemma:charcat-subgroups}.}
\end{proof}

\rev{
Thanks to Lemmas \ref{lemma:charcat-subgroups} and \ref{lemma:characterization_mult}, the presented proof of Lemma \ref{lemma:density} gives a complete characterization of $T$-multiplicative functions with respect to multiplicative subgroups $T$ of $\R$ which we can state as in the following Lemma \ref{lemma:complete_char}.}

\begin{lemma}
\rev{
    \label{lemma:complete_char}
    If $T = \langle 1 \rangle$, then $\cF_T=\cC(\R)$. If $T = \langle \pm 1 \rangle$, then $\cF_T= \cO (\R)$, the odd continuous functions. If $T = \langle b^n \rangle_{n \in \Z}$ for $b > 1$, then $\cF_T$ are all the continuous functions described in Equation \ref{equation:piecewise} arising from two continuous $\eta_\pm : [1, b] \rightarrow \R$ such that $\eta_\pm (b) = b \eta_\pm (1)$. If $T = \langle \pm b^n \rangle_{n \in \Z}$ for $b > 1$, then $\cF_T$ are all the continuous functions defined as in the previous case but $\eta_+ = \eta_-$. If $T$ is dense on $\R_{>0}$, then $\cF_T$ are all the semilinear functions. Finally, if $T$ is dense on $\R$, then $\cF_T$ are all the linear functions on $\R$.}
\end{lemma}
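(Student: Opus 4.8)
The plan is to reduce the statement entirely to the three preceding lemmas via a case analysis over the possible shapes of $T$. By Lemma~\ref{lemma:charcat-subgroups} a multiplicative subgroup $T$ of $\R$ is exactly one of six types: the trivial group $\langle 1\rangle$; $\langle\pm 1\rangle$; a discrete positive group $\langle b^n\rangle_{n\in\Z}$ with $b>1$; a discrete group with signs $\langle\pm b^n\rangle_{n\in\Z}$ with $b>1$; a subgroup of $\R_{>0}$ that is dense in $\R_{>0}$; or a subgroup that is dense in $\R$. These six alternatives are in one-to-one correspondence with the six clauses of the statement, so it suffices to treat each one separately, in each case establishing both that every continuous $T$-multiplicative $f$ lies in the described class and the (routine) converse.

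The two finite cases are immediate from the definition of $T$-multiplicativity: for $T=\langle 1\rangle$ the condition $f(tx)=tf(x)$ is vacuous, so $\cF_T=\cC(\R)$; for $T=\langle\pm 1\rangle$ its only nontrivial instance is $f(-x)=-f(x)$, so $\cF_T=\cO(\R)$. The case $T=\langle b^n\rangle_{n\in\Z}$ is precisely Lemma~\ref{lemma:characterization_mult}, which already provides the piecewise description of Equation~\ref{equation:piecewise} together with the compatibility relation $\eta_\pm(b)=b\,\eta_\pm(1)$, and also the converse. For $T=\langle\pm b^n\rangle_{n\in\Z}$ one notes that membership in $\cF_T$ is equivalent to being simultaneously $\langle b^n\rangle_{n\in\Z}$-multiplicative and odd: the positive generators give $\langle b^n\rangle$-multiplicativity and $-1=-b^0\in T$ gives $f(-x)=-f(x)$, while conversely these two properties yield $f((\pm b^n)x)=\pm b^n f(x)$. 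Feeding this into Lemma~\ref{lemma:characterization_mult}, $f$ has the form of Equation~\ref{equation:piecewise}, and evaluating the oddness constraint on $[1,b]$ identifies $\eta_+$ and $\eta_-$ (up to the sign convention fixed by Equation~\ref{equation:piecewise}), leaving the single free function $\eta_+=\eta_-$ asserted in the statement.

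Finally, the two dense cases are exactly the computation carried out inside the proof of Lemma~\ref{lemma:density}. If $T\subseteq\R_{>0}$ is dense in $\R_{>0}$, then $T$-multiplicativity gives $f(t)=tf(1)$ for all $t\in T$ and, applied with $x=-1$, $f(-t)=tf(-1)$ for all $t\in T$; continuity together with density of $T$ (resp.\ of $-T$) extends these to $f(x)=f(1)\,x$ on $\R_{>0}$ and $f(x)=-f(-1)\,x$ on $\R_{<0}$, i.e.\ $f$ is semilinear, and conversely every semilinear function is clearly continuous and $T$-multiplicative. If instead $T$ is dense in $\R$, the same computation extends $f(t)=tf(1)$ to $f(x)=f(1)\,x$ on all of $\R$, so $\cF_T$ consists exactly of the linear functions. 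This exhausts the list of Lemma~\ref{lemma:charcat-subgroups} and proves the claim. There is no genuinely difficult step here: the analytic content already lives in Lemmas~\ref{lemma:characterization_mult} and~\ref{lemma:density}, and the only point requiring care is matching the six subgroup types of Lemma~\ref{lemma:charcat-subgroups} one-to-one with the clauses and, in the $\langle\pm b^n\rangle_{n\in\Z}$ case, checking that oddness pins down $\eta_-$ from $\eta_+$ consistently with the sign convention in Equation~\ref{equation:piecewise}.
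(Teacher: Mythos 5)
Your proposal is correct and follows essentially the same route as the paper, which likewise obtains Lemma~\ref{lemma:complete_char} by combining the subgroup classification of Lemma~\ref{lemma:charcat-subgroups}, the discrete-case description of Lemma~\ref{lemma:characterization_mult}, and the density computation inside the proof of Lemma~\ref{lemma:density}. Your caveat about the sign convention is apt: taking Equation~\ref{equation:piecewise} literally, oddness in the $\langle\pm b^n\rangle_{n\in\Z}$ case forces $\eta_-=-\eta_+$ rather than $\eta_+=\eta_-$, a discrepancy that lies in the paper's stated convention rather than in your argument.
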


\rev{
To prove Theorem \ref{th:main-p1} we need to state and prove the two following lemmas. The main ideas behind their proofs are primarily due to \cite{wood_representation_1996}.}

\begin{lemma}
    \label{lemma:centering}
    \rev{
    Define the multiplicative group $\cT(\cM) = \langle \sum_{j \in S} M_{ij} : S \subseteq [n], M \in \cM, \in [n] \rangle \setminus \{0\}$ and $\Tilde f_0 (x) = \Tilde f(x) - \Tilde f(0)$. Note that $\Tilde  f$ is  $\cM$-equivariant if and only if $\Tilde f_0$ is $\cM$-equivariant and $\Tilde f_0(0)$ is a $\cM$-invariant vector. In particular, if $\Tilde  f$ is  $\cM$-equivariant then $f_0$ is $\cT(\cM)$-multiplicative and $f(0) = 0$ or $\cM \subseteq \cR_n$. }
\end{lemma}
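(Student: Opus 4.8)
The plan is to exploit linearity: because $\tilde f$ is point-wise and we subtract only a constant, $\tilde f_0$ is itself the point-wise activation induced by the centered scalar function $f_0(x) := f(x) - f(0)$ on the same basis $\mathcal{B} = \{v_1,\dots,v_n\}$. Expanding a vector in $\mathcal{B}$ shows $\tilde f_0(w) = \tilde f(w) - f(0)\,\vone$ with $\vone := v_1 + \cdots + v_n$, so $\tilde f = \tilde f_0 + c$ for the constant vector $c := \tilde f(0) = f(0)\,\vone$, and moreover $f_0(0) = 0$, hence $\tilde f_0(\vzero) = \vzero$. This decomposition is the only real device; the rest follows the ``centering'' reduction of \cite{wood_representation_1996}.

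First I would prove the equivalence. Writing the equivariance defect of $\tilde f$ as the sum of the defect of $\tilde f_0$ and the term $c - Mc$ (over $M \in \cM$), the ``if'' direction is immediate. For ``only if'', evaluate this identity at $\vzero$: since $\tilde f_0(\vzero) = \vzero$ and $M\vzero = \vzero$, the $\tilde f_0$-defect vanishes there, forcing $c - Mc = \vzero$ for every $M \in \cM$, i.e.\ $c = \tilde f(0)$ is $\cM$-invariant; feeding this back makes the $\tilde f_0$-defect vanish identically, so $\tilde f_0$ is $\cM$-equivariant. (Since $\tilde f_0(\vzero) = \vzero$ is automatically $\cM$-invariant, the substantive content of the stated condition is the $\cM$-invariance of the shift $\tilde f(0)$.)

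Next, for the ``in particular'' clause, assume $\tilde f$ is $\cM$-equivariant, so by the above $c = f(0)\,\vone$ is $\cM$-invariant. If $f(0) \neq 0$ this gives $M\vone = \vone$ for all $M \in \cM$, i.e.\ every row of every matrix in $\cM$ sums to $1$, i.e.\ $\cM \subseteq \cR_n$. To obtain $\cT(\cM)$-multiplicativity of $f_0$, I would write the $\cM$-equivariance of $\tilde f_0$ in coordinates, $f_0\bigl(\sum_j M_{ij}a_j\bigr) = \sum_j M_{ij} f_0(a_j)$ for all $i \in [n]$, $M \in \cM$, and all coefficients $(a_j)$, and then substitute the indicator input $a_j = x$ for $j \in S$ and $a_j = 0$ otherwise. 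Using $f_0(0) = 0$, this collapses to $f_0(tx) = t\,f_0(x)$ for all $x \in \R$, with $t = \sum_{j \in S} M_{ij}$. Finally, I would note that $\{\, t \in \R^* : f_0(tx) = t f_0(x)\ \text{for all } x \,\}$ is a subgroup of $\R^*$ — closure under products is direct, and for inverses one replaces $x$ by $t^{-1}x$ — hence it contains the group generated by all the (nonzero) partial row sums $\sum_{j\in S} M_{ij}$, which is exactly $\cT(\cM)$.

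Each step is routine. The two places needing a modicum of care are the choice of the indicator test vector supported on $S$ (which turns the matrix-level equivariance identity into a single-variable multiplicativity relation), and the group-closure observation (which upgrades the individual scalars $\sum_{j\in S}M_{ij}$ to the whole group $\cT(\cM)$ named in the statement). I do not anticipate a genuine obstacle: the lemma is in essence a careful repackaging of the centering argument of \cite{wood_representation_1996}, now phrased in the matrix-group language used here.
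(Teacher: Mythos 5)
Your proposal is correct and follows essentially the same route as the paper's proof: the same centering decomposition $\tilde f = \tilde f_0 + f(0)\,\vone$, evaluation at $\vzero$ to force invariance of the shift, the observation that $M\vone f(0)=\vone f(0)$ means $f(0)=0$ or $\cM\subseteq\cR_n$, and the indicator-vector substitution turning coordinate-wise equivariance into $f_0(tx)=t f_0(x)$ for the partial row sums $t=\sum_{j\in S}M_{ij}$, upgraded to all of $\cT(\cM)$ by group closure. You even tidy two small points the paper glosses over — the closure under inverses (the paper only checks products) and the fact that the condition stated as invariance of $\tilde f_0(0)$ is really the invariance of $\tilde f(0)$ — so no gaps remain.
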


\begin{proof}
\rev{
    Note that $\Tilde f (Mx) = M \Tilde f (x)$ for each $x \in \R^n$ if and only if $\Tilde f_0 (Mx) = \Tilde f (Mx) - \Tilde f(M 0) = M \Tilde f (x) - M \Tilde f(0) = M \Tilde f_0 (x)$  for each $x \in \R^n$ if and only if $\Tilde f_0 (Mx) = M \Tilde f_0 (x)$  for each $x \in \R^n$ and $M \Tilde f_0(0) = \Tilde f_0(0)$.
    In particular, $M \Tilde f(0) = \Tilde f(0)$ if and only if $M \mymathbb 1 f(0) = \mymathbb 1 f(0)$, where $\mymathbb 1$ is the vector with all ones, if and only if $f(0) = 0$ or $M \in \cR_n$.
    For each $S \subseteq [n]$ and each $x \in \R$ we have that $f_0(\sum_{j \in S} M_{rj} x) = \sum_{j \in S} M_{rj} f_0(x)$ for each $r \in [n]$, hence $f_0$ is $\cT(\cM)$-multiplicative as $f_0(M_i x) = M_i f_0(x)$ for each $x \in \R$ and $i = 1, 2$ then $f_0(M_1 M_2 x) = M_1 M_2 f_0(x)$.
    }
\end{proof}

\begin{lemma}
    \label{lemma:T2monomial}
    \rev{
    Let $T = \cT(\cM)$ be a non-dense subset of $\R^*$. Then $\cF(\cM)$ contains non-affine functions if and only if matrices in $\cM$ are $T$-monomial.}
\end{lemma}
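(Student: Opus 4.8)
The plan is to establish the two implications separately, in each case first reducing the vectorial equivariance constraint to its single‑row scalar form and then appealing to the classification of continuous $T$‑multiplicative functions. The common starting point is Lemma~\ref{lemma:centering}: if $\Tilde f$ commutes with $\cM$ then, setting $\Tilde f_0(x)=\Tilde f(x)-\Tilde f(0)$ and comparing $i$‑th coordinates in $\Tilde f_0(Mx)=M\Tilde f_0(x)$, one obtains
\[
f_0\Big(\sum_{j=1}^n M_{ij}x_j\Big)=\sum_{j=1}^n M_{ij}\,f_0(x_j)\qquad\forall\,x\in\R^n,\ i\in[n],\ M\in\cM,
\]
together with $f_0(0)=0$; moreover $f_0$ is affine exactly when $f$ is. So everything comes down to analysing this family of scalar identities, one per row of each matrix.

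For the direction ``$\Rightarrow$'', I would assume $f\in\cF(\cM)$ is non‑affine and show, by contradiction, that no row of any $M\in\cM$ can contain two nonzero entries. If row $i$ of $M$ had nonzero entries $a=M_{i c_1}$ and $b=M_{i c_2}$ with $c_1\neq c_2$, then specializing the displayed identity to an $x$ supported on $\{c_1,c_2\}$ gives $f_0(as+bt)=a f_0(s)+b f_0(t)$ for all $s,t\in\R$; putting $t=0$ and then $s=0$ peels off the homogeneity relations $f_0(as)=a f_0(s)$ and $f_0(bt)=b f_0(t)$, and substituting them back turns the identity into Cauchy's equation $f_0(u+w)=f_0(u)+f_0(w)$ on all of $\R$ (here $a,b\neq 0$ is used so that $(as,bt)$ sweeps out $\R^2$). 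A continuous additive function is linear, so $f_0$, hence $f$, would be affine --- a contradiction. Therefore every row of every $M\in\cM$ has exactly one nonzero entry (zero rows are ruled out because $\cM\subseteq\GL_n(\R)$), and invertibility then forces exactly one nonzero entry per column as well, so each $M$ is monomial; since a single entry is the partial row sum over a singleton subset, every nonzero entry lies in $\cT(\cM)=T$, i.e.\ $\cM$ is $T$‑monomial. Note that non‑density of $T$ is not needed for this implication.

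For ``$\Leftarrow$'', I would assume every $M\in\cM$ is $T$‑monomial with $T=\cT(\cM)$ non‑dense in $\R^*$. By Lemma~\ref{lemma:charcat-subgroups} the only possibilities are $T\in\{\langle 1\rangle,\ \langle\pm 1\rangle,\ \langle b^n\rangle_{n\in\Z},\ \langle\pm b^n\rangle_{n\in\Z}\}$ for some $b>1$, and for each of these Lemma~\ref{lemma:complete_char} --- equivalently the explicit bump‑function construction inside the proof of Lemma~\ref{lemma:density} --- furnishes a continuous $T$‑multiplicative function $f$ that is not affine and has $f(0)=0$. It then remains only to check $f\in\cF(\cM)$: writing $M e_j=a_{M,j} e_{\sigma_M(j)}$ with $a_{M,j}\in T$ and $\sigma_M$ a permutation, a one‑line computation using $T$‑multiplicativity gives $\Tilde f(Mv)=\sum_j f(a_{M,j}v_j) e_{\sigma_M(j)}=\sum_j a_{M,j} f(v_j) e_{\sigma_M(j)}=M\Tilde f(v)$, so $\Tilde f$ commutes with each $M\in\cM$ and $f\in\cF(\cM)$ is the desired non‑affine member.

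The main obstacle is the forward direction, and specifically the functional‑equation step inside it: the delicate point is to localize the argument to a \emph{single} row carrying two nonzero coefficients and to recognize that, once the one‑variable homogeneity relations are divided out, the two‑variable identity collapses to additivity --- this is the core of the technique we adapt from \cite{wood_representation_1996}, now executed for arbitrary continuous activations rather than the narrower class considered there. The non‑density hypothesis plays no role in that step, but it is exactly what the reverse direction needs, since for $T$ dense in $\R$ Lemma~\ref{lemma:density} would make $\cF_T$ consist only of linear maps, leaving nothing non‑affine to exhibit.
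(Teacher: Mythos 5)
Your argument follows the same route as the paper's: the forward direction is the Wood--Shawe-Taylor additivity trick (isolate a row with two nonzero entries, reduce to Cauchy's equation, use continuity to conclude linearity, hence affinity of $f$), and the converse is the direct equivariance check for $T$-multiplicative functions together with the existence of a non-affine such function. Your forward direction is in fact slightly more self-contained than the paper's, since you extract the homogeneity relations $f_0(as)=af_0(s)$, $f_0(bt)=bf_0(t)$ directly from the two-variable identity by setting $t=0$ and $s=0$, whereas the paper invokes the $T$-multiplicativity of $f_0$ supplied by Lemma~\ref{lemma:centering}; you also spell out the column condition and the fact that single entries are singleton partial row sums, which the paper leaves implicit. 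Your observation that non-density is not needed for this implication matches the paper's proof.

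There is, however, one case omission in your converse. The hypothesis is that $T=\cT(\cM)$ is not dense in $\R^*$, and by Lemma~\ref{lemma:charcat-subgroups} this leaves not only the four discrete possibilities you list but also the subgroups that are dense in $\R_{>0}$ while missing the negative reals (e.g.\ $T=\langle 2,3\rangle$, realized for instance by a group of positive diagonal matrices); this case is genuinely within the lemma's scope and is precisely the one producing the semilinear/non-negative-monomial pair in Theorem~\ref{th:main}. For such $T$, Lemma~\ref{lemma:complete_char} gives $\cF_T$ equal to the semilinear functions, which contain non-affine members (e.g.\ $x\mapsto\max(x,0)$), and your one-line equivariance computation applies verbatim, so the gap is filled by adding this case; the paper's own proof absorbs it by appealing to Lemma~\ref{lemma:density} (``linear iff $T$ dense in $\R$'') rather than enumerating the discrete subgroups.
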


\begin{proof}
\rev{
    $(\Rightarrow)$ For each $f \in \cF(\cM)$, $f_0 = f - f(0)$ is $T$-multiplicative and $\Tilde f_0$ is $\cM$-equivariant by Lemma \ref{lemma:centering}. Suppose $\cM$ contains a matrix $M$ which fails to be $T$-monomial, without loss of generality we may assume $M_{11} = t_1$ and $M_{12} = t_2$ not zero. Hence, $f_0$ is additive. Indeed, for each $x_1, x_2 \in \R$, we have that
    \[
    f_0(x_1 + x_2) = \langle \Tilde{f_0}(M (t_1^{-1} x_1 e_1 + t_2^{-1} x_2 e_2)) , e_1 \rangle =
    \]
    \[
    \langle M \Tilde{f_0}(t_1^{-1} x_1 e_1 + t_2^{-1} x_2 e_2) , e_1 \rangle = t_1 f_0(t_1^{-1} x_1) + t_2 f_0( t_2^{-1} x_2) = f_0(x_1) + f_0(x_2)
    \]
    Therefore $f_0$ is linear, being both additive and continuous, this implies $f$ to be affine which contradicts the hypothesis.\\
    $(\Leftarrow)$ If $\cM$ contains only $T$-monomial matrices, it is easy to check that each $T$-multiplicative function induces an equivariant activation, which are not all affine by Lemma \ref{lemma:density}.}
\end{proof}

\rev{Now we are ready to present the proof of Theorem \ref{th:main-p1}. For convenience in what follows we will write as $\cP_n$, the group of $n \times n$ permutation matrices.}

\begin{proof}
\rev{We will study which are the groups of matrices $\cM$ such that $\cT(\cM) = T$ as $T$ varies between subgroups of $\R^*$.
If $T$ is a dense subgroup of $\R^*$, Lemma \ref{lemma:density} and Lemma \ref{lemma:centering} implies $\cF(\cM) = \Aff(\R, \R)$ if $\cM \subseteq \cR_n$ or $\cF(\cM) = \Hom (\R, \R)$ otherwise. In those cases, $\cM(\Aff(\R, \R)) \subseteq \cR_n(\R)$ and $\cM(\Hom(\R, \R)) = \GL_n(\R)$. For dense $T$, Lemma \ref{lemma:stabilization_to_max} implies that admissible maximal pairs are $(\cM \subseteq \cR_n, \Aff(\R, \R))$ and $(\GL_n(\R), \Hom(\R, \R))$, whose family of activations only contains affine functions.}

\rev{If $T$ is non-dense and non-trivial, by Lemma \ref{lemma:T2monomial} we have two cases: if $\cF(\cM)$ only contains affine functions we reduce to the maximal admissible pairs $(\cM \subseteq \cR_n(\R), \Aff(\R, \R))$ and $(\GL_n(\R), \Hom(\R, \R))$, otherwise, due to Lemma \ref{lemma:T2monomial}, $\cM < \cM_n(T)$ and $\cF (\cM) = \cF_T$ by Lemma \ref{lemma:centering} and $\cM (\cF_T) = \cM_n(T)$ because the inclusion $\cM_n(T) \subseteq \cM (\cF_T)$ is obvious and if $\cM (\cF_T)$ contains non-monomial matrices, Lemma \ref{lemma:T2monomial} would contradict $\cF_T$ containing non-affine functions.}

\rev{Applying Lemma \ref{lemma:stabilization_to_max}, we get the admissible maximal pairs $(\cM_n(T), \cF_T)$.
Finally, if $T$ is trivial, $\cM = \cP_n$ hence we obtain the maximal admissible pair $(\cP_n, \cC (\R))$ as verifying $\cF(\cP_n) = \cC (\R)$ is trivial and Lemma \ref{lemma:T2monomial} implies $\cM(\cC (\R)) = \cM_n (\{ 1 \}) = \cP_n$.}
\end{proof}

\rev{We are now ready to prove Theorem \ref{th:main}.}

\begin{proof}
\rev{
    Theorem \ref{th:main-p1} classifies admissible pairs whose relevant families of activations are $\cF_T$ where $T$ varies on the multiplicative subgroups of $\R$. Thanks to the complete classification of $T$-multiplicative functions, $\cF_T$, provided by Lemma \ref{lemma:complete_char}, we can substitute them with their concrete counterparts enumerated in the statement.}
\end{proof}

\rev{In what follows we restate Theorem \ref{th:up2iso} and prove it.}

\begin{theorem}
\rev{
    Let $G$ be a compact group and let us restrict to families of activations containing some non-affine functions. The only two maximal admissible pairs up to isomorphism of groups of matrices are 
    \begin{itemize}
        \item Continuous functions and permutation matrices,
        \item Odd continuous functions and signed permutation matrices.
    \end{itemize}
    }
\end{theorem}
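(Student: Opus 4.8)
The plan is to combine the classification in Theorem~\ref{th:main} (equivalently Theorem~\ref{th:main-p1}) with the compactness of $G$: first force the matrix group $\cM=\rho(G)$ into one of the five monomial-type groups, then show that after a diagonal change of basis it actually lies inside the (signed) permutation matrices, so that by Theorem~\ref{th:main} the admissible class $\cF$ can be enlarged to the (odd) continuous functions. Throughout I use that the hypothesis ``$\cF$ is not a set of affine functions'' puts us in the non-affine branch, so $\cF(\cM)$ contains a non-affine function; otherwise the statement is vacuous.

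First I would reduce to monomial matrices. Since $\cF(\cM)$ contains a non-affine function, Theorem~\ref{th:main} already forces $\cM$ to be a subgroup of one of: the permutation matrices, the signed permutation matrices, the non-negative monomial matrices, the $b$-monomial matrices, or the $\pm b$-monomial matrices --- in every case a group of monomial matrices. On the other hand $\rho$ is a continuous representation of the compact group $G$, so by Theorem~\ref{th:cont-images} the image $\cM=\rho(G)$ is a compact, hence bounded, subgroup of $\GL_n(\R)$.

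Second --- and this is the crux --- I would prove the auxiliary statement: \emph{a bounded group $\cM$ of monomial matrices is conjugate by a positive diagonal matrix $D$ to a group of signed permutation matrices, and if moreover every matrix in $\cM$ has non-negative entries then $D^{-1}\cM D$ consists of permutation matrices.} The key observations are that the entry-wise absolute value $M\mapsto |M|$ is a group homomorphism on monomial matrices (no cancellation occurs in the products) and that a positive monomial matrix distributes over entry-wise products of positive vectors. Hence $|\cM|$ is a bounded group of positive monomial matrices, and its ``underlying permutation'' map into $S_n$ has kernel a bounded subgroup of the positive diagonal matrices, which must be trivial; so $|\cM|$, and therefore $\cM$, is finite. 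Picking any $x\in\R_{>0}^n$, the entry-wise geometric mean $x_0=\big(\prod_{M\in\cM}|M|x\big)^{1/|\cM|}$ is fixed by every $|M'|$, because $|M'|$ distributes over the product and merely permutes its factors. Taking $D=\diag(x_0)$, each $D^{-1}|M|D$ is a non-negative monomial matrix fixing $\vone$, i.e.\ with all row sums equal to $1$, hence a permutation matrix; therefore $D^{-1}MD$ is a signed permutation matrix, and a permutation matrix when $M$ is non-negative. (This is a quantitative form of classical facts on groups of non-negative matrices, cf.\ \citep{flor_groups_1969}.)

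Third I would assemble the conclusion. Non-negative monomial and $b$-monomial matrices have non-negative entries, so in those two cases the auxiliary statement yields a basis in which $\cM$ is a subgroup of the permutation matrices; in the $\pm b$-monomial case it yields a subgroup of the signed permutation matrices; the permutation and signed permutation cases need no change of basis. A diagonal change of basis realizes an isomorphism of representations, so in every case $\rho$ is isomorphic to a representation landing in the (signed) permutation matrices, and Theorem~\ref{th:main} then identifies the maximal admissible family of activations for that matrix group as the continuous (resp.\ odd continuous) functions, which proves both the displayed pairs are attained and are maximal. The main obstacle is the auxiliary statement of the second part; once finiteness of bounded monomial groups is in hand the geometric-mean averaging is elementary, so the delicate point is only to verify carefully that $M\mapsto|M|$ is multiplicative and that the geometric orbit mean is genuinely a common fixed vector --- everything else is bookkeeping and an appeal to Theorem~\ref{th:main}.
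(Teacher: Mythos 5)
Your proposal is correct and follows the same overall skeleton as the paper's proof: use Theorem~\ref{th:main} to force $\cM=\rho(G)$ into a group of (possibly signed) monomial matrices, use compactness (Theorem~\ref{th:cont-images}) to get boundedness, reduce the signed case to the non-negative one via the entry-wise absolute value homomorphism, and conjugate by a positive diagonal matrix to land in the (signed) permutation matrices, after which Theorem~\ref{th:main} supplies the maximal activation classes. The one genuine difference is your second step: where the paper simply invokes \cite{flor_groups_1969} for the fact that a bounded group of non-negative matrices is a permutation group after a positive rescaling, you prove this from scratch --- boundedness kills the positive-diagonal kernel, so the group is finite, and the entry-wise geometric mean over the orbit of a positive vector gives a common positive fixed vector $x_0$, whence $D=\diag(x_0)$ does the job; this also lets you treat the signed and unsigned cases in one stroke. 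This buys self-containedness at the cost of one point you should state more carefully: a positive monomial matrix does \emph{not} literally distribute over entry-wise products of vectors (the coefficient gets squared), so the claim ``$|M'|$ distributes over the product and merely permutes its factors'' is false as written; the fixed-vector property nevertheless holds because reindexing the product over the group by left translation produces the factor $(a'_i)^{|\cM|}$, and the $|\cM|$-th root turns it into exactly the coefficient $a'_i$ of $|M'|$, giving $|M'|x_0=x_0$. With that computation spelled out, your argument is complete; the only cosmetic omission relative to the paper is the closing remark that permutation and signed-permutation representations are in general non-isomorphic, so the two listed pairs are genuinely distinct.
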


\begin{proof}
\rev{

    At first, we want to prove that each admissible representation of $G$ is isomorphic either to a permutation representation or to a sign-permutation representation. 
    We split the proof into two parts. In the first part, $G$ is represented by non-negative monomial matrices and, in the second, $G$ is represented by arbitrary monomial matrices. Thanks to Theorem \ref{th:main}, those two cases covers all the admissible cases for families of activations containing at least one non-affine function. 
    
    By \cite{flor_groups_1969}, each bounded non-negative group of matrices is isomorphic to a permutation group of matrices by a positive scaling of basis.
    Hence, the image of a compact group in $\cM_n(T)$ with $T \subseteq \R_{>0}$ can be written as a group of permutation matrices after a positive scaling of basis. Hence, all the considered group of matrices are isomorphic to $\cP_n$, therefore reducing to the pair $(\cP_n, \cC (\R))$.}

    \rev{
    Now consider $T$ an arbitrary non-dense multiplicative subgroup of $\R$. Each monomial matrix can be written as the product $SDP$, where $S$ is a diagonal matrix containing only $\pm 1$, $D$ a positive diagonal matrix, and $P$ is a permutation matrix. Consider the map $\phi: SDP \mapsto DP$. Note that $\phi$ is a continuous group homomorphism and that its image is a compact group of non-negative matrices. Indeed, $\phi$ is just the absolute value function defined on all the elements of the matrices, hence it is continuous. Then, let $S_1D_1P_{\sigma}$ and $S_2D_2P_{\tau}$ be two monomial matrices as before, where $P_\sigma$ and $P_\tau$ are permutation matrices respectively representing permutations $\sigma$ and $\tau$. Then their composition $S_1D_1P_{\sigma}S_2D_2P_{\tau} = S_1 \sigma (S_2) D_1  P_{\sigma}D_2P_{\tau}$ where $\sigma(S_2)$ is the diagonal matrix obtained by permuting the diagonal elements through $\sigma$, i.e. $\sigma(S_2) = P_\sigma^t S_2 P_\sigma$, which commutes with $D_1$ being both diagonal matrices. This proves that $\phi$ is an homomorphism.}

    \rev{
    For what we proved at the beginning of the proof, there exists a positive scaling, represented by a positive diagonal matrix $B$, such that $BDPB^{-1} = P'$ is a permutation matrix for each $DP$ in the image of $\phi$. Note that for each $SDP$, after scaling by $B$, we obtain $BSDP B^{-1} = SBDPB^{-1} = SP'$ a signed permutation matrix. This is true because $S$ and $B$ commute being both diagonal matrices. Hence, all the considered groups of matrices are isomorphic to the group of signed permutation matrices, therefore, therefore reducing to the pair of odd continuous functions and signed permutation matrices.

    Now that we have proven that there are only two isomorphism classes of admissible representations for $G$, we want to show that there is only one representation in each class with maximal family of activation functions.  
    Let $\cM$ be the image of a positive monomial matrices representation of $G$. By Lemma \ref{lemma:centering}, each admissible group of matrices $\cM'$  isomorphic to $\cM$ commutes with $\cF(\cM') = \cF_T$ for some $T$. The maximal family $\cF_T$ is $\cC (\R)$ and it happens when $T = \langle 1 \rangle$ and $\cM'$ are permutation matrices. Note that the isomorphism between $\cM$ and some group of permutation matrices is always possible as shown at the beginning of this proof. 
    An analogous argument applies to groups of arbitrary monomial matrices.

    We conclude by noticing that in general permutation representations and signed-permutation representations are not isomorphic, e.g. the trivial representation and the sign representation of $S_n$. This proves that the two presented pairs are \emph{in general} disjoint. 
    }
\end{proof}

\rev{
Note that compactness is a required condition. Indeed, given $b > 1$, consider the following one-dimensional representation representation $\rho: \Z \rightarrow \GL_1(\R) = \R^*$ such that $\rho(n)x = b^n x$. The image $\rho (\Z)$ is not bounded and hence is not compact. This means that $\rho$ cannot be isomorphic to a permutation representation whose image is compact.}


\begin{thebibliography}{50}
\providecommand{\natexlab}[1]{#1}
\providecommand{\url}[1]{\texttt{#1}}
\expandafter\ifx\csname urlstyle\endcsname\relax
  \providecommand{\doi}[1]{doi: #1}\else
  \providecommand{\doi}{doi: \begingroup \urlstyle{rm}\Url}\fi

\bibitem[Behboodi et~al.(2022)Behboodi, Cesa, and
  Cohen]{behboodi_pac-bayesian_2022}
Arash Behboodi, Gabriele Cesa, and Taco Cohen.
\newblock A {PAC}-{Bayesian} {Generalization} {Bound} for {Equivariant}
  {Networks}.
\newblock In \emph{Advances in {Neural} {Information} {Processing} {Systems}},
  2022.

\bibitem[Bekkers et~al.(2018)Bekkers, Lafarge, Veta, Eppenhof, Pluim, and
  Duits]{bekkers_roto-translation_2018}
Erik~J. Bekkers, Maxime~W. Lafarge, Mitko Veta, Koen~AJ Eppenhof, Josien~PW
  Pluim, and Remco Duits.
\newblock Roto-{Translation} {Covariant} {Convolutional} {Networks} for
  {Medical} {Image} {Analysis}.
\newblock \emph{Roto-Translation Covariant Convolutional Networks for Medical
  Image Analysis}, 1st Conference on Medical Imaging with Deep Learning (MIDL
  2018), Amsterdam, The Netherlands, April 2018.

\bibitem[Bekkers et~al.(2023)Bekkers, Vadgama, Hesselink, van~der Linden, and
  Romero]{bekkers_fast_2023}
Erik~J. Bekkers, Sharvaree Vadgama, Rob~D. Hesselink, Putri~A. van~der Linden,
  and David~W. Romero.
\newblock Fast, {Expressive} {SE}\$(n)\$ {Equivariant} {Networks} through
  {Weight}-{Sharing} in {Position}-{Orientation} {Space}, October 2023.
\newblock URL \url{http://arxiv.org/abs/2310.02970}.
\newblock arXiv:2310.02970 [cs, math].


\bibitem[Benkart et~al.(2016)Benkart, Halverson, and
  Harman]{benkart_dimensions_2016}
Georgia Benkart, Tom Halverson, and Nate Harman.
\newblock Dimensions of irreducible modules for partition algebras and tensor
  power multiplicities for symmetric and alternating groups, May 2016.
\newblock arXiv:1605.06543 [math].

\bibitem[Bronstein et~al.(2017)Bronstein, Bruna, LeCun, Szlam, and
  Vandergheynst]{bronstein_geometric_2017}
Michael~M. Bronstein, Joan Bruna, Yann LeCun, Arthur Szlam, and Pierre
  Vandergheynst.
\newblock Geometric {Deep} {Learning}: {Going} beyond {Euclidean} data.
\newblock \emph{IEEE Signal Processing Magazine}, 34\penalty0 (4):\penalty0
  18--42, July 2017.
\newblock Conference Name: IEEE Signal Processing Magazine.

\bibitem[Bronstein et~al.(2021)Bronstein, Bruna, Cohen, and
  Veličković]{bronstein_geometric_2021}
Michael~M. Bronstein, Joan Bruna, Taco Cohen, and Petar Veličković.
\newblock Geometric {Deep} {Learning}: {Grids}, {Groups}, {Graphs},
  {Geodesics}, and {Gauges}.
\newblock \emph{arXiv:2104.13478 [cs, stat]}, May 2021.
\newblock arXiv: 2104.13478.

\bibitem[Cohen \& Welling(2014)Cohen and Welling]{cohen_learning_2014}
Taco Cohen and Max Welling.
\newblock Learning the {Irreducible} {Representations} of {Commutative} {Lie}
  {Groups}.
\newblock In \emph{Proceedings of the 31st {International} {Conference} on
  {Machine} {Learning}}, pp.\  1755--1763. PMLR, June 2014.

\bibitem[Cohen \& Welling(2016{\natexlab{a}})Cohen and
  Welling]{cohen_group_2016}
Taco Cohen and Max Welling.
\newblock Group {Equivariant} {Convolutional} {Networks}.
\newblock In \emph{Proceedings of {The} 33rd {International} {Conference} on
  {Machine} {Learning}}, pp.\  2990--2999. PMLR, June 2016{\natexlab{a}}.

\bibitem[Cohen \& Welling(2016{\natexlab{b}})Cohen and
  Welling]{cohen_steerable_2016}
Taco~S. Cohen and Max Welling.
\newblock Steerable {CNNs}.
\newblock In \emph{International {Conference} on {Learning} {Representations}},
  November 2016{\natexlab{b}}.

\bibitem[Cohen et~al.(2018)Cohen, Geiger, Köhler, and
  Welling]{cohen_spherical_2018}
Taco~S. Cohen, Mario Geiger, Jonas Köhler, and Max Welling.
\newblock Spherical {CNNs}.
\newblock In \emph{International {Conference} on {Learning} {Representations}},
  February 2018.

\bibitem[Cohen et~al.(2019)Cohen, Geiger, and Weiler]{cohen_general_2019}
Taco~S Cohen, Mario Geiger, and Maurice Weiler.
\newblock A {General} {Theory} of {Equivariant} {CNNs} on {Homogeneous}
  {Spaces}.
\newblock In \emph{Advances in {Neural} {Information} {Processing} {Systems}},
  volume~32. Curran Associates, Inc., 2019.

\bibitem[Dieleman et~al.(2015)Dieleman, Willett, and
  Dambre]{dieleman_rotation-invariant_2015}
Sander Dieleman, Kyle~W. Willett, and Joni Dambre.
\newblock Rotation-invariant convolutional neural networks for galaxy
  morphology prediction.
\newblock \emph{Monthly Notices of the Royal Astronomical Society},
  450\penalty0 (2):\penalty0 1441--1459, June 2015.

\bibitem[Dieleman et~al.(2016)Dieleman, Fauw, and
  Kavukcuoglu]{dieleman_exploiting_2016}
Sander Dieleman, Jeffrey~De Fauw, and Koray Kavukcuoglu.
\newblock Exploiting {Cyclic} {Symmetry} in {Convolutional} {Neural}
  {Networks}.
\newblock In \emph{Proceedings of {The} 33rd {International} {Conference} on
  {Machine} {Learning}}, pp.\  1889--1898. PMLR, June 2016.

\bibitem[Dym \& Gortler(2022)Dym and Gortler]{dym_low_2022}
Nadav Dym and Steven~J. Gortler.
\newblock Low {Dimensional} {Invariant} {Embeddings} for {Universal}
  {Geometric} {Learning}, May 2022.
\newblock arXiv:2205.02956 [cs, math].

\bibitem[Dym \& Maron(2020)Dym and Maron]{dym_universality_2020}
Nadav Dym and Haggai Maron.
\newblock On the {Universality} of {Rotation} {Equivariant} {Point} {Cloud}
  {Networks}.
\newblock In \emph{International {Conference} on {Learning} {Representations}},
  October 2020.

\bibitem[Elesedy(2022)]{elesedy_group_2022}
Bryn Elesedy.
\newblock Group {Symmetry} in {PAC} {Learning}.
\newblock In \emph{{ICLR} 2022 {Workshop} on {Geometrical} and {Topological}
  {Representation} {Learning}}, 2022.

\bibitem[Finkelshtein et~al.(2022)Finkelshtein, Baskin, Maron, and
  Dym]{finkelshtein_simple_2022}
Ben Finkelshtein, Chaim Baskin, Haggai Maron, and Nadav Dym.
\newblock A {Simple} and {Universal} {Rotation} {Equivariant} {Point}-{Cloud}
  {Network}.
\newblock In \emph{Proceedings of {Topological}, {Algebraic}, and {Geometric}
  {Learning} {Workshops} 2022}, pp.\  107--115. PMLR, November 2022.

\bibitem[Flor(1969)]{flor_groups_1969}
Peter Flor.
\newblock On groups of non-negative matrices.
\newblock \emph{Compositio Mathematica}, 21\penalty0 (4):\penalty0 376--382,
  1969.

\bibitem[Fulton \& Harris(2004)Fulton and Harris]{fulton_representation_2004}
William Fulton and Joe Harris.
\newblock \emph{Representation {Theory}}, volume 129 of \emph{Graduate {Texts}
  in {Mathematics}}.
\newblock Springer, New York, NY, 2004.

\bibitem[Gasteiger et~al.(2022)Gasteiger, Becker, and
  Günnemann]{gasteiger_gemnet_2022}
Johannes Gasteiger, Florian Becker, and Stephan Günnemann.
\newblock {GemNet}: {Universal} {Directional} {Graph} {Neural} {Networks} for
  {Molecules} 2022.
\newblock \emph{35th Conference on Neural Information Processing Systems}.

\bibitem[Geerts(2020)]{geerts_expressive_2020}
Floris Geerts.
\newblock On the expressive power of linear algebra on graphs, February 2020.
\newblock arXiv:1812.04379 [cs].

\bibitem[Geerts \& Reutter(2022)Geerts and Reutter]{geerts_expressiveness_2022}
Floris Geerts and Juan~L Reutter.
\newblock {Expressiveness} {and} {Approximation} {Properties} {of} {Graph}
  {Neural} {Networks}.
\newblock pp.\ ~43, 2022.

\bibitem[Hoogeboom et~al.(2018)Hoogeboom, Peters, Cohen, and
  Welling]{hoogeboom_hexaconv_2018}
Emiel Hoogeboom, Jorn W.~T. Peters, Taco~S. Cohen, and Max Welling.
\newblock {HexaConv}.
\newblock In \emph{International {Conference} on {Learning} {Representations}},
  February 2018.

\bibitem[Hy et~al.(2018)Hy, Trivedi, Pan, Anderson, and
  Kondor]{hy_predicting_2018}
Truong Hy, Shubhendu Trivedi, Horace Pan, Brandon Anderson, and Risi Kondor.
\newblock Predicting molecular properties with covariant compositional
  networks.
\newblock \emph{The Journal of Chemical Physics}, 148:\penalty0 241745, June
  2018.

\bibitem[Joshi et~al.(2023)Joshi, Bodnar, Mathis, Cohen, and
  Lio]{joshi_expressive_2023}
Chaitanya~K. Joshi, Cristian Bodnar, Simon~V. Mathis, Taco Cohen, and Pietro
  Lio.
\newblock On the {Expressive} {Power} of {Geometric} {Graph} {Neural}
  {Networks}.
\newblock \emph{International Conference of Learning Representations}, 2023.

\bibitem[Kakarala(1992)]{kakarala_triple_1992}
Ramakrishna Kakarala.
\newblock \emph{Triple correlation on groups}.
\newblock phd, University of California at Irvine, USA, 1992.
\newblock UMI Order No. GAX93-04094.

\bibitem[Kondor(2008)]{kondor_group_2008}
Imre~Risi Kondor.
\newblock \emph{Group theoretical methods in machine learning}.
\newblock Columbia University, 2008.

\bibitem[Kondor(2018)]{kondor_n-body_2018}
Risi Kondor.
\newblock N-body {Networks}: a {Covariant} {Hierarchical} {Neural} {Network}
  {Architecture} for {Learning} {Atomic} {Potentials}, March 2018.
\newblock arXiv:1803.01588 [cs].

\bibitem[Kondor \& Trivedi(2018)Kondor and Trivedi]{kondor_generalization_2018}
Risi Kondor and Shubhendu Trivedi.
\newblock On the {Generalization} of {Equivariance} and {Convolution} in
  {Neural} {Networks} to the {Action} of {Compact} {Groups}.
\newblock In \emph{Proceedings of the 35th {International} {Conference} on
  {Machine} {Learning}}, pp.\  2747--2755. PMLR, July 2018.

\bibitem[Kondor et~al.(2018)Kondor, Hy, Pan, Anderson, and
  Trivedi]{kondor_covariant_2018}
Risi Kondor, Truong~Son Hy, Horace Pan, Brandon~M. Anderson, and Shubhendu
  Trivedi.
\newblock Covariant {Compositional} {Networks} {For} {Learning} {Graphs}.
\newblock \emph{International Conference of Learning Representations}, February
  2018.

\bibitem[Lang \& Weiler(2021)Lang and Weiler]{lang_wigner-eckart_2021}
Leon Lang and Maurice Weiler.
\newblock A {Wigner}-{Eckart} {Theorem} for {Group} {Equivariant} {Convolution}
  {Kernels}.
\newblock In \emph{International {Conference} on {Learning} {Representations}},
  2021.

\bibitem[Marcos et~al.(2017)Marcos, Volpi, Komodakis, and
  Tuia]{marcos_rotation_2017}
Diego Marcos, Michele Volpi, Nikos Komodakis, and Devis Tuia.
\newblock Rotation {Equivariant} {Vector} {Field} {Networks}.
\newblock In \emph{2017 {IEEE} {International} {Conference} on {Computer}
  {Vision} ({ICCV})}, pp.\  5058--5067, Venice, October 2017. IEEE.

\bibitem[Maron et~al.(2018)Maron, Ben-Hamu, Shamir, and
  Lipman]{maron_invariant_2018}
Haggai Maron, Heli Ben-Hamu, Nadav Shamir, and Yaron Lipman.
\newblock Invariant and {Equivariant} {Graph} {Networks}.
\newblock In \emph{International {Conference} on {Learning} {Representations}},
  September 2018.

\bibitem[Maron et~al.(2019{\natexlab{a}})Maron, Ben-Hamu, Serviansky, and
  Lipman]{maron_provably_2019}
Haggai Maron, Heli Ben-Hamu, Hadar Serviansky, and Yaron Lipman.
\newblock Provably {Powerful} {Graph} {Networks}.
\newblock \emph{International Conference of Learning Representations},
  2019{\natexlab{a}}.

\bibitem[Maron et~al.(2019{\natexlab{b}})Maron, Fetaya, Segol, and
  Lipman]{maron_universality_2019}
Haggai Maron, Ethan Fetaya, Nimrod Segol, and Yaron Lipman.
\newblock On the {Universality} of {Invariant} {Networks}.
\newblock In \emph{Proceedings of the 36th {International} {Conference} on
  {Machine} {Learning}}, pp.\  4363--4371. PMLR, May 2019{\natexlab{b}}.

\bibitem[Maron et~al.(2020)Maron, Litany, Chechik, and
  Fetaya]{maron_learning_2020}
Haggai Maron, Or~Litany, Gal Chechik, and Ethan Fetaya.
\newblock On {Learning} {Sets} of {Symmetric} {Elements}.
\newblock In \emph{Proceedings of the 37th {International} {Conference} on
  {Machine} {Learning}}, pp.\  6734--6744. PMLR, November 2020.

\bibitem[Pan \& Kondor(2022)Pan and Kondor]{pan_permutation_2022}
Horace Pan and Risi Kondor.
\newblock Permutation {Equivariant} {Layers} for {Higher} {Order}
  {Interactions}.
\newblock In \emph{Proceedings of {The} 25th {International} {Conference} on
  {Artificial} {Intelligence} and {Statistics}}, pp.\  5987--6001. PMLR, May
  2022.

\bibitem[Puny et~al.(2023)Puny, Lim, Kiani, Maron, and
  Lipman]{puny_equivariant_2023}
Omri Puny, Derek Lim, Bobak~T. Kiani, Haggai Maron, and Yaron Lipman.
\newblock Equivariant {Polynomials} for {Graph} {Neural} {Networks}, 2023.
\newblock In \emph{Proceedings of the 40th {International} {Conference} on
  {Machine} {Learning}}

\bibitem[Qi et~al.(2017)Qi, {Su, Hao}, {Mo, Kaichun}, and {Guibas, Leonidas
  J.}]{qi_pointnet_2017}
Charles~R. Qi, {Su, Hao}, {Mo, Kaichun}, and {Guibas, Leonidas J.}
\newblock {PointNet}: {Deep} {Learning} on {Point} {Sets} for {3D}
  {Classification} and {Segmentation}.
\newblock In \emph{2017 {IEEE} {Conference} on {Computer} {Vision} and
  {Pattern} {Recognition} ({CVPR})}, pp.\  77--85, Honolulu, HI, July 2017.
  IEEE.

\bibitem[Ravanbakhsh(2020)]{ravanbakhsh_universal_2020}
Siamak Ravanbakhsh.
\newblock Universal {Equivariant} {Multilayer} {Perceptrons}.
\newblock In \emph{Proceedings of the 37th {International} {Conference} on
  {Machine} {Learning}}, pp.\  7996--8006. PMLR, November 2020.

\bibitem[Sabour et~al.(2017)Sabour, Frosst, and Hinton]{sabour_dynamic_2017}
Sara Sabour, Nicholas Frosst, and Geoffrey~E Hinton.
\newblock Dynamic {Routing} {Between} {Capsules}.
\newblock \emph{Advances in Neural Information Processing Systems}, 2017.

\bibitem[Sagan(2001)]{sagan_symmetric_2001}
Bruce~E. Sagan.
\newblock \emph{The {Symmetric} {Group}}, volume 203 of \emph{Graduate {Texts}
  in {Mathematics}}.
\newblock Springer New York, New York, NY, 2001.

\bibitem[Sannai et~al.(2021)Sannai, Imaizumi, and Kawano]{sannai_improved_2021}
Akiyoshi Sannai, Masaaki Imaizumi, and Makoto Kawano.
\newblock Improved generalization bounds of group invariant / equivariant deep
  networks via quotient feature spaces.
\newblock In \emph{Proceedings of the {Thirty}-{Seventh} {Conference} on
  {Uncertainty} in {Artificial} {Intelligence}}, pp.\  771--780. PMLR, December
  2021.

\bibitem[Sifre \& Mallat(2013)Sifre and Mallat]{sifre_rotation_2013}
Laurent Sifre and Stéphane Mallat.
\newblock Rotation, {Scaling} and {Deformation} {Invariant} {Scattering} for
  {Texture} {Discrimination}.
\newblock In \emph{2013 {IEEE} {Conference} on {Computer} {Vision} and
  {Pattern} {Recognition}}, pp.\  1233--1240, June 2013.

\bibitem[Thomas et~al.(2018)Thomas, Smidt, Kearnes, Yang, Li, Kohlhoff, and
  Riley]{thomas_tensor_2018}
Nathaniel Thomas, Tess Smidt, Steven Kearnes, Lusann Yang, Li~Li, Kai Kohlhoff,
  and Patrick Riley.
\newblock Tensor field networks: {Rotation}- and translation-equivariant neural
  networks for {3D} point clouds, May 2018.

\bibitem[Weiler \& Cesa(2019)Weiler and Cesa]{weiler_general_2019}
Maurice Weiler and Gabriele Cesa.
\newblock General {E}(2) - {Equivariant} {Steerable} {CNNs}.
\newblock \emph{Advances in Neural Information Processing Systems}, 2019.

\bibitem[Weiler et~al.(2018)Weiler, Geiger, Welling, Boomsma, and
  Cohen]{weiler_3d_2018}
Maurice Weiler, Mario Geiger, Max Welling, Wouter Boomsma, and Taco Cohen.
\newblock {3D} steerable {CNNs}: learning rotationally equivariant features in
  volumetric data.
\newblock In \emph{Proceedings of the 32nd {International} {Conference} on
  {Neural} {Information} {Processing} {Systems}}, {NIPS}'18, pp.\
  10402--10413, Red Hook, NY, USA, 2018. Curran Associates Inc.

\bibitem[Wood \& Shawe-Taylor(1996)Wood and
  Shawe-Taylor]{wood_representation_1996}
Jeffrey Wood and John Shawe-Taylor.
\newblock Representation theory and invariant neural networks.
\newblock \emph{Discrete Applied Mathematics}, 69\penalty0 (1-2):\penalty0
  33--60, August 1996.

\bibitem[Worrall et~al.(2017)Worrall, Garbin, Turmukhambetov, and
  Brostow]{worrall_harmonic_2017}
Daniel~E. Worrall, Stephan~J. Garbin, Daniyar Turmukhambetov, and Gabriel~J.
  Brostow.
\newblock Harmonic {Networks}: {Deep} {Translation} and {Rotation}
  {Equivariance}.
\newblock In \emph{2017 {IEEE} {Conference} on {Computer} {Vision} and
  {Pattern} {Recognition} ({CVPR})}, pp.\  7168--7177, July 2017.

\bibitem[Yarotsky(2018)]{yarotsky_universal_2018}
Dmitry Yarotsky.
\newblock Universal approximations of invariant maps by neural networks, April
  2018.
\newblock arXiv:1804.10306 [cs].

\bibitem[Zaheer et~al.(2017)Zaheer, Kottur, Ravanbakhsh, Poczos, Salakhutdinov,
  and Smola]{zaheer_deep_2017}
Manzil Zaheer, Satwik Kottur, Siamak Ravanbakhsh, Barnabas Poczos, Russ~R
  Salakhutdinov, and Alexander~J Smola.
\newblock Deep {Sets}.
\newblock In \emph{Advances in {Neural} {Information} {Processing} {Systems}},
  volume~30. Curran Associates, Inc., 2017.

\bibitem[Zhou(2020)]{zhou_universality_2020}
Ding-Xuan Zhou.
\newblock Universality of deep convolutional neural networks.
\newblock \emph{Applied and Computational Harmonic Analysis}, 48\penalty0
  (2):\penalty0 787--794, March 2020.

\bibitem[Zweig \& Bruna(2021)Zweig and Bruna]{zweig_functional_2021}
Aaron Zweig and Joan Bruna.
\newblock A {Functional} {Perspective} on {Learning} {Symmetric} {Functions}
  with {Neural} {Networks}.
\newblock In \emph{Proceedings of the 38th {International} {Conference} on
  {Machine} {Learning}}, pp.\  13023--13032. PMLR, July 2021.

\end{thebibliography}
\end{document}